\newcommand{\reals}{{\mathbb{R}}}
\newcommand{\integers}{{\mathbb{Z}}}
\newcommand{\naturals}{{\mathbb{N}}}
\newcommand{\eexp}{{\rm e}}
\newcommand{\prob}[1]{ \mathbb{P}\left\{ #1 \right\} }
\newcommand{\expect}[1]{\mathbb{E}\left[ #1 \right]}
\newcommand{\expectLm}[1]{\mathbb{E}_{\lambda,m}\left[ #1 \right]}
\newcommand{\expectLone}[1]{\mathbb{E}_{\lambda,1}\left[ #1 \right]}
\newcommand{\norm}[1]{\left\|{#1} \right\|}
\newcommand{\var}{\mathsf{var}}
\newcommand{\Cov}{\text{Cov}}
\renewcommand{\tilde}{\widetilde}
\renewcommand{\hat}{\widehat}
\renewcommand{\check}{\widecheck}
\newtheorem{example}{Example}
\newtheorem{proposition}{Proposition}
\newtheorem{remark}{Remark}
\newtheorem{corollary}{Corollary}
\newtheorem{lemma}{Lemma}
\newtheorem{conjecture}{Conjecture}
\newcommand{\post}[2]{\begin{center} \includegraphics[width=#2cm]{#1} \end{center} }
\renewcommand{\tilde}{\widetilde}
\renewcommand{\hat}{\widehat}
\newcommand{\calL}{{\cal L}}
\newcommand{\calO}{{\cal O}}
\newcommand{\calF}{{\cal F}}
\newcommand{\sel}{{\sf sel}}
\newcommand{\Ber}{{\sf Ber}}
\newcommand{\BA}{Barab\'{a}si-Albert }
\newcommand{\ML}{\mbox{\footnotesize \rm ML}}
\newcommand{\MAP}{\mbox{\footnotesize \rm MAP}}
\begin{document}
\pagestyle{plain}
\title{Community Recovery in a Preferential Attachment Graph}

\author{%
\IEEEauthorblockN{Bruce Hajek and Suryanarayana Sankagiri}
\IEEEauthorblockA{Department of Electrical and Computer Engineering \\
and the Coordinated Science Laboratory \\
University of Illinois\\
Email: \{b-hajek,ss19\}@illinois.edu}
}

\maketitle

\begin{abstract}
A message passing algorithm is derived for recovering communities within a graph generated by a variation of the Barab\'{a}si-Albert preferential attachment model.  The estimator is assumed to know the arrival times, or order of attachment, of the vertices.   The derivation of the algorithm is based on belief propagation under an independence assumption. Two precursors to the  message passing algorithm are analyzed: the first is a degree thresholding (DT) algorithm and the second is an algorithm based on the arrival times of the children (C) of a given vertex,  where the children of a given vertex are the vertices that attached to it.  Comparison of the performance of the algorithms shows it is beneficial to know the arrival times, not just the number, of the children.   The probability of correct classification of a vertex is asymptotically determined by the fraction of vertices arriving before it.   Two extensions of Algorithm C are given: the first is based on joint likelihood of the children of a fixed set of vertices; it can sometimes be used to seed the message passing algorithm.   The second is the message passing algorithm.  Simulation results are given.\footnote{This paper was presented in part at the {\em 2018
IEEE International Symposium on Information Theory}}
\end{abstract}

\noindent
{\bf Index terms:} preferential attachment graph, message passing algorithm, graphical inference, clustering, community recovery


\section{Introduction}
Community detection, a form of unsupervised learning, is the task of identifying dense subgraphs within a large graph.  For
surveys of recent work, see \cite{Fortunato10,Moore17,Abbe17}.  
Community detection is often studied in the context of a generative random graph model, of which the stochastic block model
is the most popular. The model specifies how the labels of the vertices are chosen, and how the edges are placed, given the
labels. The task of community detection then becomes an inference problem; the vertex labels are the parameters to be inferred,
and the graph structure is the data. The advantage of a generative model is that it helps in the design of algorithms for community detection.

The stochastic block model fails to capture two basic properties of networks that are seen in practice. Firstly, it does not model
networks that grow over time, such as citation networks or social networks. Secondly, it does not model graphs with heavy-tailed
degree distributions, such as the political blog network~\cite{Adamic05}.  
The Barab\'{a}si-Albert model~\cite{barabasi1999emergence}, a.k.a.\ the preferential attachment model, is a popular
random graph model that addresses both the above shortcomings.
We use the variation of the model introduced by Jordan \cite{Jordan13} that includes community structure.
The paper \cite{Jordan13} considers labels coming  from a metric space,  though a section of  the paper focuses on the case the label
space is finite.  We consider only a finite label set--the model is described in Section \ref{sec:background}.
In recent years there has been substantial study of a variation of preferential attachment model introduced in
\cite{BianconiBarabasi01} such that different vertices can have different {\em fitness}.   For example, in a citation
network, some papers attract more
citations than others published at the same time.   There has also been work done on recovering clusters from
graphs with different fitness (see Chapter 9 of \cite{Barabasi16} and references therein).   Our work departs from previous work
by considering community detection for the model in which the affinity for attachment between an arriving
vertex and an existing vertex depends on the labels of both vertices (i.e. for the model of  \cite{Jordan13}).

 The algorithm we focus on is message passing. Algorithms that are precursors to message passing, in which the membership of
 a vertex is estimated from its radius one neighborhood in the graph, are also discussed.   The algorithm is closest in spirit to that in the papers
 \cite{Montanari:15OneComm,HajekWuXu_one_beyond_spectral15}. 
 Message passing algorithms are \emph{local} algorithms; vertices in the graph pass messages to each of their neighbors, in an iterative fashion. The messages in every iteration are computed on the basis of messages in the previous iteration.
The degree growth rates for vertices in different communities are different (unless there happens to be a tie) so
the neighborhood of a vertex conveys some information about its label. A quantitative estimate of this information is the belief (a posteriori probability) of belonging to a particular community. A much better estimate of a vertex's label could potentially be obtained if the labels of all other vertices were known. Since this information is not known,  the idea of message passing algorithms is to have
 vertices simultaneously update their beliefs.
 
 The main similarity between the preferential attachment model with communities and the stochastic block model is that
 both produce locally tree-like graphs.  However, the probabilities of edges existing are more complicated for preferential
 attachment models.   To proceed to develop the message passing algorithm, we invoke an independence assumption
 that is suggested by an analysis of the joint degree evolution of multiple vertices.    This approach is tantamount to
 constructing a belief propagation algorithm for a graphical model that captures the asymptotic distribution  of
 neighborhood structure for the preferential attachment graphs. 
 
 \paragraph{Organization of the paper}  Section  \ref{sec:preliminaries} lays the groundwork for the problem formulation
 and analysis of the community detection problem.  It begins by presenting  a model for a graph with preferential attachment
 and community structure,  following \cite{Jordan13}.   The section then presents some key properties of the graphical
 model in the limit of a large number of vertices.  In particular,  the empirical distribution of degree, and the evolution of
 degree of a finite number of vertices, are examined.  Stochastic coupling and total variation distance are used extensively.   In addition, it is shown that the growth rate parameter for a given fixed
 vertex can be consistently estimated as the size of the graph converges to infinity.
 Section  \ref{sec:recovery_from_children} formulates the community recovery problem as a Bayesian hypothesis testing problem,
 and focuses on two precursors to the message passing algorithm.   The first, Algorithm C, estimates the community membership of a vertex
based on the children of the vertex (i.e. vertices that attached to the vertex).   The second, Algorithm DT, estimates the
community membership of a vertex based on the number of children.  Section \ref{sec:Z_inference} investigates an
asymptotically equivalent recovery problem, based on a continuous-time random process $Z$ that approximates the
evolution of degree of a vertex in a large graph.  A key conclusion of that section is that, for the purpose
of estimating the community membership  of a single vertex, knowing the neighborhood of the vertex in the graph is
significantly more informative than knowing the degree of the vertex.   Section \ref{sec:perf_scaling} presents our main results
about how the performance of the recovery Algorithms C and DT scale in the large graph limit.  
Section \ref{sec:joint_estimation} presents an extension of Algorithm C whereby the labels of a fixed small set
of vertices are jointly estimated based on the likelihood of their joint children sets.   This algorithm has exponential
complexity in the number of labels estimated, but can be used to seed the message passing algorithm.  Since the vertices
that arrive early have large degree, it can greatly help to correctly estimate the labels of a small number of such vertices.
The message passing algorithm is presented  in Section \ref{sec:message_passing}.  
Simulation results are given for a variety of examples in Section  \ref{sec:simulations}.
Various  proofs, and the derivation of the message passing
algorithm,  can be found in the appendices.
 
\paragraph{Related work}
  A different extension of preferential attachment to include communities is given in
 \cite{AntunovicMosselRacz16}.  In  \cite{AntunovicMosselRacz16}, the community membership of a new
vertex is determined based on the membership of the vertices to which the new vertex is attached.
The paper focuses on the question of whether large communities coexist as the number of vertices converges
to infinity.  However, the dynamics of the graph itself is the same as in the original Barab\'{a}si-Albert model.
In contrast, our model assumes that community membership of a vertex is determined randomly
before the vertex arrives, and the distribution of attachments made depends on the community membership.
It might be interesting to consider a combination of the two models, in which some vertices determine
community membership exogenously, and others determine membership based on the memberships
of their neighbors.

 Another model of graphs with community structure and possibly heavy-tailed degree distribution is the
 degree corrected stochastic block model -- see \cite{ChenLiXu15} for recent work and references.

 There is an extensive literature on degree distributions and related properties of preferential attachment graphs,
and an even larger literature on the closely related theory of Polya urn schemes.  However,  the
addition of planted community structure breaks the elegant exact analysis methods, such as the matching equivalence
formulated in \cite{bollobas2001degree}, or methods such as in \cite{Janson06} or \cite{PekozRossRollin14}.
Still, the convergence of the empirical distribution of the induced labels of half edges (see Proposition  \ref{prop:empirical_m1} below)
makes the analysis tractable without the exact formulas.   A sequence of models evolved from preferential attachment
with fitness \cite{BianconiBarabasi01},  towards the case examined in \cite{Jordan13}, such that the attachment probability
is weighted by a factor depending on the labels of both the new vertex and a potential target vertex.  The model of
\cite{FlaxmanFriezeVera06} is a special case, for which attachment is possible if the labels are sufficiently close.
See \cite{Jordan13,FlaxmanFriezeVera06,Barabasi16} for additional background literature.

\section{Preliminaries and some asymptotics}   \label{sec:preliminaries}
 
\subsection{Barab\'{a}si - Albert preferential attachment model with community structure}  \label{sec:background}

The model consists of  a sequence of directed graphs, $(G_t=(V_t, E_t): t\geq  t_o)$ and
vertex labels $(\ell_t : t\geq 1)$  with distribution determined by the following parameters:\footnote{The
model is the same as the finite metric space case of \cite{Jordan13} except for differences in notation.
$\alpha, S, X, \mu, \nu, Y, \phi$ in \cite{Jordan13} are $\beta^T, [r],  \ell, \rho, \eta, C, 2\theta$ here.  Also,
\cite{Jordan13} denotes the initial graph as $G_0$ while we denote it by $G_{t_o}$, we assume it has
$mt_0$ edges, and we  suppose the random evolution begins with the addition of vertex $t_o+1.$}
\begin{itemize}
\item $m\geq 1$ :  out degree of each added vertex
\item $r\geq 1$:   number of possible labels; labels are selected from $[r]\triangleq \{1,\ldots , r\}$
\item  $\rho = (\rho_1, \ldots , \rho_{r}) $:  a priori label probability distribution
\item $\pmb\beta \in \reals^{r\times r}$:  matrix of strictly positive affinities for vertices of different labels; $\beta_{uv}$ is the affinity of
a new vertex with label $u$ for attachment to a vertex of label $v.$
\item $t_o \geq 1$: initial time
\item  $G_{t_o}= (V_{t_o}, E_{t_o})$: initial directed graph with $V_{t_o}=[t_o]$ and $mt_o$ directed edges
\item $(\ell_t : t \in [t_o]) \in [r]^{t_o}$: labels assigned to vertices in $G_{t_o}.$
\end{itemize}
For each $t\geq t_o$, $G_t$  has $t$ vertices given by $V_t=[t]$ and $mt$ edges.
The graphs can contain parallel edges.  No self loops are added during the evolution, so if $G_{t_o}$ has no self loops,
none of the graphs will have self loops.  Of course, by ignoring the orientation of edges, we  could obtain undirected graphs.

Given the labeled graph $G_t,$  the graph $G_{t+1}$ is constructed as follows.   First vertex $t+1$ is
added and its label $\ell_{t+1}$ is randomly selected from $[r]$ using distribution $\rho,$  independently of $G_t.$
Then $m$ outgoing edges are attached to the new vertex, and the head ends of those edges are selected from
among the vertices in $V_t=[t]$ using sampling with replacement, and probability distribution given by preferential
attachment, weighted based on labels according to the affinity matrix.

The probabilities are calculated as follows.   Note that $E_t$ has $m t$ edges, and thus $2mt$ {\em half edges}, where
we view each edge as  the union of two half edges.  For any edge, its two half edges are each incident to a vertex;
 the vertices the two half edges are incident to are the two vertices the edge is incident to.
Suppose each half edge inherits the label from the vertex it is incident to.
 If  $\ell_{t+1}=u$, meaning the new vertex has label $u,$  and if one of the existing half edges has label $v,$  then
the half edge is assigned weight $\beta_{uv}$ for the purpose of adding edges outgoing from vertex $t+1.$     For each one of the new edges
outgoing from vertex $t+1,$ an existing half edge is chosen at random from among the $2mt$ possibilities, with probabilities
proportional to such weights.    The selection is done simultaneously for all $m$ of the new edges, or equivalently, sampling
with replacement is  used.   Then the vertices of the respective selected half edges become the head ends of the  $m$ new edges.

\subsection{Empirical degree distribution for large $T$} \label{sec:emp_degree_dist}

For a vertex in $G_t$, where $t\geq t_o,$  the distribution of the number of
edges incident on the vertex from vertex $t+1$ depends on the label of the vertex,
the degree of the vertex,  and the labels on all the half edges incident to the existing vertices in $G_t.$
The empirical distribution of labels of half edges in $G_t$ converges almost surely as $t \to \infty,$
as explained next.   Let $C_t=(C_{t,u}: u \in [r] )$  for $t\geq t_o$,  where $C_{t,u}$ denotes
the number of half edges with label $u$ in $G_t.$
It is easy to see that $(C_t: t\geq t_o)$ is a discrete-time Markov process, with initial state determined
by the labels of vertices in $G_{t_o}.$
 Let $\eta_t = \frac{C_t}{2mt}.$
Thus, $\eta_{t, u}$ is the fraction of half edges that have label $u$  at time $t.$
Let $h=(h_1, \ldots , h_r)$ where
\begin{align}   \label{eq:h_def_general}
h_v(\eta)  =   \rho_v   +   \sum_u   \rho_u \left(  \frac{\beta_{uv}\eta_{v}}{\sum_{v'} \beta_{uv'}  \eta_{v'} } \right)  - 2\eta_v .
\end{align}
The following is proved in \cite{Jordan13}, by appealing to the theory of stochastic approximation.
For convenience we give essentially the same proof, using our notation, in Appendix \ref{sec:global_convergence}.
\begin{proposition}  \cite{Jordan13} \label{prop:gobal_convergence}
(Limiting fractions of half edges with given labels)
$\eta_t \to \eta^*$ a.s. as $t\to \infty,$  where $\eta^*$ is the unique probability vector such that $h(\eta^*)=0.$
\end{proposition}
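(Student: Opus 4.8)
The plan is to recognize the recursion satisfied by $\eta_t$ as a Robbins--Monro stochastic approximation whose mean field is (a scalar multiple of) the vector field $h$, and then to invoke the ODE method to obtain almost sure convergence.

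First I would compute the one-step conditional drift of $C_t$. When vertex $t+1$ arrives it draws label $u$ with probability $\rho_u$; it then contributes $m$ tail half edges of label $u$ and $m$ head half edges, each of which, by sampling with replacement against the current half-edge labels weighted by $\pmb\beta$, independently carries label $v$ with probability $\beta_{uv}\eta_{t,v}/\sum_{v'}\beta_{uv'}\eta_{t,v'}$. Hence
\[
\expect{C_{t+1,v}-C_{t,v}\mid C_t}= m\rho_v + m\sum_u \rho_u\,\frac{\beta_{uv}\eta_{t,v}}{\sum_{v'}\beta_{uv'}\eta_{t,v'}}.
\]
Passing from $C_t$ to $\eta_t=C_t/(2mt)$, the $-2\eta_{t,v}$ term of $h$ emerges from the change of normalization $t\mapsto t+1$, and the drift collapses to
\[
\expect{\eta_{t+1}-\eta_t\mid C_t}=\frac{1}{2(t+1)}\,h(\eta_t).
\]
Writing $M_{t+1}$ for the martingale difference $\eta_{t+1}-\eta_t-\expect{\eta_{t+1}-\eta_t\mid C_t}$, this is the stochastic approximation
\[
\eta_{t+1}=\eta_t+\frac{1}{t+1}\Big(\tfrac12 h(\eta_t)+M_{t+1}\Big).
\]

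Next I would verify the standard hypotheses of the ODE method. The gains $\gamma_{t+1}=1/(t+1)$ satisfy $\sum_t\gamma_t=\infty$ and $\sum_t\gamma_t^2<\infty$; each increment of $C_t$ is bounded by $2m$, so $M_{t+1}$ is a bounded, hence square-integrable, martingale difference; and the iterates live in the compact simplex $\Delta_r=\{x\in\reals^r_{\ge0}:\sum_v x_v=1\}$, so no separate stability estimate is needed. Standard results (Bena\"{\i}m / Kushner--Clark) then give that, almost surely, the limit set of $(\eta_t)$ is a compact connected internally chain-transitive set of the mean ODE $\dot\eta=\tfrac12 h(\eta)$, whose equilibria are exactly the zeros of $h$. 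Existence of a zero is immediate: the map $F(\eta):=\eta+\tfrac12 h(\eta)=\tfrac12\big(\rho+\sum_u\rho_u p_{u\cdot}(\eta)\big)$, with $p_{uv}(\eta)=\beta_{uv}\eta_v/\sum_{v'}\beta_{uv'}\eta_{v'}$, sends $\Delta_r$ continuously into itself ($F_v\ge0$ and $\sum_v F_v=1$), so Brouwer's fixed point theorem yields $\eta^*$ with $h(\eta^*)=0$, and the bound $F_v\ge\tfrac12\rho_v$ places $\eta^*$ in the interior when $\rho$ has full support.

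The crux, and what I expect to be the main obstacle, is uniqueness of $\eta^*$ together with global asymptotic stability, since this is what reduces every internally chain-transitive set to the single point $\{\eta^*\}$ and thereby upgrades the ODE method to $\eta_t\to\eta^*$ a.s. The most promising route is to show $F$ is a strict contraction in a suitable metric: each projective-linear map $\eta\mapsto p_{u\cdot}(\eta)$ is non-expansive in the Hilbert projective metric and strictly contracting on subsets bounded away from $\partial\Delta_r$, and because $F_v\ge\tfrac12\rho_v>0$ forces $F$ into such a compact interior region, the convex-combination structure together with the fixed term $\tfrac12\rho$ should make $F$ genuinely contractive; this simultaneously gives uniqueness and furnishes $V(\eta)=d(\eta,\eta^*)$ as a Lyapunov function for the flow. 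If the contraction estimate proves too delicate for a general strictly positive $\pmb\beta$, I would instead seek a Lyapunov function directly for $\dot\eta=\tfrac12 h(\eta)$, exploiting the log-sum structure of the terms $\log\big(\sum_{v'}\beta_{uv'}\eta_{v'}\big)$.
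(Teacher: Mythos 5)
Your reduction is the paper's own proof, step for step: the drift computation for $C_t$, the identity $\eta_{t+1}-\eta_t=\frac{C_{t+1}-C_t-2m\eta_t}{2m(t+1)}$, the resulting stochastic-approximation form $\eta_{t+1}=\eta_t+\frac{1}{2(t+1)}\left[h(\eta_t)+ \mbox{noise}\right]$ with bounded martingale-difference noise and Lipschitz $h$ on the simplex, and the appeal to the ODE method (the paper cites Borkar, Chapter 2, Theorem 2; your Bena\"{\i}m/Kushner--Clark invocation is the same tool). The Brouwer fixed-point argument for existence of a zero of $h$ is a harmless extra; the time rescaling between $\dot\eta=h(\eta)$ and $\dot\eta=\tfrac12 h(\eta)$ is immaterial.

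The genuine gap is the step you yourself single out as the crux: uniqueness of $\eta^*$ and global asymptotic stability of the flow on the simplex, which is what collapses every internally chain-transitive set to the singleton $\{\eta^*\}$. The paper does not reprove this; it leans explicitly on the Lyapunov function constructed in \cite{Jordan13}. Your primary route for filling it does not work as stated: the map $\eta\mapsto p_{u\cdot}(\eta)$ is a positive diagonal scaling followed by normalization, and such maps are exact \emph{isometries} of the Hilbert projective metric---never strictly contracting, on interior sets or otherwise (Birkhoff's theorem gives a contraction coefficient less than one only for matrices with all entries positive, i.e., finite projective diameter of the image, which a diagonal matrix never has). Moreover, forming the mixture $\sum_u\rho_u p_{u\cdot}(\eta)$ is not non-expansive in this metric in general: the normalizing factors $\sum_{v'}\beta_{uv'}\eta_{v'}$ shift the coordinate-ratio intervals differently for different $u$, and the natural estimate yields only $d\bigl(\sum_u\rho_u p_{u\cdot}(x),\sum_u\rho_u p_{u\cdot}(y)\bigr)\le 2\,d(x,y)$, which permits expansion. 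So the only contracting ingredient in $F=\tfrac12\rho+\tfrac12\sum_u\rho_u p_{u\cdot}$ is the translation by $\tfrac12\rho$, and you give no quantitative estimate showing it overcomes the possible expansion from the mixture. Your fallback---``seek a Lyapunov function directly''---is exactly what is required and exactly what \cite{Jordan13} supplies, but as written it is an intention, not an argument. Without either citing that Lyapunov function or completing a genuine contraction estimate, your proof establishes only that $\eta_t$ converges a.s.\ to some chain-transitive set of the ODE; it proves neither the convergence to a single point nor the uniqueness of $\eta^*$, both of which are part of the proposition.
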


A second limit result we restate from \cite{Jordan13} concerns the empirical degree distribution
for the vertices with a given label.    For  $v\in [r]$ and integers $n\geq m$ and $T,$ let:
\begin{itemize}
\item  $H^v(T)$ denote the number of vertices with label $v$ in $G_T$
\item  $N_n^v(T)$ denote the number of vertices with label $v$ and with degree $n$ in $G_T$
\item   $P_n^v(T)  = \frac{N_n^v(T)}{H^v(T)}$ denote the fraction of vertices  with label $v$ that have degree $n$ in $G_T.$
\end{itemize}
Let
\begin{align*} 
\theta_{u,v}^* =    \frac{\beta_{uv}}{2\sum_{v'} \beta_{uv'}\eta^*_{v'}}    ~~\mbox{for}~u,v \in [r],
 \end{align*}
 and
 \begin{align}  \label{theta_v_def}
\theta^*_v = \sum_u \rho_u \theta_{u,v}^*  ~~\mbox{for}~v \in [r].
\end{align}
\begin{proposition}   \label{prop:empirical_m1} \cite{Jordan13}
(Limiting empirical distribution of degree for a given label) 
Let $n\geq m$ and $v\in [r]$  be fixed. 
Then $\lim_{T \to \infty}  P_n^v(T)   = p_n(\theta^*_v,m)$ almost surely,  where
\begin{align}
p_n(\theta, m)  & 
= \frac{  \Gamma\left(\frac 1 \theta  + m  \right)  \Gamma(n)  }    {\theta \Gamma(m) \Gamma\left( n+ \frac 1 \theta + 1   \right)  } \nonumber   \\
&   \asymp\left[ \frac {\Gamma\left(\frac 1 \theta  + m  \right)} {\theta \Gamma(m) }  \right]  
\frac 1 { n^{\frac 1 \theta +1 }  }      \label{eq:tail_asymp_m1}
\end{align}
\end{proposition}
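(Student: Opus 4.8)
The plan is to derive a master (rate) equation for the label-$v$ degree counts $N_n^v(T)$, to identify the asserted distribution as the unique fixed point of the induced recursion, and then to upgrade convergence in mean to almost sure convergence; the tail estimate $\asymp$ is immediate from Stirling's formula at the end. First I would compute the one-step attachment rates. When vertex $T+1$ arrives with label $u$ (probability $\rho_u$), a tagged vertex of label $v$ and degree $d$ owns $d$ half edges, each of weight $\beta_{uv}$, out of a total half-edge weight $2mT\sum_{v'}\beta_{uv'}\eta_{T,v'}$. Hence the expected number of the $m$ new edges that land on the tagged vertex is
\[
m\cdot\frac{d\,\beta_{uv}}{2mT\sum_{v'}\beta_{uv'}\eta_{T,v'}} = \frac{d}{T}\,\theta_{u,v}(T),\qquad \theta_{u,v}(T):=\frac{\beta_{uv}}{2\sum_{v'}\beta_{uv'}\eta_{T,v'}}.
\]
By Proposition \ref{prop:gobal_convergence}, $\eta_T\to\eta^*$ almost surely, so $\theta_{u,v}(T)\to\theta^*_{u,v}$; averaging over $u\sim\rho$, the aggregate rate is $\frac{d}{T}\,\theta_v(T)$ with $\theta_v(T):=\sum_u\rho_u\theta_{u,v}(T)\to\theta^*_v$ as in \eqref{theta_v_def}. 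The probability that the tagged vertex gains two or more edges in a single step is $O(T^{-2})$ and enters only through error terms.

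Using these rates, and writing $H^v(T)\sim\rho_v T$ (the strong law of large numbers applied to the i.i.d.\ labels of the arriving vertices), I would set up, with the convention $N_{m-1}^v\equiv0$,
\[
\expect{N_n^v(T+1)\mid G_T} = N_n^v(T) + \frac{(n-1)\theta^*_v}{T}N_{n-1}^v(T) - \frac{n\theta^*_v}{T}N_n^v(T) + \rho_v\identityf{n=m} + \mathcal{E}_n(T),
\]
where $\mathcal{E}_n(T)$ collects the multi-edge contributions and the discrepancy between $\eta_T$ and $\eta^*$. Substituting the ansatz $N_n^v(T)\approx\rho_v T\,p_n$ and cancelling common factors forces $p_m(1+m\theta^*_v)=1$ (no degree-$(m-1)$ vertices feed in, but the fresh vertex supplies the constant $\rho_v$) and $p_n(1+n\theta^*_v)=(n-1)\theta^*_v\,p_{n-1}$ for $n>m$. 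The telescoping solution has ratio $p_n/p_{n-1}=(n-1)/(n+1/\theta^*_v)$, which coincides with $\Gamma(n)\Gamma(n+1/\theta^*_v)/[\Gamma(n-1)\Gamma(n+1/\theta^*_v+1)]$; together with the initial value $p_m=1/(1+m\theta^*_v)$ this identifies $p_n=p_n(\theta^*_v,m)$.

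The main obstacle is rigor: the true rates are random and time-varying through $\eta_T$, so the fixed-point computation above does not by itself deliver almost sure convergence. I would control the rates by a pathwise sandwich. Fix $\epsilon>0$; by Proposition \ref{prop:gobal_convergence} there is an almost surely finite time beyond which $|\eta_{T,v'}-\eta^*_{v'}|<\epsilon$ for every $v'$, so the actual attachment rates are bracketed by those obtained on replacing $\theta^*_v$ with $\theta^*_v\pm O(\epsilon)$; solving both bracketing recursions and letting $\epsilon\downarrow0$ pins the limit at $p_n(\theta^*_v,m)$. For the fluctuations I would fix $n$ and apply the Azuma--Hoeffding inequality to the Doob martingale associated with $N_n^v(T)$, whose one-step increments are $O(m)$; this gives $|N_n^v(T)-\expect{N_n^v(T)}|=o(T)$ almost surely via Borel--Cantelli, and dividing by $H^v(T)\sim\rho_v T$ transfers the limit in mean to $P_n^v(T)$. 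Equivalently, one may view $(P_n^v(T))_{n\ge m}$ as a stochastic approximation whose limiting ODE has $p_\cdot(\theta^*_v,m)$ as its unique attracting equilibrium, paralleling the proof of Proposition \ref{prop:gobal_convergence}. Finally, $\Gamma(n)/\Gamma(n+1/\theta+1)\sim n^{-(1/\theta+1)}$ by Stirling's formula gives the claimed $\asymp$.
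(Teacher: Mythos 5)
Your proposal is correct in outline, but it is a genuinely different route from the proof the paper itself writes out: what you reconstruct is essentially the original argument of \cite{Jordan13} (the rate-equation / stochastic-approximation analysis of the degree fractions), which the paper describes in the text following the proposition but deliberately does not reproduce. The paper's own proof, given in Appendix \ref{app:empirical_m1}, goes the other way around: it uses the coupling machinery of Proposition \ref{prop:YJ_couple} and Corollary \ref{cor:Y_vs_Z_mult} to show that a vertex arriving at time $\tau$ with label $v$ has degree in $G_T$ asymptotically distributed as ${\sf negbinom}\left(m,(\tau/T)^{\theta^*_v}\right)$, then averages over the asymptotically uniform normalized arrival time, $p_n(\theta,m)=\binom{n-1}{m-1}\int_0^1 u^{m\theta}(1-u^{\theta})^{n-m}\,du$, and converts this into the Gamma-ratio formula via the Beta function; concentration comes not from a martingale bound but from asymptotic pairwise independence of the degree evolutions (Corollary \ref{cor:Y_vs_Z_mult} with $J=2$), which makes the covariances of the vertex indicators vanish so that Chebyshev's inequality applies. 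The trade-offs are real: your route, if completed, yields the almost-sure convergence actually asserted in the proposition, whereas the paper's appendix proof explicitly delivers only convergence in probability; conversely, the paper's route exposes the structure the rest of the paper depends on --- the degree law as a function of arrival time --- and the same argument is reused verbatim to prove Proposition \ref{prop:error_scaling}(b) on the fraction of label errors of Algorithms C and DT, something your fixed-point computation cannot give because it aggregates all vertices with a given label. One step of yours needs more care than you allow it: the claim that the Doob martingale of $N_n^v(T)$ has one-step increments $O(m)$ is not automatic, since altering one attachment decision perturbs all subsequent attachment probabilities; the standard repair is a coupling argument showing that two processes differing in one attachment keep a bounded (expected) discrepancy in their degree sequences, and while this extends to the present model because the affinities $\beta_{uv}$ are bounded and strictly positive, it is a lemma requiring proof, not an observation.
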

The asymptotic equivalence in \eqref{eq:tail_asymp_m1} as $n\to\infty$ follows from
Sterling's formula for the Gamma function.
The proposition shows that the limiting degree distribution of a vertex with label $v$ selected uniformly at
random from among the vertices with label $v$ in $G_T$ has probability mass function with tail decreasing
like $n^{-\left(\frac 1 {\theta_v^*} +1\right) }.$  If $\beta_{u,v}$ is the same for all $u,v$ then $\theta^*_v=1/2$ for all $v$
and we see the classical tail exponent -3 for the Barab\'{a}si-Albert model.

The proof of Proposition \ref{prop:empirical_m1} given in \cite{Jordan13} is based on examining
the evolution of the fraction of vertices with a given label and given degree $n.$    Using the convergence
analysis of stochastic approximation theory, this yields limiting difference equations for $p_n$ that can be
solved to find $p_n.$  However, since all vertices with a given label are grouped together, the analysis does not
identify the limiting degree distribution of a vertex as a function of the arrival time of the vertex.

The following section investigates the evolution of the degree of a single vertex, or finite set of vertices,
conditioned on their labels.   As a preliminary application, we produce an alternative proof of
Proposition \ref{prop:empirical_m1} in Appendix \ref{app:empirical_m1}.  The main motivation for this
alternative approach is that it  can also be applied to analyze the probability of label error as a function of
time of arrival of a vertex,  for two of the recovery algorithms we consider.


\subsection{Evolution of vertex degree--the processes $Y, \tilde Y, \check Y,$ and $Z$}   \label{sec:tildeY_def}

Consider the preferential attachment model defined in
Section \ref{sec:background}.  Given a vertex $\tau$ with
$\tau \geq t_o+1$, consider the process $(Y_t : t\geq \tau),$ where
$Y_t$ is the degree of vertex $\tau$ at time $t.$
So $Y_{\tau} = m.$ 
The conditional distribution (i.e. probability law) of
$Y_{t+1}-Y_t$  given $(Y_t, \eta_t, \ell_\tau=v, \ell_{t+1}=u)$ is
given by:
$$
\calL (Y_{t+1} -Y_t   |  Y_t, \eta_t, \ell_\tau=v, \ell_{t+1}=u)  = 
 {\sf binom} \left( m , \frac{\theta_{u, v, t} Y_t}{mt} \right),
$$
where
\begin{align*}
 \theta_{u,v,t }=  \frac{\beta_{uv}}{2 \sum_{v'} \beta_{uv'}\eta_{tv'}}.        
\end{align*}
It follows that, given  $(Y_t, \eta_t, \ell_\tau=v),$  the conditional distribution of $Y_{t+1}-Y_t$ is a mixture of binomial
distributions with selection probability distribution $\rho$, which we write as:
\begin{align*}
\calL (Y_{t+1} -Y_t   |  Y_t, \eta_t, \ell_\tau=v)   = \sum_{u\in [r] }  \rho_u  {\sf binom} \left( m , \frac{\theta_{u, v, t} Y_t}{mt} \right).
\end{align*}
Proposition \ref{prop:gobal_convergence} implies,
  given any $\epsilon > 0$,  if $\tau$ is sufficiently
large,  $\prob{ \norm{\eta_t -\eta^*} \leq \epsilon \mbox{ for all } t\geq \tau} \geq 1-\epsilon.$
Therefore, $\theta_{u,v.t} \approx \theta_{u,v}^*$ for $v\in [r].$
A mixture of binomial distributions, all with small means, can be well
approximated by a Bernoulli  distribution with the same mean.  Thus, we expect
 $\calL(Y_{t+1}-Y_t | Y_t,\ell_\tau=v) \approx  \Ber \left(  \frac{ \theta^*_vY_t} t\right).$

Based on these observations, we define a random process that is an idealized variation of $Y$ obtained
by replacing $\eta_t$ by the constant vector $\eta^*,$ and allowing jumps of size one only.
The  process $\tilde Y$ has parameters
$\tau, m,$ and $\vartheta$,  where $\tau$ is the  activation time, $m$ is the state at the
activation time, and $\vartheta > 0$  is a rate parameter.   The process $\tilde Y$ is a time-inhomogeneous
Markov process 
with initial value $Y_\tau=m.$ 
For $t\geq \tau$ and $y$ such that $\frac{\vartheta y}{t} \leq 1$, we require:
\begin{align}  
\calL ( \tilde Y_{t+1}-\tilde Y_t   |  \tilde Y_t =y  )  = \Ber\left(  \frac{ \vartheta y}{t} \right).  \label{eq:tilde_Y_law}
\end{align}
By induction, starting at time $\tau$, we find that $\tilde Y_t \leq m + t-\tau$ for $t\geq \tau.$
If $\tau \geq m$ and $\vartheta \leq 1$, then $\frac{ \vartheta \tilde Y_t}{t} \leq 1$ for
all $t\geq \tau$ with probability one,  in which case \eqref{eq:tilde_Y_law} and
the initial condition completely specify
the distribution of $(\tilde Y : t\geq \tau).$
However, for added generality we allow $\vartheta > 1,$ in which case the above construction
can break down.   To address such situation, we define $\zeta$ such that $\zeta $ is the stopping time
$\zeta  \triangleq  \inf\{ t :  \vartheta \tilde Y_t  > t \}$ and we define
$\tilde Y_t = + \infty$ for $t  >  \zeta.$  

The process $Y$ can be thought of as a (non Markovian) discrete time birth process with activation time
$\tau$ and birth probability at a time $t$ proportional to the number of individuals.   However, the birth probability (or birth rate)
per individual, $\frac {\theta^*_{v}} t$, has a factor $\frac 1 t,$ which tends to decrease the birth rate per individual.
To obtain a process with constant
birth rate per individual we introduce a time change by using the process $(Y_{e^s}: s\geq 0).$  In other words, we
use $t$ for the original time variable and $s=\ln t$ as a new time variable.
We will define a process $Z$ such that $(Z_{\ln (t/ \tau)} : t\geq \tau)  \approx  (Y_t: t\geq \tau)$,  
or equivalently,  $(Z_s: s\geq 0)  \approx  (Y_{\tau \eexp^s}: s  \geq 0),$ in a sense to be made precise.

The process $Z=(Z_s: s\geq 0)$ is a continuous time pure birth Markov process with initial
state $Z_0=m$ and birth rate $\vartheta k$ in state $k,$ for some $\vartheta > 0.$
(It is a simple example of a Bellman-Harris process, and is related to busy periods in Markov queueing systems.)
The process $Z$ represents the total number of individuals in a continuous
time branching process beginning with $m$ individuals activated at time 0,
such that each individual spawns another at rate $\vartheta.$ 
For fixed $s$, $Z_s$ has the negative binomial distribution ${\sf negbinom}(m, \eexp^{-s\vartheta}).$
In other words,  its marginal probability distribution
$(\pi_n(s,\vartheta,m):   n  \in \integers_+)$ is given by
\begin{align}   \label{eq:Zdist}
\pi_n(s,\vartheta,m)  = \binom{n-1}{m-1}  \eexp^{-m\vartheta s} (1 - \eexp^{-\vartheta s} )^{n-m}~~~\mbox{for}~n\geq m.
\end{align}
In particular, taking $m=1$ shows $\pi(s,\vartheta,1)$ is the geometric distribution with parameter  $\eexp^{-\vartheta s}$, and hence,
mean  $\eexp^{\vartheta s}.$     The expression \eqref{eq:Zdist} can be easily derived for $m=1$ by solving the Kolmogorov forward equations
recursively in $n$:   $\dot \pi_n = -\vartheta n  \pi_n  + \vartheta (n-1)\pi_{n-1}$ for $n\geq 1,$ with the convention and base case,
$\pi_0 \equiv 0.$     For $m \geq 2,$  the process $Z$ has the same distribution as the sum of $m$ independent copies of $Z$ with $m=1,$
proving the validity of  \eqref{eq:Zdist} by the same property for the negative binomial distribution.

Let $\check Y_t = Z_{\ln (t /\tau)}$ for integers $t \geq \tau.$   The mapping from $Z$ to $\check Y$ does not depend
on the parameter $\vartheta,$ so a hypothesis testing problem for $Z$ maps to a hypothesis testing problem for $\check Y.$
There is loss of information because the mapping is not invertible, but the loss tends to zero as $\tau \to\infty,$  because the rate
of sampling of $Z$ increases without bound.

The following proposition, proven in Appendix \ref{app:proof_of_Y_couple}, shows that $Y, \tilde Y$ and $\check Y$ are asymptotically equivalent in the sense of total variation distance.
Since the processes $Y, \tilde Y$ and $\check Y$ are integer valued, discrete time processes,  their trajectories
over a finite time interval $[\tau,T]$ have discrete probability distributions.  See the beginning
of  Appendix \ref{app:proof_of_Y_couple} for a review of the definition of total variation distance and its significance for coupling.
Sometimes we write $\tilde Y(\vartheta)$ instead of $\tilde Y$,  and $\check Y(\vartheta)$ instead of $\check Y,$
to denote the dependence on the parameter $\vartheta.$

\begin{proposition}   \label{prop:Y_couple}
Suppose $\tau, T \to \infty$ such that $T > \tau$ and $T /\tau$ is bounded. Fix $v\in [r].$
Then
\begin{align}   \label{eq:Y_tildeY_couple}
d_{TV}(  (Y_{[\tau,T]} | \ell_{\tau}=v) ,  \tilde Y_{[\tau,T]}(\theta^*_v) )  \to 0,
\end{align}
and for any $\vartheta > 0$,
\begin{align}    \label{eq:tildeY_checkY_couple}
d_{TV}\left(\tilde Y_{[\tau, T]}(\vartheta)   , \check Y_{[\tau, T]}(\vartheta)   \right) \to 0.
\end{align}
\end{proposition}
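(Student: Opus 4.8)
The plan is to establish each bound by a step-by-step (greedy) coupling of the two integer-valued, discrete-time chains involved, controlling the probability that the coupled trajectories ever separate by the expected sum, along the reference path, of the one-step total variation distances between the two transition kernels; concretely I will use $d_{TV}(\calL(X_{[\tau,T]}),\calL(X'_{[\tau,T]}))\le\expect{\sum_{t=\tau}^{T-1}d_{TV}(K_t(\cdot\mid X_t),K'_t(\cdot\mid X_t))}$, the expectation being over the reference chain $X$ (which starts both chains at the common value $m$). Two ingredients will make the summed errors vanish: Proposition~\ref{prop:gobal_convergence}, which guarantees that $\eta_t$ is uniformly close to $\eta^*$ with high probability, and uniform moment bounds $\expect{Y_t}=O(m)$, $\expect{Y_t^2}=O(m^2)$ (and likewise for $\check Y$), valid for $\tau\le t\le T$ because $T/\tau$ is bounded and the degree grows at most like a fixed power of $t/\tau$.

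For \eqref{eq:Y_tildeY_couple}, since $Y$ alone is not Markov I would drive the coupling by the Markov pair $(Y,\eta)$ and fit $\tilde Y(\theta^*_v)$ to it, matching increments maximally whenever $Y_t=\tilde Y_t=y$. Let $W$ denote the increment of $Y$ given $(Y_t=y,\eta_t,\ell_\tau=v)$, a $\rho$-mixture of ${\sf binom}(m,\theta_{u,v,t}y/(mt))$, and set $\theta^*_{v,t}=\sum_u\rho_u\theta_{u,v,t}$, so $\expect{W\mid\cdots}=\theta^*_{v,t}y/t$. The elementary identity $d_{TV}(\calL(W),\Ber(\prob{W\ge1}))=\prob{W\ge2}=O((y/t)^2)$, together with $|\prob{W\ge1}-\theta^*_{v,t}y/t|=O((y/t)^2)$ and the Lipschitz bound $|\theta^*_{v,t}-\theta^*_v|\le C\norm{\eta_t-\eta^*}$ (the denominators defining $\theta_{u,v,t}$ are bounded away from $0$), yields the per-step estimate
\[
d_{TV}(\calL(W),\Ber(\theta^*_v y/t))\le C'(y/t)^2+C\norm{\eta_t-\eta^*}(y/t).
\]
Summed along the path, the first term gives $\expect{\sum_t C'(Y_t/t)^2}=O(m^2/\tau)$ by the second-moment bound; on the good event $A=\{\norm{\eta_t-\eta^*}\le\epsilon\ \text{for all}\ t\ge\tau\}$, which Proposition~\ref{prop:gobal_convergence} makes of probability $\ge 1-\epsilon$ for $\tau$ large, the second term is at most $C\epsilon\,\expect{\sum_t Y_t/t}=O(\epsilon)$, while its contribution off $A$ is $\le C\,\expect{(\sum_t Y_t/t)\identityf{A^c}}=O(\sqrt\epsilon)$ by Cauchy--Schwarz (using $\prob{A^c}\le\epsilon$ and $\expect{(\sum_t Y_t/t)^2}=O(m^2)$). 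Letting $\tau\to\infty$ and then $\epsilon\to0$ gives \eqref{eq:Y_tildeY_couple}.

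For \eqref{eq:tildeY_checkY_couple}, both $\tilde Y(\vartheta)$ and $\check Y(\vartheta)$ are Markov in $t$, so I couple them step by step from the common value $m$. Over the unit step $[t,t+1]$, $\check Y$ advances $Z$ by $\Delta s=\ln(1+1/t)=1/t-O(1/t^2)$; from state $k$ the pure-birth process produces exactly one birth with probability $\vartheta k\,\Delta s-O((\vartheta k/t)^2)$ and two or more with probability $O((\vartheta k/t)^2)$, so its increment law lies within total variation $O((k/t)^2)+O(k/t^2)$ of $\tilde Y$'s increment $\Ber(\vartheta k/t)$. Summing and using $\expect{\check Y_t}=m(t/\tau)^{\vartheta}=O(m)$ and $\expect{\check Y_t^2}=O(m^2)$ gives total error $O(m^2/\tau)\to0$. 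When $\vartheta>1$ one must separately note that $\tilde Y$ can hit the absorbing value $+\infty$ at the stopping time $\zeta$ whereas the (non-explosive) Yule-type process $\check Y$ cannot; but $\{\zeta\le T\}\subseteq\{\tilde Y_t>t/\vartheta\ \text{for some}\ t\le T\}$ has probability $O(m/\tau)$ by Markov's inequality and the first-moment bound, so it is absorbed into the error term.

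The step I expect to be the main obstacle is the bookkeeping around the environment $\eta$ in \eqref{eq:Y_tildeY_couple}. Because $Y$ is not Markov on its own and the good event $A$ is a tail event, conditioning on $A$ inside the kernel construction would disturb the Markov structure; the clean remedy is to keep $A$ out of the coupling and split the resulting expected error as $\expect{(\cdot)\identityf{A}}+\expect{(\cdot)\identityf{A^c}}$, bounding the second piece by a second-moment estimate (the crude deterministic bound $Y_t\le m(t-\tau+1)$ is too weak, since multiplied by $\prob{A^c}=O(\epsilon)$ it would only give $O(\epsilon\tau)$). Establishing the uniform moment bounds on $Y_t$ and $\check Y_t$ from the birth-type / Polya-urn structure of the increments is the other routine but essential ingredient.
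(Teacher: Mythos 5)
Your proposal is correct in its essential structure and follows the same core strategy as the paper's proof: a sequential maximal coupling driven by per-step total variation estimates, with the environment $\eta_t$ handled by a high-probability event on which $\norm{\eta_t-\eta^*}\leq\epsilon$ uniformly in $t\geq\tau$ (your event $A$ is the complement of the paper's event $F$). Your two per-step estimates are, up to constants, exactly the paper's Lemmas: the bound on $d_{TV}\bigl(\sum_u\rho_u{\sf binom}(m,\theta_{u,v,t}y/(mt)),\Ber(\theta^*_vy/t)\bigr)$ by $O((y/t)^2)+O(\norm{\eta_t-\eta^*}\,y/t)$, and the bound on the distance between the time-changed negative binomial increment of $\check Y$ and $\Ber(\vartheta y/t)$ by $O(y^2/t^2)+O(y/t^2)$. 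Where you genuinely diverge is in how the random state size is controlled when the per-step errors are summed. The paper truncates both processes at a fixed level $n$, so that the per-step bounds become deterministic ($y\leq n$), sums them to get $O(n^2/\tau)$, and then disposes of the truncation by tightness of the endpoint law (exactly computable for $\check Y_T$, which is negative binomial, and transferred to $\tilde Y_T$ via the already-proved second claim), taking $\tau,T\to\infty$ first and $n\to\infty$ second. You instead keep the per-step bounds random, take expectations along the reference chain, and invoke uniform moment bounds $\expect{Y_t}=O(m)$, $\expect{Y_t^2}=O(m^2)$ plus Cauchy--Schwarz to handle the contribution off the good event. This buys a single limit instead of a double limit, at the price of having to prove the moment bounds (routine, via $\expect{Y_{t+1}\mid Y_t,\eta_t}\leq Y_t(1+\theta_{\max}/t)$ and $T/\tau$ bounded, as you note) and of more delicate handling of $\tilde Y$ when $\vartheta>1$.

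That last point is the one step in your write-up that does not work as stated. You claim $\prob{\tilde Y_t>t/\vartheta \mbox{ for some } t\leq T}=O(m/\tau)$ ``by Markov's inequality and the first-moment bound.'' Two problems: first, for $\vartheta>1$ the unconditional expectation $\expect{\tilde Y_t}$ is infinite once explosion has positive probability (the process equals $+\infty$ after $\zeta$), so the first-moment bound must be stated for a stopped version; second, even granting finite means of order $O(m)$, a union bound of pointwise Markov inequalities gives $\sum_{t=\tau}^T O(m/t)=O(m\ln(T/\tau))=O(m)$, which does not vanish. The fix is cheap but needs to be said: either apply Doob's maximal inequality to the nonnegative martingale obtained by normalizing the (stopped) process by $\prod_{s=\tau}^{t-1}(1+\vartheta/s)$, or -- cleaner, and closest to what the paper does -- exploit monotonicity of the sample paths: if the state exceeds $t/\vartheta$ for some $t\in[\tau,T]$ then the terminal state exceeds $\tau/\vartheta$, and for the non-explosive reference chain $\check Y$ the terminal law is ${\sf negbinom}(m,(\tau/T)^{\vartheta})$ with mean $O(m)$, so a single Markov inequality at time $T$ gives $O(m/\tau)$. (This monotonicity-plus-endpoint argument is precisely the paper's tightness device in disguise.) With that repair, and with the moment bounds written out, your proof goes through.
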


The first part of Proposition \ref{prop:Y_couple} can be strengthened as follows.    The labels in
$\ell_{[1,T]}$ are mutually independent, each with distribution $\rho.$  We
can define a joint probability distribution over $(\tilde Y_{[\tau, T]}, \ell_{[1,T]})$ by specifying
the conditional probability distribution of $\tilde Y_{[\tau, T]}$ given $\ell_{[1,T]}$ as follows.
Given $\ell_{[1,T]}$,  $\tilde Y_{[\tau, T]}$ is a Markov sequence with $\tilde Y_{\tau}=m$ and:
\begin{align}   \label{eq:tildeY_given_labels}
\calL (\tilde Y_{t}- \tilde Y_{t-1}  \big| \ell_{t}=u,\ell_\tau=v, \tilde Y_{t-1}=y) =  \mathsf{Ber}\left(\frac{ y  \theta_{u,v}^*} {t-1} \right).
\end{align}
By the law of total probability, this gives the same marginal distribution for $\calL ( \tilde Y_{[\tau, T]} | \ell_{\tau = v} )$ as
\eqref{eq:tilde_Y_law} with $\vartheta = \theta_v^*,$ as long as  $ \max_{u,v} \{\theta^*_{u,v}\} y \leq t.$
\begin{proposition}   \label{prop:Y_ell_couple}
Suppose $\tau, T \to \infty$ such that $T > \tau$ and $T /\tau$ is bounded. Fix $v\in [r].$
Then
\begin{align}   \label{eq:Y_tildeY_coupled}
d_{TV}\left(  \left(Y_{[\tau,T]}, \ell_{[1,T]} \right)  ,  \left(\tilde Y_{[\tau,T]},  \ell_{[1,T]}\right) \right)  \to 0,
\end{align}
\end{proposition}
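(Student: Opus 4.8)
The plan is to deduce \eqref{eq:Y_tildeY_coupled} by a coupling argument that refines the one used for \eqref{eq:Y_tildeY_couple} in Appendix \ref{app:proof_of_Y_couple}. First I would observe that, under both joint laws appearing in \eqref{eq:Y_tildeY_coupled}, the marginal distribution of the label sequence $\ell_{[1,T]}$ is the same, namely that of $T$ independent draws from $\rho$. Hence the joint total variation distance equals the expectation over $\ell_{[1,T]}$ of the conditional total variation distance between the two laws of the degree process, and it suffices to construct, on a common probability space carrying a single copy of $\ell_{[1,T]}$, a coupling of $Y_{[\tau,T]}$ and $\tilde Y_{[\tau,T]}$ whose trajectories coincide with probability tending to $1$. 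The essential difference from the proof of \eqref{eq:Y_tildeY_couple} is that the increments are now matched label by label, using \eqref{eq:tildeY_given_labels}, rather than after averaging over $\ell_t$ as in \eqref{eq:tilde_Y_law}.

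Second, I would build the coupling by induction on $t$, maintaining the invariant $Y_t=\tilde Y_t$. Suppose $Y_{t-1}=\tilde Y_{t-1}=y$ and the realized labels are $\ell_t=u$ and $\ell_\tau=v$. In the true model the increment $Y_t-Y_{t-1}$, given the graph so far, has law ${\sf binom}(m,\theta_{u,v,t-1}y/(m(t-1)))$, while by \eqref{eq:tildeY_given_labels} the idealized increment $\tilde Y_t-\tilde Y_{t-1}$ has law $\Ber(\theta_{u,v}^* y/(t-1))$. I would couple these two increments maximally, and generate the remaining $m$-edge attachments of the true graph conditionally on the realized increment so that the law of the whole graph---and hence of $Y_{[\tau,T]}$---is preserved. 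The invariant then propagates up to the first step at which the two increments disagree, so the trajectories differ only if some per-step coupling fails.

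Third, I would bound the failure probability. At step $t$ the two increments disagree with probability at most the total variation distance between the two laws above, which I would split as ${\sf binom}$-versus-$\Ber$ at a common small mean (of size $O((y/(t-1))^2)$) plus the discrepancy of means $|\theta_{u,v,t-1}-\theta_{u,v}^*|\,y/(t-1)$. The first piece is small because $T/\tau$ is bounded, so $\expect{Y_t}$ and $\expect{Y_t^2}$ stay bounded uniformly in $\tau\le t\le T$ and $Y_t/t=O(1/\tau)$; the second is small by Proposition \ref{prop:gobal_convergence}, which gives $\norm{\eta_t-\eta^*}\le\epsilon$ for all $t\ge\tau$ off an event of probability at most $\epsilon$, whence $|\theta_{u,v,t-1}-\theta_{u,v}^*|$ is uniformly small. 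A union bound over $\tau<t\le T$, after taking expectations and inserting the moment estimates, then shows that the total disagreement probability is $O(1/\tau)+O(\epsilon)+\epsilon$, which tends to $0$ once $\epsilon=\epsilon(\tau)\to0$ is chosen compatibly with Proposition \ref{prop:gobal_convergence}.

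I expect the main obstacle to be this last uniform control: summing the per-step errors requires simultaneously taming the heavy-tailed fluctuations of $Y_t$ (so that $\sum_{t}\expect{(Y_t/t)^2}$ and $\epsilon\sum_t\expect{Y_t}/t$ both vanish) and the dependence between $\eta_t$ and $Y_t$, which I would handle by splitting on the good event $\{\norm{\eta_t-\eta^*}\le\epsilon\ \forall t\ge\tau\}$ and controlling the complementary contribution by Cauchy--Schwarz together with a second-moment bound on $Y_t$. The individual per-step estimates---the binomial-to-Bernoulli comparison and the continuity of $\theta_{u,v,t}$ in $\eta_t$---are routine by comparison.
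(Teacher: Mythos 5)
Your proposal is correct, and it supplies essentially the argument the paper intends: the paper's ``proof'' of this proposition is a one-sentence remark that it is a minor variation of the proof of Proposition \ref{prop:Y_couple}, the point being that the estimates of Lemma \ref{lmm:coupling_lemma_Y} are uniform over bounded $\theta^*_{u,v}$, and your label-by-label maximal coupling---the reduction to the conditional total variation distance given $\ell_{[1,T]}$ (legitimate because both joint laws share the same i.i.d.\ label marginal), the per-step binomial-versus-Bernoulli bound of order $(y/t)^2$ plus a mean-discrepancy term, and the good event supplied by Proposition \ref{prop:gobal_convergence}---is exactly that variation, applied with \eqref{eq:tildeY_given_labels} in place of \eqref{eq:tilde_Y_law}. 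The one place you take a genuinely different route is in taming the size of $Y_t$ when summing the per-step errors: the paper couples the processes \emph{stopped} at a level $n$, so that every per-step bound involves $n$ rather than $Y_t$, and then removes the truncation by tightness of $\calL(\tilde Y_T)$, inherited via \eqref{eq:tildeY_checkY_couple} from $\check Y_T \sim {\sf negbinom}\left(m,(\tau/T)^{\vartheta}\right)$; you instead keep the unstopped processes and use uniform bounds on $\expect{Y_t}$ and $\expect{Y_t^2}$. Those bounds do hold here, since the increments lie in $[0,m]$ with conditional mean at most $\theta_{\max}Y_t/t$, so both moments are bounded by constants times $(T/\tau)^{2\theta_{\max}}$, giving $\sum_t \expect{Y_t^2}/t^2 = O(1/\tau)$ and $\epsilon\sum_t\expect{Y_t}/t=O(\epsilon)$; thus your route is valid, and it avoids invoking $\check Y$ at the cost of verifying a second-moment recursion. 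One small imprecision: the pathwise claim $Y_t/t=O(1/\tau)$ is false, since $Y_t$ can grow linearly in $t-\tau$; what your argument actually needs, and what is true, is the corresponding bound in expectation.
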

The proof is a minor variation of the proof
of  Proposition \ref{prop:Y_couple}
because the estimates on total variation distance
are uniform for $\theta^*_v$ or $\theta^*_{u,v}$
bounded.  Details are left to the reader.

\subsection{Joint evolution of vertex degrees}
Instead of considering the evolution of degree of a single vertex we consider the evolution of degree
for a finite set of vertices,  still
for the preferential attachment model with communities, $(G_t = (V_t, E_t): t \geq t_o),$
defined in Section \ref{sec:background}.
Given integers $\tau_1, \ldots  , \tau_J$ with $t_o <  \tau_1 < \cdots <  \tau_J$, let
$Y^j_t=0$ if $1\leq t < \tau_j$ and let $Y^j_t$ denote the degree of vertex $\tau_j$ at time $t$ if $t\geq \tau_j$.
Let $Y^{[J]}_t = (Y^j_t : j \in [J]).$
Let $(v_1, \ldots , v_J) \in [r]^J.$
We consider the evolution of $(Y^{[J]}_t : t\geq 1)$ given
$(\ell_{\tau_1}, \ldots  , \ell_{\tau_J}) = (v_1, \ldots  , v_J).$
Let $\vartheta_j = \theta^*_{v_j}$ for $j \in [J].$
About the notation $\theta^*$ vs. $\vartheta$:  The vector $\theta^* = (\theta_v^* : v\in [r])$ denotes the limiting rate
parameters for the $r$ possible vertex labels defined in \eqref{theta_v_def}, whereas  $\vartheta = (\vartheta_j :  j \in [J])$ denotes the limiting
rate parameters  for the specific set of $J$ vertices being focused on, conditioned on their labels being $v_1, \ldots  , v_J.$

The process  $\tilde Y^{[J]}$ is defined similarly.
Fix  $J\geq 1$, integers $\tau_1, \ldots , \tau_J$  with
$1 \leq  \tau_1 < \ldots   < \tau_J$,  and $\underline \vartheta \in (\reals_{>0})^J.$
Suppose for each $j\in [J]$ that $\tilde Y^j$ is a  version of the process
$\tilde Y$ defined in  Section \ref{sec:tildeY_def},  with parameters
$\tau_j$, $m$, and $\vartheta_j,$   with the extension $\tilde Y^j_t =0$ for $1 \leq t \leq \tau_j-1.$
Furthermore,  suppose the $J$ processes  $(\tilde Y^j)_{j\in [J]}$ are mutually independent.
Finally, let $\tilde Y^{[J]} = (\tilde Y^{[J]}_t :  t\geq 1 )$  where $\tilde Y^{[J]}_t  = (\tilde Y^{j}_t : j\in [J]).$
Note that $\tilde Y^{[J]}$ is itself a time-inhomogeneous Markov process.
In what follows we write $\tilde Y^{[J]}(\underline \vartheta)$ instead of
$\tilde Y^{[J]}$ when we wish to emphasize the dependence  on the parameter
vector $\underline \vartheta.$  Let $\check Y^{[J]}$ be defined analogously, based on $\check Y.$

\begin{proposition}   \label{prop:YJ_couple}
Fix the parameters of the preferential attachment model,  $m,r, \beta, \rho, t_o, G_{t_o}, \ell_{[1,t_o]}.$
Fix $J\geq 1$ and $v_1, \ldots  ,  v_J \in [r],$ and let $\vartheta_j = \theta^*_{v_j}$ for $j\in [J].$
Let $\tau_0 \to \infty$ and let $\tau_1, \ldots , \tau_J$ and $T$ vary such that
 $\tau_0  \leq  \tau_1 < \ldots   < \tau_J$,  and $T/\tau_0$ is bounded.
Then
\begin{align*}
&d_{TV}\left( \tilde Y^{[J]}_{[1,T]}(\underline \vartheta) , \left(Y^{[J]}_{[1,T]}  \big| \ell_{\tau_j}
 = v_j ~\mbox{\rm for } ~ j\in [J]\right)  \right) \to 0   \\
& d_{TV}\left( \tilde Y^{[J]}_{[1,T]}(\underline \vartheta) , 
                     \check Y^{[J]}_{[1,T]}(\underline \vartheta)  \right)  \to 0  
\end{align*}
\end{proposition}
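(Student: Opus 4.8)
The plan is to treat the two displayed limits separately, dispatching the second (idealized discrete versus time-changed continuous) quickly and reserving the real work for the first (true conditioned process versus idealized product). For the second limit, note that both $\tilde Y^{[J]}_{[1,T]}(\underline\vartheta)$ and $\check Y^{[J]}_{[1,T]}(\underline\vartheta)$ are by construction laws of products of $J$ mutually independent coordinates, the $j$-th being $\tilde Y^j_{[\tau_j,T]}(\vartheta_j)$ and $\check Y^j_{[\tau_j,T]}(\vartheta_j)$ respectively. Total variation distance tensorizes for product measures, giving $d_{TV}\bigl(\tilde Y^{[J]}_{[1,T]}(\underline\vartheta),\check Y^{[J]}_{[1,T]}(\underline\vartheta)\bigr)\le\sum_{j=1}^{J}d_{TV}\bigl(\tilde Y^j_{[\tau_j,T]}(\vartheta_j),\check Y^j_{[\tau_j,T]}(\vartheta_j)\bigr)$. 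Each summand tends to $0$ by the second part of Proposition~\ref{prop:Y_couple} applied with $\tau=\tau_j$: indeed $\tau_j\ge\tau_0\to\infty$ and $T/\tau_j\le T/\tau_0$ is bounded, so the hypotheses hold, and since $J$ is fixed the finite sum tends to $0$.

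For the first limit I would build a short chain of couplings reducing the true conditioned process to the independent product, bounding each reduction in total variation. First I would condition on the event $A_\epsilon=\{\norm{\eta_t-\eta^*}\le\epsilon\text{ for all }t\ge\tau_0\}$, which by Proposition~\ref{prop:gobal_convergence} has probability at least $1-\epsilon$ once $\tau_0$ is large; on $A_\epsilon$ every rate $\theta_{u,v,t}$ lies within $O(\epsilon)$ of $\theta^*_{u,v}$, so replacing $\eta_t$ by $\eta^*$ costs vanishing total variation, exactly as in the single-vertex analysis behind Propositions~\ref{prop:Y_couple} and~\ref{prop:Y_ell_couple}. It is convenient to carry the full label sequence $\ell_{[1,T]}$ along and to work with the label-conditioned version of $\tilde Y^{[J]}$ in which the coordinates are declared conditionally independent given $\ell_{[1,T]}$, each evolving by~\eqref{eq:tildeY_given_labels}; the per-coordinate binomial-to-Bernoulli passage is then inherited verbatim from the single-vertex proof. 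At each step the true joint increment of $(Y^1_t,\dots,Y^J_t)$ is determined by $m$ independent edge placements, each of which may land on a tracked half-edge or elsewhere, whereas the target increments form a conditionally independent product; the two laws agree on the event that at most one tracked half-edge is selected, so the whole reduction hinges on showing the complementary collision event is negligible.

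The main obstacle is precisely this decorrelation step. A collision at time $t$ means two of the $m$ new edges land on tracked half-edges, either on two distinct tracked vertices or twice on one, an event of probability $O\bigl((\sum_j Y^j_t)^2/t^2\bigr)$ by a binomial second-order estimate, since the per-edge probability of hitting any tracked half-edge is $O\bigl(\sum_j Y^j_t/(mt)\bigr)$. To sum this over $t\in[\tau_0,T]$ I would bound $\expect{(Y^j_t)^2}$ uniformly: on $A_\epsilon$ each $Y^j$ is stochastically dominated by a single tracked-vertex process run at the maximal rate $\bar\theta=\max_{u,v}\theta^*_{u,v}+O(\epsilon)$, whose law is close to the negative binomial marginal of $Z$ and hence has bounded moments because $T/\tau_0$ is bounded. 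Thus the expected number of collisions over $[\tau_0,T]$ is $O\bigl(\sum_{t\ge\tau_0}t^{-2}\bigr)=O(1/\tau_0)\to0$, so by Markov's inequality collisions are absent with probability tending to $1$; on the collision-free event distinct tracked vertices increment through disjoint edges, so, given the labels, their increments are genuinely independent and match the product construction. Finally I would condition on the fixed positive-probability event $\{\ell_{\tau_j}=v_j\ \forall j\}$, which inflates total variation by at most the constant $1/\prod_j\rho_{v_j}$, and marginalize out the labels of the non-tracked arriving vertices: averaging the rate $\theta^*_{u,v_j}$ over $u\sim\rho$ returns $\sum_u\rho_u\theta^*_{u,v_j}=\theta^*_{v_j}=\vartheta_j$, while the residual coupling among coordinates induced by the shared incoming label is again an $O(1/t^2)$ per-step effect summing to $O(1/\tau_0)$, thereby recovering the independent product $\tilde Y^{[J]}_{[1,T]}(\underline\vartheta)$ and completing the first limit.
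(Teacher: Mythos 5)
Your proposal is correct, and its analytic core is the same as the paper's: a sequential coupling with per-step total-variation estimates of order $1/t^2$ (collisions among tracked half-edges, the binomial-to-Bernoulli replacement, and the correlation induced by the shared label of the arriving vertex), plus a rate-replacement term controlled via Proposition \ref{prop:gobal_convergence}; the paper packages exactly these estimates into Lemmas \ref{lmm:coupling_lemma_YJ} and \ref{lmm:combined} and sums them to a bound of the form $\prob{F} + \theta_{\max} n J/\tau_0 + \sum_t O(n^2/t^2) + O(n\epsilon/t)$. Three of your choices do differ in organization, and each is legitimate. (i) For the second limit you tensorize $d_{TV}$ over the $J$ independent coordinates and invoke Proposition \ref{prop:Y_couple}; the paper leaves this step implicit. (ii) You avoid the paper's truncation device: the paper stops the processes when the total tracked degree reaches a level $n$ (justified by tightness), so all per-step bounds are deterministic in $n$, whereas you bound $\expect{(Y^j_t)^2}$ uniformly (valid because $\theta_{u,v,t}\le\theta_{\max}$ always and $T/\tau_0$ is bounded) and sum expectations --- slightly slicker, same effect. (iii) You carry $\ell_{[1,T]}$ along and condition on $\{\ell_{\tau_j}=v_j \mbox{ for all } j\}$ only at the end, paying the constant factor $1/\prod_j \rho_{v_j}$; the paper conditions from the start and must then treat the arrival times $t+1=\tau_j$ as ``exceptional times'' where its Lemma \ref{lmm:combined} does not apply. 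One caution on (iii): the exceptional times do not disappear in your formulation. After you condition on the label event and marginalize the non-tracked labels, the steps $t+1=\tau_j$ still drive the already-active coordinates $j'$ at the fixed rate $\theta^*_{v_j,v_{j'}}$ rather than the $\rho$-average $\theta^*_{v_{j'}}$, a first-order $O(n/t)$ discrepancy rather than the $O(1/t^2)$ you quote for the shared-label effect; it is harmless only because there are just $J$ such steps, each contributing $O(n\theta_{\max}/\tau_0)$ --- which is precisely the paper's exceptional-times bound --- so that half-sentence needs to be added when you write out the details.
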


The proposition is proved in Appendix \ref{app:proof_YJ_couple}.
A key implication of the proposition is that the degree evolution processes for a finite number
of vertices are asymptotically independent in the assumed asymptotic regime.  In particular, the following corollary
is an immediate consequence of the proposition.   It shows that the degrees of $J$
vertices at a fixed time $T$ are asymptotically independent with marginal
distributions given by \eqref{eq:Zdist}.
\begin{corollary}   \label{cor:Y_vs_Z_mult} (Convergence of joint distribution of degrees of $J$ vertices at a given time)
Under the conditions of Proposition \ref{prop:YJ_couple},  for a vector
$\underline n =(n_1, \ldots  , n_J)$ with $n_j \geq m,$
\begin{align*}
\lim_{\tau_0 \to\infty}  \sup_{\tau_1, \ldots  , \tau_J, T}  \bigg|   \prob{ Y^{[J]}_T    =  \underline n 
\bigg|  (\ell_{\tau_1}, \ldots  , \ell_{\tau_J}) = (v_1, \ldots  , v_J)   }  \\
  -  \prod_{j\in [J]}     \pi_{n_j}\left(\ln(T/\tau_j),   \vartheta_j,  m\right)    \bigg| = 0 .~~~~~~~~~~~~~~~~~~~~~~~~~
\end{align*}
\end{corollary}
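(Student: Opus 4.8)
The plan is to obtain the corollary directly from Proposition~\ref{prop:YJ_couple} by projecting the trajectory laws onto their time-$T$ coordinate and then exploiting the independent, negative-binomial structure built into $\check Y^{[J]}$. The product on the right-hand side will emerge for a structural reason: by its very definition $\check Y^{[J]}$ is a vector of mutually independent processes, and the marginal of each component at time $T$ is given in closed form by \eqref{eq:Zdist}. Thus the only analytic input needed is the two total variation estimates already supplied by the proposition.

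First I would combine those two estimates via the triangle inequality for $d_{TV}$, obtaining
\begin{align*}
d_{TV}\left(\left(Y^{[J]}_{[1,T]}\,\big|\,\ell_{\tau_j}=v_j~\forall j\right),\ \check Y^{[J]}_{[1,T]}(\underline\vartheta)\right)\to 0,
\end{align*}
uniformly over all valid $\tau_1,\ldots,\tau_J,T$ (subject to $\tau_0\le\tau_1<\cdots<\tau_J$ and $T/\tau_0$ bounded) as $\tau_0\to\infty$. Next I would pass from the trajectory laws to the marginal laws at time $T$. Since the coordinate map $y_{[1,T]}\mapsto y_T$ is a single fixed measurable function that does not depend on any of the parameters, and total variation distance is non-increasing under pushforward, the same uniform bound holds for the laws of $Y^{[J]}_T$ and $\check Y^{[J]}_T$. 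As $d_{TV}$ controls, up to a constant factor, the difference in probability assigned to any event, and in particular to the singleton $\{\underline n\}$, it follows that $\left|\prob{Y^{[J]}_T=\underline n\mid(\ell_{\tau_1},\ldots,\ell_{\tau_J})=(v_1,\ldots,v_J)}-\prob{\check Y^{[J]}_T=\underline n}\right|$ tends to zero uniformly over the free parameters.

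It then remains only to identify $\prob{\check Y^{[J]}_T=\underline n}$. By the mutual independence of the components $\check Y^1,\ldots,\check Y^J$ this probability factorizes as $\prod_{j\in[J]}\prob{\check Y^j_T=n_j}$. For each $j$ we have $\check Y^j_T=Z_{\ln(T/\tau_j)}$ for the pure-birth process $Z$ with rate $\vartheta_j$ and initial value $m$, so \eqref{eq:Zdist} gives $\prob{\check Y^j_T=n_j}=\pi_{n_j}(\ln(T/\tau_j),\vartheta_j,m)$. Substituting this product yields exactly the right-hand side of the corollary, and combining with the uniform vanishing of the atom-wise difference from the previous step completes the argument.

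Since essentially all of the work is concentrated in Proposition~\ref{prop:YJ_couple}, I do not anticipate a genuine obstacle; the corollary is routine. The single point warranting care is that the uniformity of the convergence, namely the $\sup$ over $\tau_1,\ldots,\tau_J,T$, must survive the reduction to the time-$T$ marginal. This is automatic: because the projection is one fixed map independent of the parameters, the contraction of $d_{TV}$ under pushforward preserves the uniform bound already furnished by the proposition, and no parameter-dependent estimate is introduced at this stage.
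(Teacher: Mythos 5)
Your proposal is correct and follows exactly the route the paper intends: the paper states the corollary as an immediate consequence of Proposition \ref{prop:YJ_couple}, and your argument (triangle inequality on the two total variation bounds, contraction of $d_{TV}$ under the projection to the time-$T$ marginal, then factorization via the independence of the components of $\check Y^{[J]}$ and the negative binomial marginal \eqref{eq:Zdist}) is precisely the standard chain of reasoning being invoked. Nothing is missing; the uniformity point you flag is handled correctly.
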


\begin{remark}
Corollary \ref{cor:Y_vs_Z_mult} implies, given $\ell_\tau = v$, the limiting distribution of the
degree of $\tau$ in $G_T$ is ${\sf negbinom}(m, (\tau/T)^{\theta^*_v}),$  as  $\tau, T \to \infty$
with  $\tau \leq T$ and $T/\tau$ bounded.    This generalizes the result known in the classical case
$\beta_{u,v}\equiv 1$ where $\theta_v^* = 1/2$, shown on p. 286 of  \cite{bollobas2001degree}.
\end{remark}

\subsection{Large time evolution of degree of a fixed vertex and consistent
estimation of the rate parameter of a vertex}

Consider the \BA model with communities.  Fix $\tau \geq 1$ and let
$Y_t$ denote the degree of $\tau$ in $G_t$ for $t\geq t_o.$
To avoid triviality, assume $\tau$ is not an isolated vertex in
the initial graph $G_{t_o}.$ 
The following proposition offers a way to consistently estimate
the rate parameter $\theta^*_{\ell_\tau}.$  If the
parameters $\theta^*_v$ of the \BA model are distinct, it follows that any fixed
finite set of vertices could be consistently classified in the limit as $T\to\infty$, without
knowledge of the model parameters.
\begin{proposition}  (Large time behavior of degree evolution)
  \label{prop:Y_large_time} 
  For $\tau$ fixed,
\begin{align}  \label{eq:Y_long_term}
\lim_{T\to\infty} \frac{\ln Y_T}{\ln(T/\tau) } = \theta^*_{\ell_{\tau}} ~~  a.s.
\end{align}
Here, ``a.s." means almost surely, or in other words, with probability one.
\end{proposition}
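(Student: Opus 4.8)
The plan is to isolate the multiplicative drift of $Y_t$ using a nonnegative martingale and to read off the growth exponent from its normalizing factor. Condition throughout on $\ell_\tau=v$, so $\theta^*_{\ell_\tau}=\theta^*_v$; since $\ell_\tau$ takes finitely many values and \eqref{eq:Y_long_term} will be shown a.s.\ on each event $\{\ell_\tau=v\}$, the unconditional statement follows. Let $\mathcal F_t$ denote the natural filtration of the labeled graph process, so $Y_t$ and $\eta_t$ are $\mathcal F_t$-measurable for $t\geq t_o$. The increment law recorded in Section \ref{sec:tildeY_def} gives $\expect{Y_{t+1}-Y_t\mid\mathcal F_t}=c_tY_t$ with $c_t=\bar\theta_{v,t}/t$ and $\bar\theta_{v,t}=\sum_u\rho_u\theta_{u,v,t}$. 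Because the entries of $\pmb\beta$ are strictly positive and $\eta_t$ is a probability vector, $\bar\theta_{v,t}$ is bounded, and Proposition \ref{prop:gobal_convergence} ($\eta_t\to\eta^*$) gives $\bar\theta_{v,t}\to\theta^*_v$ a.s. I would then set $a_t=\prod_{s=t_o}^{t-1}(1+c_s)$ (each factor $\mathcal F_s$-measurable) and $M_t=Y_t/a_t$, a bounded, hence integrable, nonnegative martingale with $M_{t_o}=Y_{t_o}\geq 1$ (the hypothesis that $\tau$ is not isolated in $G_{t_o}$ supplies $Y_{t_o}\geq1$).

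Two facts then yield the result. First, $\ln a_t=\sum_{s=t_o}^{t-1}\ln(1+\bar\theta_{v,s}/s)=\sum_{s=t_o}^{t-1}\bar\theta_{v,s}/s+O(1)$ a.s., the corrections being $O(1/s^2)$ and summable. Since $\sum_{s<t}1/s=\ln t+O(1)$ and $\bar\theta_{v,s}\to\theta^*_v$, a Toeplitz (weighted-average) argument gives $\sum_{s=t_o}^{t-1}\bar\theta_{v,s}/s=\theta^*_v\ln t+o(\ln t)$ a.s., whence $\ln a_t/\ln t\to\theta^*_v$, and as $\tau$ is fixed, $\ln a_t/\ln(t/\tau)\to\theta^*_v$ a.s. Second, the martingale convergence theorem gives $M_t\to M_\infty$ a.s.\ with $M_\infty<\infty$. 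Writing $\ln Y_t=\ln M_t+\ln a_t$, the finiteness of $M_\infty$ bounds $\ln M_t$ above for large $t$, so $\limsup_t\ln Y_t/\ln(t/\tau)\leq\theta^*_v$ a.s.

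The matching lower bound is the harder half, and it amounts to $M_\infty>0$ a.s.; this is the main obstacle, since a nonnegative martingale may converge to $0$. I would prove positivity via a Doob decomposition of $\ln M_t$. With $\Delta_s=Y_{s+1}-Y_s\in\{0,\dots,m\}$, the bounds $x-x^2/2\leq\ln(1+x)\leq x$ (for $x\geq0$) together with $\Delta_s^2\leq m\Delta_s$ and $\expect{\Delta_s\mid\mathcal F_s}=c_sY_s$ show that the predictable compensator of $\ln M_t$ is nonpositive and bounded below by $-\tfrac{m}{2}\sum_s c_s/Y_s$, while the martingale part has conditional second moments dominated by $\sum_s m\,c_s/Y_s$. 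Both are finite once $\sum_s c_s/Y_s<\infty$ a.s., and since $c_s=O(1/s)$ this reduces to a polynomial lower bound $Y_s\gtrsim s^{\theta^*_v-\delta}$.

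Such a bound is the genuine technical crux, and I would obtain it by stochastically dominating $Y$ from below by a discrete pure-birth process whose per-capita success probability at step $s$ is $(\theta^*_v-\delta)/s$ (valid for all large $s$ because $\bar\theta_{v,s}\to\theta^*_v$). Under the time change $s=\ln t$ this is the continuous-time pure-birth process $Z$ of Section \ref{sec:tildeY_def} with rate $\theta^*_v-\delta$, for which the classical fact that $\eexp^{-(\theta^*_v-\delta)s}Z_s$ converges a.s.\ to a strictly positive limit (consistent with the negative-binomial law \eqref{eq:Zdist}) furnishes the needed growth, giving $\liminf_t \ln Y_t/\ln(t/\tau)\geq\theta^*_v-\delta$ and then $\geq\theta^*_v$ as $\delta\downarrow0$. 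With $\sum_s c_s/Y_s<\infty$ in hand, both parts of the decomposition converge, so $\ln M_\infty>-\infty$, i.e.\ $M_\infty>0$ a.s., and the lower bound $\liminf_t\ln Y_t/\ln(t/\tau)\geq\theta^*_v$ follows. Combining the two bounds yields \eqref{eq:Y_long_term}. The delicate point throughout is thus the a.s.\ positivity of the martingale limit; the reduction to a pure-birth comparison is what I expect to require the most care, since the total-variation couplings of Propositions \ref{prop:Y_couple}--\ref{prop:YJ_couple} apply only for bounded $T/\tau$ and cannot be invoked directly in this fixed-$\tau$, $T\to\infty$ regime.
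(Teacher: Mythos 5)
Your upper bound is correct and self-contained: under $\prob{\cdot \mid \ell_\tau = v}$, the process $M_t = Y_t/a_t$ with predictable normalizer $a_t=\prod_{s}(1+\bar\theta_{v,s}/s)$ is a nonnegative martingale, hence converges a.s.\ to a finite limit, and your Toeplitz computation $\ln a_t/\ln(t/\tau)\to\theta^*_v$ (valid since $\bar\theta_{v,t}\to\theta^*_v$ a.s.\ by Proposition \ref{prop:gobal_convergence}) gives $\limsup_T \ln Y_T/\ln(T/\tau)\le \theta^*_v$. The genuine gap is in the lower bound, exactly at the step you flagged. Your chain of reductions ends with the claim that the discrete dominating chain $W$ (per-step birth probability $(\theta^*_v-\delta)W_t/t$, i.e.\ the paper's $\tilde Y(\theta^*_v-\delta)$) grows like $t^{\theta^*_v-\delta}$ a.s., and you justify this by saying that under the time change $s=\ln t$ this chain ``is'' the continuous process $Z$, then invoking Kesten--Stigum. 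That identification is not available: the only bridge between $\tilde Y$ and $Z$ (via $\check Y$) is the total variation coupling of Proposition \ref{prop:Y_couple}, which requires $T/\tau$ bounded, and TV closeness on bounded windows cannot produce an a.s.\ statement about a $T\to\infty$ limit --- as you yourself concede in your final sentence. So the decisive step, a.s.\ polynomial growth of even the idealized chain, is asserted rather than proven, and it is a statement of precisely the same kind as the proposition being proved (a cousin of Conjecture \ref{conj:Y_large_time_sharp}, which the paper can establish only for $\check Y$, not for $Y$ or $\tilde Y$). The hole can be filled --- e.g.\ by bounding the level-crossing times $T_k=\inf\{t: W_t=k\}$ of the idealized chain stochastically by sums of exponentials and applying Kolmogorov's convergence theorem, or by an $L^2$-bounded-martingale argument plus L\'{e}vy's 0--1 law to get a.s.\ positivity of the scaled limit of $W$ --- but that work is the heart of the matter and is absent. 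Two lesser remarks: the Doob-decomposition/$M_\infty>0$ detour is redundant, since once domination and growth of $W$ are in hand, $\liminf_T \ln Y_T/\ln(T/\tau)\ge\theta^*_v-\delta$ for every $\delta>0$ already finishes the proof; and the stochastic domination itself (handling the binomial mixture and the regime where $Y_t/t$ is not small) needs care, though that part is routine.

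It is worth seeing how the paper avoids this trap entirely. Its proof (Appendix \ref{app:consistency}) never needs positivity of any scaled a.s.\ limit: it first proves consistency of the ratio estimator \eqref{eq:theta_estimator}, $\hat\vartheta_T=(Y_T-Y_\tau)/\sum_{t=\tau}^{T-1}Y_t/t$, by stopping the martingale $Y_T-Y_\tau-\vartheta\sum_{t=\tau}^{T-1}Y_t/t$ at the time the denominator reaches a level $M$ (so that optional sampling plus Chebyshev give an error bound of order $1/M$, with no lower bound on $Y$ needed beyond $Y_t\ge1$) and then applying Borel--Cantelli along $M_k=(1+\epsilon)^k$; it then converts the ratio limit into \eqref{eq:Y_long_term} by the purely deterministic Gr\"{o}nwall-type Lemmas \ref{eq:f_lemma} and \ref{eq:lemma_Ylim}, in which iterating $f(S)\ge C+(\vartheta-\epsilon)\int_0^S f(u)\,du$ bootstraps a linear bound into $f(S)\ge C\eexp^{(\vartheta-\epsilon)S}$. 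That deterministic iteration supplies exactly the lower bound your approach struggles to reach, using nothing but monotonicity and $y_t\ge1$; if you wish to keep your martingale framework, the cleanest repair is to replace the appeal to $Z$ by a direct analysis of the dominating chain along one of the lines indicated above.
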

The following strengthening of Proposition \ref{prop:Y_large_time} is conjectured.
\begin{conjecture}  (Sharp large time behavior of degree evolution) 
\label{conj:Y_large_time_sharp}
For $\tau$ fixed,
\begin{align}  \label{eq:Y_conjecture}
 \lim_{T\to \infty}  \frac{Y_T}{(T / \tau)^{ \theta^*_{\ell_{\tau}} }}= W   ~  a.s.  
\end{align}
for a random variable $W$ with $\prob{W>0}=1.$
\end{conjecture}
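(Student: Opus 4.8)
Throughout, condition on $\ell_\tau=v$ and abbreviate $\vartheta=\theta^*_v$. The plan is to exhibit a nonnegative martingale built from $Y$ whose limit, after a deterministic rescaling, equals $W$. Averaging the binomial transition law of Section~\ref{sec:tildeY_def} over $\ell_{t+1}\sim\rho$ gives $\expect{Y_{t+1}-Y_t\mid\calF_t}=\frac{\theta_{v,t}}{t}Y_t$, where $\theta_{v,t}=\sum_u\rho_u\theta_{u,v,t}$ is $\calF_t$-measurable and $\theta_{v,t}\to\vartheta$ a.s.\ by Proposition~\ref{prop:gobal_convergence}. Hence, setting $A_t=\prod_{s=\tau}^{t-1}(1+\theta_{v,s}/s)$ with $A_\tau=1$, the ratio $M_t=Y_t/A_t$ is a nonnegative martingale with $M_\tau=m$, and by the martingale convergence theorem $M_t\to M_\infty$ a.s.

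Next I would pin down the deterministic scale of $A_t$. Writing $\ln A_t=\sum_{s=\tau}^{t-1}\ln(1+\theta_{v,s}/s)=\vartheta\ln(t/\tau)+\sum_{s=\tau}^{t-1}\frac{\theta_{v,s}-\vartheta}{s}+O(\sum_s s^{-2})$, the last term converges and the goal is to show the middle sum converges a.s.\ to a finite (random) limit; this yields $A_t/(t/\tau)^{\vartheta}\to B$ for some $B\in(0,\infty)$ a.s., whence $Y_T/(T/\tau)^{\vartheta}=M_T\cdot A_T/(T/\tau)^{\vartheta}\to M_\infty B=:W$. To identify $M_\infty$ and secure $\expect{M_\infty}=m$, I would establish $L^2$-boundedness: since $\expect{(M_{t+1}-M_t)^2\mid\calF_t}=\var(Y_{t+1}\mid\calF_t)/A_{t+1}^2\lesssim M_t/(tA_{t+1})$ and $A_t\asymp(t/\tau)^\vartheta$ with $\vartheta\le1$, the summed increment variances are finite, so $M_t\to M_\infty$ in $L^2$ and $M_\infty$ is not a.s.\ zero.

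For the positivity claim $\prob{W>0}=1$, equivalently $\prob{M_\infty=0}=0$, I would combine the conditional Paley--Zygmund inequality with L\'evy's $0$--$1$ law. The $L^2$ estimate yields $\expect{M_\infty^2\mid\calF_t}\le M_t^2+M_t\varepsilon_t$ with $\varepsilon_t\lesssim(t/\tau)^{-\vartheta}$ a.s., so $\prob{M_\infty>0\mid\calF_t}\ge M_t^2/\expect{M_\infty^2\mid\calF_t}\ge M_t/(M_t+\varepsilon_t)$. The crucial point is the deterministic lower bound $M_t=Y_t/A_t\ge m/A_t\gtrsim(t/\tau)^{-\vartheta}$, coming simply from $Y_t\ge m$, which is of the \emph{same order} as $\varepsilon_t$; hence $M_t/(M_t+\varepsilon_t)$ stays bounded away from $0$. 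Since the left side tends a.s.\ to $\identityf{M_\infty>0}$ by L\'evy's law, this forces $\identityf{M_\infty>0}=1$ a.s.

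I expect the main obstacle to be the second step: controlling the long-run fluctuations of $\eta_t$. Convergence of $\sum_s(\theta_{v,s}-\vartheta)/s$ requires strictly more than the law of large numbers $\eta_t\to\eta^*$ of Proposition~\ref{prop:gobal_convergence}; an almost-sure rate such as $\norm{\eta_s-\eta^*}=O(s^{-1/2+\epsilon})$ would suffice, but this must be imported from the fluctuation (central-limit / iterated-logarithm) theory of the underlying stochastic approximation rather than the mere convergence used elsewhere in the paper. This is precisely the gap between Proposition~\ref{prop:Y_large_time}, which needs only the crude logarithmic rate, and the sharp statement conjectured here; note the couplings of Propositions~\ref{prop:Y_couple}--\ref{prop:YJ_couple} give no help, since they require $T/\tau$ bounded whereas here $T/\tau\to\infty$.
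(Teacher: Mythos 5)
Your target here is stated in the paper as a conjecture, and the paper offers no proof of it: Appendix \ref{app:consistency} only establishes \eqref{eq:Y_conjecture} with $Y$ replaced by the surrogate $\check Y$, by applying the Kesten--Stigum theorem to the branching process $Z$. Measured against that, much of your proposal is correct and goes beyond what the paper does. The martingale construction is sound: on $\{\ell_\tau=v\}$ the drift is $\expect{Y_{t+1}-Y_t\mid\calF_t}=\theta_{v,t}Y_t/t$ with $\theta_{v,t}$ being $\calF_t$-measurable, so $M_t=Y_t/A_t$ is a nonnegative martingale. Moreover, because all entries of $\beta$ are strictly positive, $\theta_{u,v,t}\in[\beta_{\min}/(2\beta_{\max}),\,\beta_{\max}/(2\beta_{\min})]$ deterministically, so $A_t\geq c'(t/\tau)^{c}$ for deterministic $c,c'>0$; this crude bound alone carries your $L^2$ estimate, and your Paley--Zygmund/L\'evy argument for $\prob{M_\infty>0}=1$ also survives if you phrase both bounds in terms of $1/A_t$ rather than $(t/\tau)^{-\vartheta}$: one gets $\expect{(M_\infty-M_t)^2\mid\calF_t}\leq C M_t/A_t$ and $M_t\geq m/A_t$, hence $\prob{M_\infty>0\mid\calF_t}\geq m/(m+C)$, and L\'evy's upward theorem forces $\identityf{M_\infty>0}=1$ a.s. None of this needs the sharp growth $A_t\asymp(t/\tau)^\vartheta$.

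The genuine gap is exactly the one you flag yourself: the a.s. convergence of $\sum_s(\theta_{v,s}-\vartheta)/s$, equivalently of $A_T/(T/\tau)^\vartheta$. Proposition \ref{prop:gobal_convergence} gives only $\theta_{v,s}\to\vartheta$ a.s., hence $\sum_{s\leq t}(\theta_{v,s}-\vartheta)/s=o(\ln t)$, which is precisely the strength needed for the logarithmic limit of Proposition \ref{prop:Y_large_time} and no more. Since $M_T\to M_\infty\in(0,\infty)$ a.s., your decomposition $Y_T/(T/\tau)^\vartheta=M_T\cdot A_T/(T/\tau)^\vartheta$ shows the conjecture is \emph{equivalent} to that convergence, so the gap cannot be finessed within your framework: without an a.s. fluctuation rate for the stochastic-approximation iterates $\eta_t$ (for instance $\norm{\eta_t-\eta^*}=O(t^{-\delta})$ for some $\delta>0$, or a martingale-plus-drift decomposition of $\eta_t-\eta^*$ making the weighted sum converge), the ratio could in principle oscillate by subpolynomial factors. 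That missing ingredient is not available in the paper and is presumably why the statement remains a conjecture there; the couplings of Propositions \ref{prop:Y_couple}--\ref{prop:YJ_couple} indeed give no help since they require $T/\tau$ bounded. In short: your proposal is not a proof, but it is a correct reduction of the conjecture to a rate question about $\eta_t$, together with a complete proof of the positivity part, which nicely parallels the Kesten--Stigum positivity the paper obtains only for $\check Y$.
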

See Appendix \ref{app:consistency} for a proof of the proposition and a
proof that \eqref{eq:Y_conjecture} holds with $Y$ replaced by $\check Y.$

\section{Community recovery based on children}  \label{sec:recovery_from_children}

 Given vertices $\tau$ and $\tau_0$, we say
$\tau$ is a child of $\tau_0$, and $\tau_0$ is a parent of $\tau$, if $\tau \geq \max\{\tau_0, t_o\}+1,$
and there is an edge from $\tau$ to $\tau_0.$    It is assumed that the
known initial graph $G_{t_o}$ is arbitrary and carries no information
about vertex labels.   Thus,  for the purpose of inferring the vertex labels,  the edges
in $G_{t_o}$ are not relevant beyond the degrees that they
imply for the vertices in $G_{t_o}.$
Assuming $T$ is an integer with  $1 \leq \tau < T$, 
 let $\partial \tau$ denote the children of $\tau$ in $G_T$ and $\wp \tau$
 the parents of $\tau.$    So  $\wp \tau = \emptyset$ if $\tau \leq t_o$ and
 $\partial \tau \subset \{t_o+1,  \ldots, T\}.$
 
Consider the problem of estimating $\ell_\tau$ given observation
of a random object $\calO.$   For instance, the object could be the degree of vertex $\tau$ in
$G_T$, or it could be the set of children of $\tau$ in $G_T,$ or it could be the entire graph.
This is an $r$-ary hypothesis testing problem.
It is assumed a priori that the label $\ell_\tau$ has probability distribution $\rho,$
so it makes sense to try to minimize the probability of error in the Bayesian framework.
Let $\Lambda_{\tau}$ denote the log-likelihood vector defined by
$\Lambda_{\tau}(\calO | i ) =  \ln p(\calO | \ell_{\tau} = i)$ for $i \in [r].$
By a central result of Bayesian decision theory, the optimal decision rule is the MAP estimator,
given by
\begin{align*}
\hat \ell_{\tau,MAP} = \arg\max_i \left(  \ln \rho_i  +  \Lambda_{\tau}(\calO | i )   \right)
 \end{align*}

\begin{remark}
(i)  Knowing $G_T$ is equivalent to knowing the indices of the vertices and the undirected graph
induced by dropping the orientations of the edges of $G_T.$ 

(ii)  The estimators considered in this paper are assumed to know the order of arrival of the vertices
(which we take to be specified by the indices of the vertices for brevity) and the parameters $m$, $\beta$ and $\rho.$
It is clear that in some cases the parameters can be estimated from a realization of the graph for sufficiently
large $T.$   In particular, the parameter $m$ is directly observable.     By Proposition \ref{prop:Y_large_time},
if the order of arrival is known,  the set of growth rates $\{\theta^*_v: v \in [r]\}$ can be estimated. So if
the $\theta^*_v$'s are distinct,  the distribution $\rho$ can also be consistently estimated.

(iii) If the indices of the vertices are not known and only the undirected version of the
graph is given, it may be possible to estimate the indices if $m$ is sufficiently large.
Such problem has been explored recently for the classical
Barab\'{a}si-Albert model \cite{LuczakMagnerSzpankowski16}, but we don't pursue it here
for the variation with a planted community.
\end{remark}

 \paragraph*{\bf Algorithm C}
 The first recovery algorithm we describe, Algorithm C (``C" for ``children"),  is to let $\calO$ denote the set of
 children, $\partial t = \{ t_1, \ldots,   t_n  \}$,
of  vertex $\tau$ in $G_T.$   Equivalently, $\calO$ could be observation of $Y_{[\tau\vee t_o,T]},$
with parameters $m$ and $\theta^*_v,$
where $\tau \vee t_o = \max\{ \tau, t_o\}.$
However, motivated
 by Proposition \ref{prop:Y_couple},  we consider instead observation of $\tilde Y_{[\tau\vee t_o,T]},$  which has a distribution
asymptotically equivalent to the distribution of $Y_{[\tau\vee t_o, T]}.$
Let $d_0(\tau)$ denote the initial degree of vertex $\tau$,
defined to be the degree of $\tau$ in $G_{t_o}$  if $\tau \leq t_o$ and $d_0(\tau)=m$ otherwise.
Given a possible children set
$\partial t = \{ t_1, \ldots,   t_n  \},$  let $y^{\partial \tau}_{[\tau,T]},$
denote the corresponding degree evolution sample path:  $y^{\partial \tau}_t = d_o(\tau) +|\partial \tau \cap [\tau,t]|$
for $\tau\vee t_o  \leq t \leq T,$ 
The probability $\tilde Y_{[\tau\vee t_o, T]}$ corresponds to
children set $\partial t = \{ t_1, \ldots,   t_n  \}$ is given by
\begin{align*}
&P( \partial t = \{ t_1, \ldots,   t_n  \} ) = \\
&~~~\prod_{t\in [\tau\vee t_o +1, T] \backslash \partial \tau }
\left( 1 - \frac{y^{\partial \tau}_{so t-1}  \theta^*_v} {t-1} \right)  \prod_{t \in \partial \tau}  \frac{y^{\partial \tau}_{t -1} \theta^*_v} {t -1},
\end{align*}
so the log likelihood for observation $\tilde Y_{[\tau\vee t_o,T]} = y^{\partial  \tau}_{[\tau\vee t_o,T]}$  is:
\begin{align*}
 \Lambda_\tau^C &  =  |\partial \tau|    \ln  \theta_v^*  +  \sum_{t\in [\tau\vee t_o +1, T] \backslash \partial \tau }
\ln   \left( 1 - \frac{y^{\partial \tau}_{t-1} \theta^*_v} {t-1} \right)  
\end{align*}
Algorithm C for estimating $\ell_\tau$ is to use the MAP estimator
based on  $\rho$ and  $\Lambda^C_{\tau}.$
Using the approximation $\ln(1+s) = s$ and approximating the sum by
an integral we find $\Lambda^C_{\tau} \approx  \lambda_{\tau},$  where
\begin{align}
&\lambda^C_{\tau}(v) \triangleq   |\partial \tau|  \ln \theta_v^*  -
      \theta_v^*   \int_{\tau\vee t_o}^T  \frac{y^{\partial \tau}_t }{t} dt  \nonumber \\
&= |\partial \tau|    \ln  \theta_v^*  + \theta_v^*   \left(d_o(\tau) \ln \frac{\tau\vee t_o}{T} + \sum_{t\in \partial \tau} 
 \ln \frac t T \right).  \label{eq:LambdaC_approx}
\end{align}

\paragraph*{\bf Algorithm DT}
The second recovery algorithm we describe, Algorithm DT  (``DT" for ``degree thresholding"), is to let $\calO$ denote the
number of children of vertex $\tau$ in $G_T$, or, equivalently, the degree of $\tau$ at time $T$ minus the initial degree of $\tau.$  
Equivalently, $\calO$ could be observation of $Y_T - d_o(\tau).$    However, motivated
by Proposition  \ref{prop:Y_couple},  we consider instead consider observation of $\check Y_T - d_0(\tau),$  which has
the ${\sf negbinom}\left(d_o(\tau), (\tau/T)^{\theta_v^*}\right)$ distribution given $\ell_\tau = v$,  for $v\in [r].$
The log likelihood vector in this case, given the number of children,  $|\partial \tau|$,  is:
\begin{align*}
\Lambda^{DT}_\tau(v)  = - d_o(\tau) \theta_v^*\ln(T/\tau) +  |\partial \tau|   \ln
 \left(  1 - (\tau/T)^{\theta_v^*}  \right) ,
\end{align*}
where we have dropped a term (log of binomial coefficient)  not depending on $v.$
Algorithm DT for estimating $\ell_\tau$ is to use the MAP decision rule
based on $\rho$ and $\Lambda^{DT},$ or in other words, the MAP decision rule based
on $\calO = \check Y_T,$ or equivalently, based on $\calO = Z_{\bar s},$
where $\bar s = \ln(T/\tau)$  (because $\check Y_T = Z_{\bar s}$). 
Let $f_Z^{DT}(\rho, \theta^*, m, \bar s)$
denote the resulting average error probability $p_e.$

\section{Hypothesis testing for $Z$}   \label{sec:Z_inference}

Proposition \ref{prop:Y_couple} gives an asymptotic equivalence of
$Y_{[\tau,T]}, \tilde Y_{[\tau,T]},$ and  $\check Y_{[\tau,T]}.$    Recall that
$\check Y_{[\tau,T]}$ is obtained by sampling the continuous time
process $Z_{\ln(t/\tau)}$ at integers $t \in [\tau, T].$   Thus, the
continuous time process $Z$ is not observable.   However, as $\tau\to\infty,$
the rate that $Z$ is sampled increases without bound, so asymptotically
 $Z_{[0, \ln (T/ \tau)]}$ is observed.   We consider here the hypothesis
 testing problem based on observation of $Z_{[0, \ln (T/ \tau)]}$ such that under $H_v$ it
has rate parameter $\vartheta  = \theta_v^*$ for $v\in [r].$   This is sensible
in case the parameter values $\theta^*_v$, $v\in [r],$  are distinct.   To this
end, we derive the log likelihood vector.

Suppose  $\{s_1, \ldots , s_n\} \subset (0, \bar s]$ such that $0 < s_1 < \cdots  < s_n$ and
 $\bar s = \ln T/\tau$.  Since the inter-jump periods are independent (exponential) random variables,
the likelihood of $s_1, \ldots , s_n$ being the jump times during $[0,\bar s]$
under hypothesis $H_v$, is the product of the
likelihoods of the observed inter-jump periods, with an additional factor
of the likelihood of not seeing a jump in the last interval:
$$
\left( \prod_{i = 0}^{n-1} \theta_v^* (m+i)e^{-\theta_v^*(m+i)(s_{i+1} - s_i)}\right)e^{-\theta_v^*(n+m)(\bar{s} - s_n)}
$$
Thus, the log likelihood for observing this is (letting $s_0=0$):
\begin{align}  
 \Lambda^Z = n \ln \theta_v^*   -  \theta_v^*
  \left(m \bar s + \sum_{i = 1}^{n} (\bar s-s_i)\right)    \label{eq:loglikelihood}
\end{align}
(With $s_i = \ln(t_i/\tau)$,  \eqref{eq:loglikelihood} is the same as \eqref{eq:LambdaC_approx}, although in \eqref{eq:LambdaC_approx} the variables $t_i$ are supposed
to be integer valued.)
Let  $A_{\bar s}\triangleq \left(m \bar s + \sum_{i = 1}^{n} (\bar s-s_i)\right).$  Note that
$A_{\bar s}$ is the area under the trajectory of  $Z_{[0,\bar s]}$.  
Moreover, $n + m$ is the value of $Z_{\bar s}$.
So the log-likelihood vector is given by:
\begin{equation}\label{LLR}
    \Lambda^Z = (Z_{\bar s}-m)  \ln \theta_v^*    - (A_{\bar s}) \theta_v^*,
\end{equation}
which is a linear combination of $Z_{\bar s}-m$ and $A_{\bar s}.$   Thus,
the MAP decision rule has a simple form.   Let $f_Z^{C}(\rho, \theta^*, m, \bar s)$
denote the average error probability $p_e$ for the $MAP$ decision rule based on
observation of $Z_{[0,\bar s]}.$

There is apparently no closed form expression for the distribution of $\Lambda^Z,$
so computation of  $f_Z^{C}(\rho, \theta^*, m, \bar s)$  apparently requires
Monte Carlo simulation or some other numerical method.   A closed form
expression for the moment generating function of $\Lambda^Z$
is given in the following proposition, proved in Appendix \ref{app:proof_joint_ZA},
and it can be used to either bound the probability
of error or to accelerate its estimation by importance sampling.  

\begin{proposition}  \label{prop:joint_ZA_transform}
The joint moment generating function of $Z_{s}$ and $A_{s}$ is given as follows, 
where $\expectLm{\cdot}$ denotes expectation assuming the parameters of $Z$ are $\lambda,m:$
\begin{align}
\psi_{\lambda,m}(u, v, s) & \triangleq \expectLm{e^{uZ_{s} + vA_{s}}} \nonumber \\ 
& = \left( \frac{e^{(v - \lambda)s+u}} 
{1 + \frac{\lambda e^{u}}{v - \lambda}\left( 1 - e^{(v - \lambda)s} \right)} \right)^m .  \label{eq:joint_ZA}
\end{align}
\end{proposition}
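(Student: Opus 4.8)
The plan is to reduce to the case $m=1$ and then solve an ordinary differential equation for the resulting moment generating function. First I would exploit the branching structure already noted below \eqref{eq:Zdist}: for $m\geq 2$ the process $Z$ has the same law as a sum $Z_s=\sum_{j=1}^m Z_s^{(j)}$ of $m$ independent copies of the $m=1$ process. Since the area functional $A_s=\int_0^s Z_r\,dr$ is additive, $A_s=\sum_{j=1}^m A_s^{(j)}$ as well, so $e^{uZ_s+vA_s}=\prod_{j=1}^m e^{uZ_s^{(j)}+vA_s^{(j)}}$ factorizes over independent copies. Taking expectations gives $\psi_{\lambda,m}(u,v,s)=\psi_{\lambda,1}(u,v,s)^m$, which already accounts for the $m$-th power in \eqref{eq:joint_ZA}. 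It therefore suffices to compute $\phi(s)\triangleq\psi_{\lambda,1}(u,v,s)=\expectLone{e^{uZ_s+vA_s}}$ and show it equals the base of the power in \eqref{eq:joint_ZA}.

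Next I would derive an ODE for $\phi$. The elementary route conditions on the first birth time $\tau_1\sim\mathrm{Exp}(\lambda)$: if $\tau_1>s$ then $Z\equiv 1$ on $[0,s]$ and the contribution is $e^{-\lambda s}e^{u+vs}$; if $\tau_1=t\le s$, the process splits at time $t$ into two independent Yule processes, and using additivity of the area across the two subtrees together with the area $t$ accrued before the split, the conditional expectation is $e^{vt}\phi(s-t)^2$. This yields the renewal-type identity $\phi(s)=e^{u+(v-\lambda)s}+\lambda e^{(v-\lambda)s}\int_0^s e^{-(v-\lambda)w}\phi(w)^2\,dw$, which upon differentiation becomes the Riccati equation $\phi'(s)=\lambda\phi(s)^2+(v-\lambda)\phi(s)$ with $\phi(0)=e^u$. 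Equivalently, and more directly, I could invoke Feynman--Kac: with $G(k,s)\triangleq\expect{e^{uZ_s+vA_s}\mid Z_0=k}$ the backward equation is $\partial_s G(k,s)=\lambda k\,[G(k+1,s)-G(k,s)]+vk\,G(k,s)$, $G(k,0)=e^{uk}$; substituting the product ansatz $G(k,s)=\phi(s)^k$ (valid by independence) collapses this to the same Riccati equation.

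Finally I would solve the Riccati equation. Setting $h=1/\phi$ linearizes it to $h'(s)+(v-\lambda)h(s)=-\lambda$ with $h(0)=e^{-u}$, whose solution is $h(s)=e^{-(v-\lambda)s}\bigl(e^{-u}+\tfrac{\lambda}{v-\lambda}\bigr)-\tfrac{\lambda}{v-\lambda}$. Inverting and rearranging recovers exactly $\phi(s)=e^{(v-\lambda)s+u}\big/\bigl(1+\tfrac{\lambda e^u}{v-\lambda}(1-e^{(v-\lambda)s})\bigr)$, and raising to the $m$-th power gives \eqref{eq:joint_ZA}. The degenerate case $v=\lambda$ follows by continuity of both sides.

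The main obstacle is not the algebra, which is routine, but the justification of the functional equation and the finiteness of the transform. I expect the delicate points to be: (i) confirming that both the terminal reward $e^{uZ_s}$ and the running reward factor through the branching decomposition simultaneously, so that the product ansatz $G(k,s)=\phi(s)^k$ is legitimate; and (ii) identifying the region of $(u,v)$ for which $\psi_{\lambda,m}$ is finite---since $Z_s$ has geometric-type tails, the transform exists only where the denominator in \eqref{eq:joint_ZA} stays positive---and checking that $\phi$ is differentiable and that differentiation under the expectation (or under the integral in the renewal identity) is valid on that region, so that Feynman--Kac applies.
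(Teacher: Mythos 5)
Your proof is correct, and it reaches \eqref{eq:joint_ZA} by a genuinely different decomposition than the paper's. Both arguments share the first step: reduce to $m=1$ by writing $(Z_s,A_s)$ as a sum of $m$ independent copies of the $m=1$ pair, so that $\psi_{\lambda,m}=\psi_{\lambda,1}^m$. From there, the paper conditions on \emph{all} offspring of the root individual, whose birth times $R_1,\ldots,R_{n(s)}$ form a Poisson process of rate $\lambda$ on $[0,s]$; averaging over the Poisson number of children and their conditionally uniform birth times yields $\psi_{\lambda,1}(u,v,s)=e^{u+(v-\lambda)s}e^{\lambda F(s)}$ with $F(s)=\int_0^s\psi_{\lambda,1}(u,v,\tau)\,d\tau$, i.e.\ the ODE $\dot F=e^{u+(v-\lambda)s}e^{\lambda F}$, which is integrated via $\frac{d}{ds}\left(e^{-\lambda F}\right)=-\lambda e^{(v-\lambda)s+u}$. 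You instead condition only on the \emph{first} birth time $\tau_1\sim\mathrm{Exp}(\lambda)$ and use the binary-splitting (memorylessness) property of the Yule process, obtaining a renewal identity whose derivative is the Riccati equation $\phi'=\lambda\phi^2+(v-\lambda)\phi$, $\phi(0)=e^u$, solved by the standard substitution $h=1/\phi$; I verified your renewal identity, the Riccati equation, and the formula for $h$, and inverting $h$ does give exactly the base of the power in \eqref{eq:joint_ZA}. The two routes are equivalent at the level of the ODEs---differentiating the paper's equation for $F$ and setting $\phi=\dot F$ recovers precisely your Riccati equation---but your derivation is more elementary (it uses only the branching property at a single split, not the Poisson structure of the root's offspring process), and the $1/\phi$ linearization is arguably more transparent than the paper's exponential substitution; the paper's approach, in turn, avoids the quadratic nonlinearity altogether by working with the integrated transform. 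Your Feynman--Kac variant with the product ansatz $G(k,s)=\phi(s)^k$ is also sound and collapses to the same equation. Finally, the regularity caveats you flag (the region of $(u,v)$ where the transform is finite, and differentiation under the expectation) are left implicit in the paper's proof as well, so your treatment is no less rigorous than the original on those points.
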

Proposition \ref{prop:joint_ZA_transform} can be used to bound $p_e$ for
the special case of two possible labels, $r=2,$  in which estimating $\ell_{\tau}$
is a binary hypothesis testing problem: $H_1: \vartheta = \theta_1^*$, vs.
$H_2: \vartheta = \theta_2^*.$   For such a problem the likelihood vector
$\Lambda^Z$ can be replaced by the log likelihood ratio,
$\Lambda = \Lambda^Z(1) - \Lambda^Z(2).$
By a standard result in the theory of binary hypothesis testing
 (due to \cite{KobayashiThomas67}, stated without proof in \cite{Poor94Book},
 proved in special case $\pi_1=\pi_2=0.5$ in \cite{Kailath67}, and same proof
 easily extends to general case),
 the probability of error for the MAP decision rule is bounded by
\begin{equation}  \label{eq:Bhatt}
\pi_1\pi_2   \rho_B^2 \leq p_e \leq    \sqrt{\pi_1\pi_2}  \rho_{B},
\end{equation}
where the Bhattacharyya coefficient (or Hellinger integral) $\rho_{B}$ is defined by
$\rho_{B} =  \expect{\eexp^{\Lambda/2}\big| H_2 },$  and $\pi_1$ and $\pi_2$ are
the prior probabilities on the hypotheses.
The proposition with $\lambda=\theta_2^*$, $u=\frac 1 2 \ln(\theta_1^*/\theta_2^*),$  $v=  - \frac{\theta_1^*- \theta_2^*} 2,$ and $s=\bar s$ yields
\begin{align*}
    \rho_{B,C} & = \expectLm{e^{u(Z_{s}-m) + vA_{s}}}   = \psi_{\lambda,m}(u, v, s) e^{-mu} \nonumber \\
     & =  \left( \frac{    \eexp^{-(\theta_1^* + \theta_2^*)\bar s/2} }
   { 1 - \frac{2 \sqrt{\theta_1^* \theta_2^*}}{\theta_1^*+\theta_2^*}\left(1 -   \eexp^{-(\theta_1^*+\theta_2^*)\bar s/2}\right) }
   \right)^m  .
\end{align*}
Here we wrote $\rho_{B,C}$ to denote it as the Bhattacharyya coefficient for Algorithm C  (for the large $T$ limit).
Using this expression in \eqref{eq:Bhatt} provides upper and lower bounds on
$p_e=f_Z^{C}(\rho, \theta^*, m, \bar s)$ in case $r=2.$

For the sake of comparison,  we note that the Bhattacharyya coefficient for the hypothesis testing
problem based on $\check Y_T$ alone,  i.e. Algorithm DT, is easily found to be:
\begin{align*}
\rho_{B,DT} = \left( \frac{    \eexp^{-(\theta_1^* + \theta_2^*)\bar s/2} }
   { 1 - \sqrt{  (1 -   \eexp^{-\theta_1^*\bar s})(1 -   \eexp^{-\theta_2^*\bar s}) } }
   \right)^m .  
\end{align*}

\section{Performance scaling for Algorithms C and DT}   \label{sec:perf_scaling}

Consider the community recovery problem for $m, r, \rho,$ and $\beta$ fixed, and large $T,$
such that the rate parameters $\theta_v^* : v \in [r]$ are distinct.
Let $\delta$ be an arbitrarily small positive constant.
The problem of recovering $\ell_\tau$ for some vertex $\tau$ with $\delta T \leq \tau \leq T$
 from $G_T$ using children (C)  (respectively, degree thresholding (DT)) is asymptotically
equivalent to the $r$-ary hypothesis testing problem for observation
$Z_{[0,\ln(T/\tau)]}$  (respectively, $Z_{\ln(T/\tau)}$)
with the same parameters $m$ and $\theta_v^* : v \in [r].$
This leads to the following proposition, based on the results on coupling of $Y$, $\tilde Y$ and $\check Y$
and the connection of $\check Y$ to $Z.$

\begin{proposition}   \label{prop:error_scaling} (Performance scaling for Algorithms C and DT)
(a)  Let $p_{e,\tau, T}^{(C)}$ denote the probability of error for recovery of the label $\ell_{\tau}$
using Algorithm C.   For any $\delta  \in (0,1)$, as $T\to \infty,$
\begin{align*}
\max_{\tau : \delta T\leq \tau \leq T}  | p_{e,\tau, T}^{(C)} -  f_Z^{C}(\rho, \theta^*, m, \ln(T/\tau)) |  \to 0.
\end{align*}
(b) Let $\hat p_{e,T}^{(C)}$ denote the fraction of errors for recovery of the labels of $G_T$ using
Algorithm C for each vertex.  Then,
\begin{align*}
\hat p_{e,T}^{(C)} & \overset{T\to\infty}\longrightarrow 
 \int_0^1   f_Z^{C}(\rho, \theta^*, m,  \ln(1/\delta)) d \delta,
 \end{align*}
 where the convergence is in probability.   \\
 (c) Parts (a) and (b) hold with C replaced by DT.
 \end{proposition}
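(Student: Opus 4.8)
\emph{Proof approach.} The plan is to transport the per-vertex error probabilities from the true degree process $Y$ to the idealized discrete processes $\tilde Y$, $\check Y$, and finally to the continuous-time process $Z$, using the coupling estimates of Propositions~\ref{prop:Y_couple}--\ref{prop:YJ_couple} together with the elementary fact that both the error probability of any fixed decision rule and the Bayes error are $1$-Lipschitz, with respect to total variation distance, in the joint law of the observation and the true label. Indeed, for a fixed rule the error event $\{\hat\ell_\tau\neq\ell_\tau\}$ is a fixed measurable set, so $|p_e^P-p_e^Q|\le d_{TV}(P,Q)$ for joint laws $P,Q$ of $(\calO,\ell_\tau)$; taking an infimum over rules gives the same bound for the Bayes error. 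Moreover, when $\ell_\tau\sim\rho$ under both $P$ and $Q$, the joint distance $d_{TV}(P,Q)$ equals the $\rho$-average of the conditional distances $d_{TV}(P(\cdot\mid v),Q(\cdot\mid v))$, so the hypothesis-wise bounds of Proposition~\ref{prop:Y_couple} suffice.

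For part (a) I would argue along the chain $p_{e,\tau,T}^{(C)}\approx e_{\tilde Y}(\Lambda^C)=B_{\tilde Y}\approx B_{\check Y}\to f_Z^{C}$. The first step replaces the real data by $\tilde Y$ data for the fixed rule $\Lambda^C$, via the Lipschitz bound and Proposition~\ref{prop:Y_ell_couple}, using that $\partial\tau$ is a deterministic function of the trajectory $Y_{[\tau,T]}$, so passing to this statistic does not increase total variation distance. On $\tilde Y$ data the rule $\Lambda^C$ is by construction the Bayes rule, so its error is the Bayes error $B_{\tilde Y}$; the third step replaces $\tilde Y$ by $\check Y$ using Lipschitz continuity of the Bayes error and the coupling $d_{TV}(\tilde Y,\check Y)\to0$ of Proposition~\ref{prop:Y_couple}. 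The final step passes from the sampled process $\check Y_{[\tau,T]}=(Z_{\ln(t/\tau)}:t\in[\tau,T])$ to the continuous observation $Z_{[0,\bar s]}$, $\bar s=\ln(T/\tau)$: as $\tau\to\infty$ the sampling times become dense in $[0,\bar s]$ (largest gap $\ln(1+1/\tau)\to0$), so with probability tending to one no two jumps of $Z$ share a sampling interval, the jump times are recovered up to error $o(1)$, and the discretized log-likelihood converges to $\Lambda^Z$ of \eqref{eq:loglikelihood}, whence $B_{\check Y}\to f_Z^{C}$. I expect this uniform sampling-to-continuum step to be the main obstacle. The uniformity over $\tau\in[\delta T,T]$ demanded by the maximum is available throughout because $T/\tau\le1/\delta$ is bounded, so all the total variation and discretization estimates hold uniformly on this range.

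For part (b), write $\hat p_{e,T}^{(C)}=T^{-1}\sum_\tau\mathbf 1\{\hat\ell_\tau\neq\ell_\tau\}$. Taking expectations and invoking part~(a) on the block $\tau\ge\delta T$, the mean equals $T^{-1}\sum_{\tau\ge\delta T}f_Z^{C}(\rho,\theta^*,m,\ln(T/\tau))+o(1)$, a Riemann sum in $\delta'=\tau/T$ converging to $\int_\delta^1 f_Z^{C}(\rho,\theta^*,m,\ln(1/\delta'))\,d\delta'$; since the omitted early block and the corresponding integral over $(0,\delta)$ are each at most $\delta$, letting $\delta\downarrow0$ gives the claimed mean. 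To upgrade this to convergence in probability I would bound the variance. The indicator for $\tau$ is a function of $(Y^\tau_{[\tau,T]},\ell_\tau)$, so for a pair the two indicators are a function of $(Y^{[2]}_{[1,T]},\ell_\tau,\ell_{\tau'})$, which by Proposition~\ref{prop:YJ_couple} with $J=2$ is within $o(1)$ of a product; hence the $\Cov$ of the two indicators is $o(1)$, uniformly over pairs with both indices at least $\delta T$. Splitting the double sum into such far pairs (covariance $o(1)$), pairs meeting the early block (at most $2\delta T^2$ of them, each covariance bounded by a constant), and the diagonal ($O(1/T)$), one gets $\var(\hat p_{e,T}^{(C)})\le o(1)+O(\delta)$ for every fixed $\delta$, so the variance vanishes; Chebyshev's inequality then yields convergence in probability. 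Obtaining the covariance bound uniformly over all pairs is the technical crux here.

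Part (c) is the same two-step transport with C replaced by DT, with one simplification: Algorithm~DT observes only the terminal degree, and $\check Y_T=Z_{\bar s}$ exactly, so no sampling-to-continuum argument is needed. The marginal coupling $d_{TV}((Y_T\mid\ell_\tau=v),Z_{\bar s}(\theta_v^*))\to0$ follows from Proposition~\ref{prop:Y_couple} because marginalization does not increase total variation distance, and the variance bound for the fraction of errors is identical to that in part~(b).
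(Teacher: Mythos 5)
Your proposal is correct and follows essentially the same route as the paper's proof: transport the error probability along $Y \to \tilde Y \to \check Y$ using the coupling results (Propositions~\ref{prop:Y_couple}--\ref{prop:YJ_couple}, uniform since $T/\tau \le 1/\delta$), pass from the sampled process $\check Y$ to the continuous observation $Z_{[0,\ln(T/\tau)]}$ by noting that the hypothesis-independent quantization becomes arbitrarily fine as $\tau\to\infty$, and obtain part (b) by combining mean convergence (Riemann sum and bounded convergence, with the $(0,\delta)$ block contributing at most $\delta$) with a Chebyshev argument resting on the $J=2$ asymptotic independence of Proposition~\ref{prop:YJ_couple}, exactly as the paper does via its alternative proof of Proposition~\ref{prop:empirical_m1}. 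Your explicit bookkeeping---the $1$-Lipschitz property of error probabilities in total variation and the careful distinction between the error of the fixed (mismatched) rule $\Lambda^C$ applied to $Y$-data and the Bayes error for $\tilde Y$-data---is a somewhat more precise rendering of steps the paper treats implicitly, but it is the same argument, not a different one.
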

 
\begin{proof}
Observing the children of vertex $\tau$ in $G_T$ is equivalent
to observing $Y_{[\tau, T]}.$   In view of Proposition \ref{prop:YJ_couple},
the binary hypothesis testing problem based on observation of $Y_{[\tau, T]}$ is
asymptotically equivalent to the binary hypothesis testing problem based
on observation of $\tilde Y_{[\tau, T]}$ or on  $\check Y_{[\tau, T]}.$   The
upper bound on total variation distance is uniform for $T/\tau$ bounded.  In particular,
the minimum average probabilities of error for the problems become arbitrarily
close as $T\to \infty.$   To complete the proof of  (a), we next compare the probability
of recovery error based on observation of $\check Y$ vs. observation based on the
continuous time process $Z.$

The process $\check Y_{[\tau, T]}$ is obtained by sampling the process
$Z_{\ln(t/\tau) }$ at integer times $t \in [\tau, T].$   The mapping
from $Z$ to $\check Y$ does not depend on the parameter $\vartheta,$   which
could equal $\theta_v^*$ for any $v\in [r].$   In other words, observing $\check Y_{[\tau, T]}$ 
is equivalent to observing $Z_s$ for all $s \in [0, \ln(T/\tau)]$ such that $\tau \eexp^s$ is
an integer, where $Z$ has rate parameter $\theta_v^*$ under the hypothesis $\ell_\tau = v.$
Thus, in the terminology of source coding, 
$\check Y_{[\tau, T]}$ is a quantized version of $Z_{[0,\ln(T/\tau)]},$  with the
quantizer becoming arbitrarily fine as $\tau \to \infty.$   Therefore,  the
minimum probability of error for recovering $\ell_\tau$
based on the children of $\tau$ in $G_T$, in the limit as $\tau, T \to \infty$ with
$1 \leq T/\tau \leq 1/\delta$  is uniformly arbitrarily close to
$f_Z^{(C)}(\rho, \theta^*, m, \ln(T/\tau)) .$
This completes the proof of part (a).
Therefore, by the bounded convergence theorem and the fact $\delta$ can
be taken arbitrarily small,  convergence of the expected fraction of label errors follows:
\begin{align*}
\expect{ \hat p_{e,T}^{(C)}  }& \overset{T\to\infty}\longrightarrow 
\int_0^T  f_Z^{C}(\rho, \theta^*, m,  \ln(T/\tau)) d\tau  \\  & =
 \int_0^1   f_Z^{C}(\rho, \theta^*, m,  \ln(1/\delta)) d\delta.
 \end{align*}
 
 The last part of the proof is to show that the convergence is true not only in mean, but
 also in probability.   That follows by the same method used for
 the alternative proof of Proposition \ref{prop:empirical_m1}, about the empirical degree
 distribution, given in Appendix \ref{app:empirical_m1}.  The key step is a proof that
 the joint degree evolution processes $(\tilde Y^j)$ for a finite number $J$ of vertices (we
 only need to consider $J=2$ here) are asymptotically independent in the sense
 that the total variation distance to a process with independent degree evolution
 converges to zero.   That implies the error events for different labels
 are asymptotically uncorrelated, so convergence in probability to the mean
 follows by the Chebychev inequality.   The same proof works for $C$ replaced by $DT.$
 \end{proof}
 
We conjecture that a result similar to Proposition  \ref{prop:error_scaling} exists for  label recovery
using the message passing (MP)  algorithm described in the next section.
 
The following proposition, proved in Appendix \ref{app:small_tau}, addresses the case
that $\tau = o(T)$, including  the possibility that $\tau$ is a constant.     The estimation
procedure is a modification of Algorithm C.
\begin{proposition}   \label{prop:small_tau}  Suppose $T\to \infty,$ with
$\tau^o   \geq 1$ being a function of  $T$ such that $\tau^o  / T \to 0.$
Then $\ell_{\tau^o}$ can be recovered from knowledge of the
children of $\tau^o$  in $G_T$  with probability converging to one.
\end{proposition}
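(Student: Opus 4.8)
The plan is to reduce recovery of $\ell_{\tau^o}$ to estimating its growth exponent $\theta^*_{\ell_{\tau^o}}$ from the degree $Y_T$ of $\tau^o$ in $G_T$, exploiting that $\tau^o/T\to 0$ forces $\ln(T/\tau^o)\to\infty$, so the exponent is identified in the limit. Throughout I assume, as elsewhere in the paper, that the rate parameters $\theta_v^*$ are distinct (otherwise two labels are statistically indistinguishable), and set $\Delta=\min_{u\neq v}|\theta_u^*-\theta_v^*|>0$. Since the children set determines $Y_T=d_o(\tau^o)+|\partial\tau^o|$, the degree-based estimator
\[
\hat\ell_{\tau^o}=\arg\min_{v\in[r]}\left|\frac{\ln Y_T}{\ln(T/\tau^o)}-\theta_v^*\right|
\]
is a legitimate function of the observation; it is the number-of-children (DT-type) simplification of the Algorithm C likelihood, and Algorithm C, using strictly more information, does at least as well. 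It therefore suffices to prove that, conditioned on $\ell_{\tau^o}=v$,
\[
\frac{\ln Y_T}{\ln(T/\tau^o)}\to\theta_v^*\quad\text{in probability as }T\to\infty,\ \tau^o/T\to 0.
\]
For $\tau^o$ constant this already follows from Proposition \ref{prop:Y_large_time}, so the real content is the case of growing $\tau^o$, where an almost-sure statement cannot be transferred across a vertex that changes with $T$; I would instead establish an $L^2$ (variance) bound.

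To get this I would analyze $\ln Y_t$ directly through its Doob decomposition. With $\Delta_t=Y_{t+1}-Y_t\in\{0,\dots,m\}$ and history $\mathcal{F}_t$ (determining $Y_t$ and $\eta_t$), the model gives $\expect{\Delta_t\mid\mathcal{F}_t,\ell_{\tau^o}=v}=\bar\theta_{v,t}\,Y_t/t$ with $\bar\theta_{v,t}=\sum_u\rho_u\theta_{u,v,t}$. Using $0\le\ln(1+x)\le x$ and $|\ln(1+x)-x|\le x^2$, the log-degree drift is
\[
\expect{\ln Y_{t+1}-\ln Y_t\mid\mathcal{F}_t}=\frac{\bar\theta_{v,t}}{t}+O\!\left(\frac{1}{Y_t\,t}\right),
\]
and each conditional increment variance is at most $\expect{(\Delta_t/Y_t)^2\mid\mathcal{F}_t}=O(1/(Y_t\,t))$. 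Summing the drift over $t\in[\tau^o,T)$ and invoking $\eta_t\to\eta^*$ (Proposition \ref{prop:gobal_convergence}), hence $\bar\theta_{v,t}\to\theta_v^*$, the predictable part equals $\theta_v^*\ln(T/\tau^o)(1+o(1))$, while the martingale remainder has variance bounded by $\sum_t\expect{1/(Y_t\,t)}$. The crucial point is that $Y_t$ grows like $(t/\tau^o)^{\theta_v^*}$, so $\expect{1/Y_t}$ decays polynomially and this sum is $O(1)$ \emph{uniformly} in $\tau^o$. Dividing by $\ln(T/\tau^o)\to\infty$ and applying Chebyshev gives the convergence above, whence $\prob{\hat\ell_{\tau^o}\ne\ell_{\tau^o}}\le\prob{|\ln Y_T/\ln(T/\tau^o)-\theta_v^*|>\Delta/2}\to0$.

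The main obstacle is that the coupling machinery of Propositions \ref{prop:Y_couple}--\ref{prop:YJ_couple}, which would replace $Y$ by the exactly solvable Yule process $Z$ (for which $Z_se^{-\vartheta s}$ converges and $\ln Z_s/s\to\vartheta$), is valid only for $T/\tau$ bounded; here $T/\tau^o\to\infty$, so that reduction is unavailable and the exponent must be controlled through the direct estimates above. Within that analysis the delicate steps are: (i) showing the accumulated increment-variance stays $O(1)$ rather than growing with $\ln(T/\tau^o)$ — this hinges on the polynomial growth of $Y_t$ and needs care for $m=1$, where $\expect{1/Y_t}\approx\theta_v^*\ln(t/\tau^o)(\tau^o/t)^{\theta_v^*}$ yet the weighted sum $\sum_t\expect{1/Y_t}/t$ still converges; and (ii) controlling the random, time-varying rate $\bar\theta_{v,t}$ when $\tau^o$ is small or constant, which I would handle by splitting $[\tau^o,T]$ into a bounded initial block contributing $O_\epsilon(1)$ and a tail on which $\norm{\eta_t-\eta^*}\le\epsilon$ with high probability (using Proposition \ref{prop:gobal_convergence}), so that the rate fluctuation contributes at most $\epsilon\,\ln(T/\tau^o)$. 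Letting $\epsilon\downarrow0$ removes this term in the limit and yields the stated probability-one recovery.
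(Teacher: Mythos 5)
Your proposal is correct in outline, but it takes a genuinely different route from the paper's proof. The paper never analyzes $\ln Y_t$: it sidesteps the failure of the coupling machinery for $T/\tau^o \to \infty$ (the obstacle you correctly identify) by a reduction back to the bounded-ratio regime. For any $\epsilon>0$ it picks $\delta$ with $f_Z^{C}(\rho,\theta^*,m,\ln(1/\delta))<\epsilon$, takes $\tau$ with $\tau/T\to\delta$, and constructs a \emph{virtual} degree process $\breve Y^o$ on $[\tau,T]$, started at $\breve Y^o_\tau=m$, by randomly thinning the true increments of vertex $\tau^o$: $\calL(\breve Y^o_{t+1}-\breve Y^o_t \mid \breve Y^o_{[\tau,t]}, Y^o_{[\tau^o,T]}) = {\sf binom}\bigl(Y^o_{t+1}-Y^o_t, \breve Y^o_t/Y^o_t\bigr)$. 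Because the thinned increment rate is again proportional to $\breve Y^o_t/t$ with the same label-dependent constant, $\breve Y^o_{[\tau,T]}$ is within vanishing total variation distance of $\tilde Y_{[\tau,T]}(\theta^*_{\ell_{\tau^o}})$, so Algorithm C applied to the virtual process (via Proposition \ref{prop:error_scaling}) recovers $\ell_{\tau^o}$ with error below $\epsilon$. You instead prove a concentration statement, $\ln Y_T/\ln(T/\tau^o)\to\theta^*_v$ in probability uniformly in $\tau^o$, by a Doob decomposition of $\ln Y_t$, and classify by the nearest exponent. The paper's route buys brevity given its existing propositions, at the price of a \emph{randomized} estimator (the thinning uses external randomness) that also uses the children's arrival times; your route is self-contained, yields a deterministic estimator, and proves a strictly stronger conclusion: the \emph{number} of children alone (a DT-type statistic) already suffices when $\tau^o/T\to 0$.

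One soft spot should be repaired before your argument is complete. The step you yourself flag as crucial, $\sum_t \expect{1/(Y_t t)} = O(1)$ uniformly in $\tau^o$, is justified by ``$Y_t$ grows like $(t/\tau^o)^{\theta^*_v}$,'' but that growth is essentially what you are in the course of proving, and the negative-binomial computation you cite is for the idealized process, not the actual one; as written this is circular. The fix is cheap and non-circular: since $\eta_t$ is a probability vector and $\beta$ has strictly positive entries, $\theta_{u,v,t}\ge\theta_{\min}>0$ deterministically for all $t$, and a one-line supermartingale estimate such as
\begin{align*}
\expect{\frac{1}{Y_{t+1}} \,\bigg|\, \calF_t} \le \frac{1}{Y_t}\left(1-\frac{\theta_{\min}}{2mt}\right),
\end{align*}
iterated from $\tau^o$ to $t$, gives $\expect{1/Y_t} = O\bigl((\tau^o/t)^{\theta_{\min}/(2m)}\bigr)$, which is all the polynomial decay your variance sum needs. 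With that inserted (and the standard stopping/event argument you already sketch for the fluctuation of $\eta_t$, plus the distinctness of the $\theta^*_v$, which is the standing assumption of Section \ref{sec:perf_scaling} and is equally needed by the paper's proof), your argument goes through.
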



\begin{example}[Numerical comparison for a single community plus outliers]  \label{examp:single_comm_examp}
Numerical results are shown in Figure \ref{fig:Combined}  for $m=5, r=2$, $\rho=(0.5,0.5)$
and  $\beta=\left( \begin{array}{cc} b & 1 \\  1 & 1 \end{array} \right),$ with $b=4$,  corresponding
to a graph with a single community of vertices and outlier vertices.
For these parameters, $ \eta^*= (0.622839 , 0.377121)$ and $\theta^*=(0.598612, 0.337153).$
There is little difference between the error probabilities of Algorithms DT and C
for $t/T \geq 10^{-1}$ but the difference is quite large for $t/T \leq 10^{-2}.$
Thus, for the vertices arriving in the top one percent of time, Algorithm C, which uses
the identity of children of a vertex, substantially outperforms Algorithm DT, which uses only the
number of children.  The  Bhattacharyya upper bounds are not very tight but the ratio of upper
bounds  for DT and C is similar to the ratio $f_Z^{DT}/f_Z^{C}.$ 
The derivative of $f_Z^{DT}(\rho, \theta^*, m, \ln(T/t))$ with respect  to $t/T$ has jump discontinuities at values
 of $t/T$ such that the threshold in the MAP test changes from one integer to the next, which is
 noticeable in the plot for $t/T$ close to 1, where the thresholds are small.
 \begin{figure}[htb]
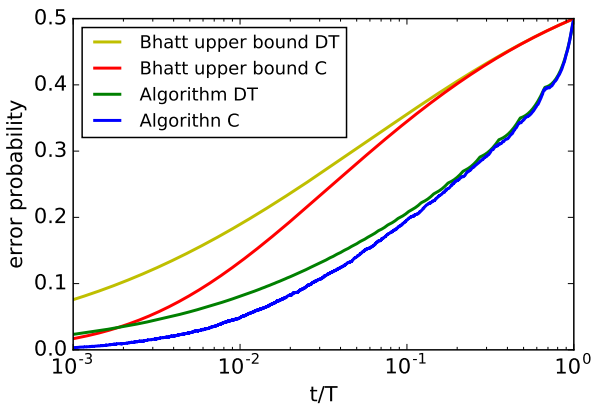

\post{m5beta4tau100}{8.5}
\caption{Semilog plot of Bhattacharyya upper bounds $\frac 1 2 \rho_{B,DT}$ and $\frac 1 2 \rho_{B,C},$
and functions $f_Z^{DT}$ and $f_Z^C,$ for an example with a single community
of vertices and outlier vertices.}
\label{fig:Combined}
\end{figure}
\end{example}

\section{Joint estimation of labels of a fixed set of vertices}   
\label{sec:joint_estimation}

The idea of algorithm $C$ is to estimate the label of a single vertex based
on the likelihood of  the observed set of children of the vertex, given the possible labels
of the vertex.   A natural extension, described in this section,
is to jointly estimate the labels of a small fixed set of vertices from the joint
likelihood of the children sets of the fixed set of vertices.
Given a vector of possible labels of the vertices in the set,  under the
approximation $\eta_t \equiv \eta^*$ for all $t,$  it is possible to compute
the joint likelihood of the children sets for
the vertices.    Maximizing over all label vectors gives an approximate
maximum likelihood estimate of the label vector.
We use the following notation.
\begin{itemize}
\item $V\subset \naturals,$  a finite set of vertices to be jointly classified
\item $b \in [r]^V$,  an assignment of labels for the vertices in $V$
\item $Y_t^\tau$ is the degree of vertex $\tau$ in $G_t.$
\item $A_t^\tau$ is the number of edges from vertex $t$ to vertex $\tau$
\item $A_t^{V^c}=m - \sum_{\tau \in V}   A_t^\tau$
\item Attachment of vertices in $[\bar t +1 , T]$ is observed, for some $\bar t$ and $T$
with $\max\{\tau : \tau \in V\} \leq  \bar t < T.$
\end{itemize}

\paragraph*{\bf Joint estimation algorithm}  The joint estimation algorithm
for estimating $(\ell_t : t \in V)$ is to calculate
\begin{align*}
\hat b_{\ML} = \arg\max_{b}  \ln P\left(\left(A_{[\bar t+1, T]}^{\tau}: \tau\in V \right)\bigg| b \right) ,
\end{align*}
using the the following approximate expression for the log likelihoods:
\begin{small}
\begin{align*}
&\ln P\left(\left(A_{[\bar t+1, T]}^{\tau}: \tau\in V \right) | b \right) \approx const  + \\
& \sum_{t=\bar t}^{T-1}   \ln \sum_{u\in [r]} \rho_u  
\left( \prod_{\tau\in V}  \left(   \frac{Y_t^{\tau} \theta^*_{u,b_{\tau}}}{mt}\right)^{A_{t+1}^{\tau}}  \right)
 \left( 1- \sum_{\tau'\in V}  \frac{Y_t^{\tau'} \theta^*_{u,b_{\tau'}}}{mt}\right)^{A_{t+1}^{V^c}},
\end{align*}
\end{small}

\noindent
where $const$ represents a constant not depending on $b$
(it is the sum of logarithms of multinomial coefficients) and the approximation
stems entirely from approximating $\eta_t$ by $\eta^*.$
We could calculate either the approximate ML estimator,
$\hat b_{\ML}$ by finding the arg max of the approximate log likelihood
with respect to $b$,  or $\hat b_{\MAP}$ in the same way but first adding the
log of the prior probability of $b.$   The complexity of the algorithm
is $\Theta(r^n nT),$  which is feasible for small values of $n.$

\begin{remark}    \label{rmk:joint_estimation}
By Proposition \ref{prop:YJ_couple}, if the set $V$ were to have a fixed
number of vertices, but the vertices depended on $T$ in such a way that
$V \subset [\delta T, T]$ for some fixed $\delta > 0$, then the sets of children
of the vertices would be asymptotically independent in the sense of total variation
distance.   Hence, in that limit, the joint estimation algorithm of this section would have
no better performance than Algorithm C.   That is why we envision using
the joint estimation algorithm for a fixed set of vertices as $T\to\infty.$

To see why joint estimation can help, consider two fixed vertices,
$\tau$ and $\tau'$  with $\ell_{\tau}=v$  and $\ell_{\tau'} = v'.$
By Proposition  \ref{prop:Y_large_time}
we expect the degrees of the two vertices at time $t$ to be on the order
of  $m(t/\tau)^{\theta^*_v}$ and  $m(t/\tau')^{\theta^*_{v'}}.$   Thus, if $m\geq 2$, the
probability of the two vertices having a common child at time $t$ to be
proportional to the product of their degrees divided  by $t^2$, or on the order
of  $(const) t^{ \theta^*_v + \theta^*_{v'} - 2}.$    Thus, if  $\theta^*_v + \theta^*_{v'} \geq 1$
we expect the number of common children of vertices $\tau$ and $\tau'$ in $G_T$ to converge to
infinity as $T\to\infty$,  with a constant multiplier that can thus be consistently
estimated as $T\to\infty.$   In particular, if $\theta^*_v = \theta^*_{v'} \geq 0.5$,  the
rate of growth of joint children would typically depend on whether the two vertices are
in the same community, providing consistent estimation whereas Algorithm C would fail.

\end{remark}

\section{The message passing algorithm}   \label{sec:message_passing}

In this section, we  describe how Alorithm C (the MAP rule given children) can be extended
to a message  passing algorithm.   We describe the algorithm for the case of $r\geq 2$ possible
labels for a general $r\times r$ matrix $\beta$ with positive entries, and  fixed $m \geq 1.$
{\em Throughout the remainder of this section,  let $(V, E)$ be a fixed instance of the random
graph, $(V_T, E_T),$ with known parameters $m, r, \beta, \rho,  t_o, G_{t_o},$ and $T.$}
 The message passing algorithm is run on this graph, with the aim of
calculating  $\Lambda_\tau$ for $1 \leq \tau \leq T,$ where for each $\tau$, 
$\Lambda_\tau$ is a log-likelihood vector:
\begin{align*}
\Lambda_\tau(v)  \triangleq \ln \prob{E_T = E \vert \ell_\tau = v} + const,   ~~~ v \in [r]
\end{align*}
where $const$ represents a constant that can depend on the graph but does not depend on
the vertex label $v.$    Then we can calculate the maximum likelihood (ML) and
 maximum a posteriori probability (MAP)
estimators of the label  of a vertex $\tau$ by
$\hat{\ell}_{\tau , ML}  =\arg\max_{v\in[r]}  \Lambda_\tau (v) $  and
$\hat{\ell}_{\tau , MAP}  =\arg\max_{v\in[r]}  \rho_v \Lambda_\tau (v).$

The messages in the message passing algorithm given below are also
log likelihood vectors, so  two values, $\nu, \nu' \in \reals^r$,  of such a message are considered
to be equivalent if $\nu - \nu'$ is proportional to the all ones vector in $\reals^r.$
For example, given a log likelihood vector $\nu$ there is a
{\em canonical equivalent log likelihood vector} $\nu'$
such that $\max_{u\in [r]}  \nu'(u) =0$, namely,  
$\nu'$ defined by $\nu'(u) = \nu(u) - \max_{u'\in[r]} \nu(u').$       This fact is useful for
numerical computation; in our computer code we stored all log likelihood vectors in
their equivalent canonical  forms.    A log likelihood vector is said to be a {\em null log likelihood vector}
if it is a constant multiple of the all one vector.   In other words, a null log likelihood vector
is equivalent to the zero vector.
 In the special case $r=2$, $\Lambda_{\tau}(1) - \Lambda_{\tau}(2)$
and  $\nu(1) -\nu(2)$ represent log likelihood ratios,  and the algorithm below can easily
be restated using real valued messages that have interpretations as log likelihood ratios
instead of using length two log likelihood vectors.

A complete specification of a message passing algorithm includes specification
of the following elements:
\begin{enumerate}
\item initial messages
\item mappings from messages received at a vertex to messages sent by the vertex
\item timing of message passing and termination criterion
\item mappings from messages received at a vertex to the output log likelihood vector of the vertex
\end{enumerate}
About element 3).   A natural choice for the timing of message passing is synchronous.
For synchronous timing, all messages to be sent along each edge of the graph $G_T$
(excluding edges in the initial graph $G_{t_o}$)  are computed.  Based on those,
log likelihood vectors are computed for each vertex and the next round of messages
to be sent is computed.  An alternative timing of messages is to alternate between
updating only messages from children to parents and updating only messages from parents
to children.   For termination, we stopped the  message passing when the sum of Euclidean norms
of differences in the canonical log likelihood vectors was below a threshold.

In this section we specify the equations for elements 1), 2), and 4).

 Given vertices $\tau$ and $\tau_0$, we say
$\tau$ is a child of $\tau_0$, and $\tau_0$ is a parent of $\tau$, if $\tau \geq \max\{\tau_0, t_o\}+1,$
and there is an edge from $\tau$ to $\tau_0.$    It is assumed that the
known initial graph $G_{t_o}$ is arbitrary and carries no information
about vertex labels.   Thus,  for the inference problem at hand,  the edges
in $G_{t_o}$ are not relevant beyond the degrees that they
imply for the vertices in $G_{t_o}.$
 Let $\partial \tau$ denote the children of $\tau$ in $G_T$ and $\wp \tau$
 the parents of $\tau.$    So  $\wp \tau = \emptyset$ if $\tau \leq t_o$ and
 $\partial \tau \subset \{t_o+1,  \ldots, T\}.$
 Let $\nu_{\tau \rightarrow \tau_0}$ denote a message passed from
 child to parent, and $\mu_{\tau_0 \rightarrow \tau}$ denote a message passed from
 parent to child.

Let $g^{cp} : \reals^r \mapsto \reals^r$ and  $g^{pc} : \reals^r \mapsto \reals^r$
be defined as follows (here ``cp" denotes child to parent, and ``pc" denotes parent to child)
\begin{align*}
g^{cp}(\nu)(v) &=    \ln   \left(  \sum_{u\in [r]}    \eexp^{\nu(u)}   \rho_u \theta_{u, v}^*/\theta^*_v  \right)     
         ~~\mbox{for } \nu \in \reals^r      \\
g^{pc}(\mu)(v)  &=   \ln \left( \sum_{v'\in[r]}  \theta_{v,v'}^*    \eexp^{\mu(v')} \rho_{v'}/\theta^*_{v'}  \right)
          ~~\mbox{for } \mu \in \reals^r ,
\end{align*}
where $\theta^*_{u,v}$ and  $\theta^*_u$ are defined in Section \ref{sec:emp_degree_dist}.
For convenience, we  repeat the expression in \eqref{eq:LambdaC_approx} for the approximate
log likelood vector based on observation of children:       
\begin{align}
&\lambda^C_\tau (v) = |\partial \tau|\ln \theta^*_v
+ \theta_v^* \left( d_0(\tau)    \ln\frac{\tau\vee t_o}{T} 
   + \sum_{t \in \partial \tau} \ln \frac{t}{T}\right),  \label{eq:lambda_eq} 
 \end{align}
 where $\tau \vee t_o = \max\{ \tau, t_o\}$ and
 $d_0(\tau)$ is the initial degree of vertex $\tau$,
 defined to be the degree of $\tau$ in $G_{t_o}$
 if $\tau \leq t_o$ and $d_0(\tau)=m$ otherwise.
The message passing equations are given as follows.
See Appendix \ref{sec:derivation_of_MP} for a derivation.
\clearpage
\begin{align}
 &   \nu_{\tau \rightarrow \tau_0}=  \lambda^C_\tau  + \sum_{t\in \partial \tau}
  \tilde  \nu_{t\to\tau}
+  \sum_{\tau_1 \in \wp \tau \backslash \{\tau_0\} }
\tilde \mu_{\tau_1 \to \tau}
          \label{eq:child_to_parent_a} \\ \nonumber  \\
&  \mu_{\tau_0 \rightarrow \tau}
=  \lambda^C_{\tau_0} + \sum_{t\in \partial \tau_0\backslash \{\tau\}}   \tilde \nu_{t\to\tau_0}
 +  \sum_{\tau_1 \in \wp \tau_0} 
 \tilde \mu_{\tau_1 \to \tau_0} 
   \label{eq:parent_to_child_a} \\ \nonumber \\
&  \tilde \nu_{\tau \rightarrow \tau_0} = g^{cp} (  \nu_{\tau \rightarrow \tau_0}  )  
     \label{eq:child_to_parent_b}    \\
&  \tilde \mu_{\tau_0  \to \tau}  = g^{pc} ( \mu_{\tau_0  \to \tau}  ) 
      \label{eq:parent_to_child_b}    \\  \nonumber \\
&   \Lambda_\tau =  \lambda^C_\tau + \sum_{t\in \partial \tau} 
       \tilde \nu_{t\to\tau}   +  
   \sum_{\tau_0 \in \wp \tau}   \tilde \mu_{\tau_0 \to \tau} ,
  \label{eq:combining}  
\end{align}
with the initial conditions:
\begin{align}  \label{eq:initialize_tilde}
 \tilde \nu_{\tau \to \tau_0} =  0  ~~~~
 \tilde \mu_{\tau_0 \to \tau} = 0,
\end{align}
or equivalently
\begin{align}  \label{eq:initialize}
~~~\nu_{\tau \to \tau_0} =   \lambda^C_{\tau} ~~
\mu_{\tau_0 \to \tau} =  \lambda^C_{\tau_0} .
\end{align}

In \eqref{eq:child_to_parent_a} - \eqref{eq:combining}  messages with the
letter $\nu$ are sent from child to parent, and messages with letter $\mu$ are
sent from parent to child.   The $r$ coordinates of a message without  a tilde
represent likelihoods  given possible labels of the
sending vertex, while the $r$ coordinates of a message with a tilde  represent
likelihoods given possible labels of the receiving vertex.
The equations could be written entirely using only the $\nu$'s
and $\mu$'s by applying  \eqref{eq:child_to_parent_b} and
\eqref{eq:parent_to_child_b}  within   \eqref{eq:child_to_parent_a} and
\eqref{eq:parent_to_child_a}.    Or the equations could be written entirely
using only the 
$\tilde \nu$'s
and $\tilde \mu$'s by applying  \eqref{eq:child_to_parent_a} and
\eqref{eq:parent_to_child_a}  within   \eqref{eq:child_to_parent_b} and
\eqref{eq:parent_to_child_b}.  

The edges in the initial graph $G_{t_o}$ are not relevant in the algorithm beyond
the fact they determine the degrees of the vertices in $G_{t_o}.$
The message passing equations are written as if there are no parallel edges in $(V,E).$
While the fraction of edges that are parallel to other edges will be small for large $T$, they are
permitted.  The convention used in the message passing algorithm is that $\partial \tau$
and $\wp \tau$ are to be
considered as multisets, so that if a vertex appears with some multiplicity in one of those
sets, then the corresponding term in the summations will be appearing the corresponding
number of times.  

\begin{remark}
The  {\em fitness only} case of the preferential attachment model with
communities occurs if either of the following two equivalent conditions hold:
\begin{enumerate}
\item  $\beta$ has rank one
\item  $\theta^*_{u,v} = \theta^*_v$ for all $u.$ 
\end{enumerate}
Since the distribution of the preferential attachment model
with communities does not change if a row of $\beta$ is multiplied
by a positive constant, for the fitness only case of the model it
could be assumed that the rows of $\beta$ are identical.

In the fitness only case of the model, 
both $g^{pc}$  and $g^{cp} $ map to null log likelihood vectors for
any choice of their arguments,  so all messages generated in the
message passing algorithm are null log likelihood vectors.
Consequently, if $\beta$ has rank one then the
message passing algorithm converges in one iteration
and it coincides with algorithm $C.$
\end{remark}

\section{Monte Carlo simulation results}
\label{sec:simulations}

The simulation results reported in this paper were
computed for random graphs with $m=5$, 
$\rho_u = 1/r$ for $u\in [r]$, and two vertices in the
initial graph (i.e. $t_o=2$) with degree $2m$  each.   The
specific choice of initial edges is not relevant, but
there could for example be $2m$ parallel edges between
the two initial  vertices, or for example each of the two vertices
could have $m$ self loops.

\subsection{Single community} 

 The performance of the message passing algorithm is described for the case
of a single community plus outliers, described in Example \ref{examp:single_comm_examp}.
Through numerical experimentation, we found the following timing of message passing
works well.  We take the initial values of all $\tilde \mu$ and $\tilde \nu$ messages to be zero.
For the timing of message passing we run two phases.  In the first phase the messages from children
to parents (i.e. the $\tilde \nu$'s) are  repeatedly updated, while messages from parents to children
are held fixed.   In the second phase the messages $\tilde \nu$ are held fixed and the
messages from parents to children are repeatedly updated until the messages
converge.   In both phases the messages converge in a finite number of iterations.   After both
phases are completed, the (approximate) likelihood ratios are computed.   Numerical results
are shown in Figure \ref{fig:m5beta4_compare}.   The message passing algorithm significantly
outperforms the other two algorithms.   Another version of algorithm with about the same
performance is to use synchronous scheduling of all messages, while applying the message
balancing method described in Section \ref{sec:symmetric_mult_commun}.
\begin{figure}[htb]
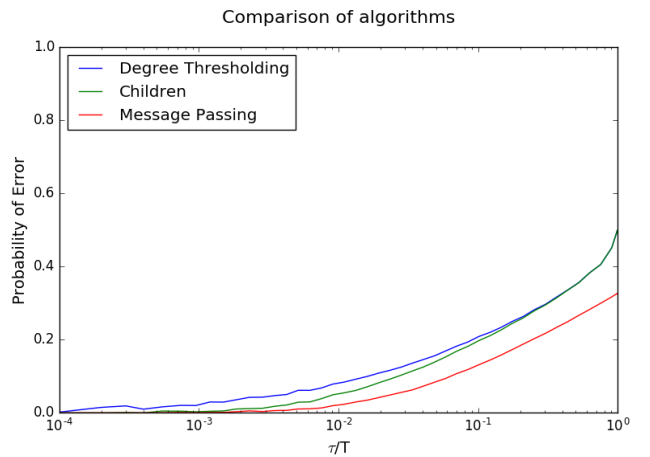

\post{m5beta4_compare}{8.5}
\caption{Semilog plot of error probability vs. vertex index for algorithms DT,C, and MP for
single community example with $m=5$, $\rho=(0.5,0.5)$, and  $b=4.$  The average
over 1000 runs of MP is shown.}
\label{fig:m5beta4_compare}
\end{figure}

\subsection{Symmetric multiple community graphs} \label{sec:symmetric_mult_commun}

To model the situation that each vertex is in one of $r$ communities with equal probability, with equal
affinities within each community, let $\rho_v = 1/r$ for $v\in [r]$ and, for some $b > 1,$
\begin{align*}
\beta_{u,v}
 =\left\{ \begin{array}{cl}
b & \mbox{if } u=v  \\
 1 & \mbox{else}
\end{array} \right. .
\end{align*}
Then  $\eta^*=\rho$,  $\theta_{u,u}^*=\frac{br}{2(b +r-1)}$ and, for $u \neq v,$
 $\theta_{u,v}^*=\frac r {2(b +r-1)}.$    Also,  $\theta_v^* = 0.5$ for all $v.$
 Note that $\lambda^C_\tau$ is a null log likelihood vector for all $\tau.$
Up to equivalence of log likelihood vectors (i.e. ignoring addition of constant multiples
of the all one vector)  $g^{cp}(\cdot) = g^{pc}(\cdot) = g(\cdot),$
where
\begin{align*}
g(\nu)(v)
 = \ln \left( b  \eexp^{\nu(v)} + \sum_{v' \in [r]\backslash\{v\}} \eexp^{\nu(v')} \right).
\end{align*}
In the special case $r=2$, the messages can be taken to be scalars representing
log likelihood ratios,  with $g$ taking the form
$g(\mu) \triangleq  \ln \frac {b  \eexp^\mu +1 }{\eexp^\mu + b}$.

The functions $g^{cp}$ and $g^{pc}$ map
null log likelihood vectors to null log likelihood vectors, so all messages equal to
null log likelihood vectors is a fixed point of the message passing
equations \eqref{eq:child_to_parent_a} - \eqref{eq:parent_to_child_b}.
Community detection is apparently rather difficult for this model in case $m=1$ because
$G_T$ is a tree and for the symmetric two or more community graphs the local neighborhood of
a vertex does not indicate which community the vertex is in, at least under the idealized
assumption $\eta_t \equiv \eta.$    We restrict attention to the case $m\geq 2.$  In that
case,  we can apply the joint estimation algorithm given in Section \ref{sec:joint_estimation}
to identify the labels of a small number of vertices, which we call {\em seeds}
to help initialize the message passing algorithm.       Accordingly, for the message passing
algorithm, we assume that the labels of the seed vertices are correctly  revealed to the algorithm.   
 Accordingly, the $\mu$ and $\nu$ messages sent by a seed vertex
 $\tau$ with $\ell_{\tau} = u$ would all be the same, and be given by:
 \begin{align*}
\nu_{\tau \to \tau_0}(v) =\left\{ \begin{array}{cl}
0 & \mbox{if } v=u    \\
-\infty & \mbox{else}
\end{array} \right.
\end{align*}
All other messages are initially set to zero. At every iteration,
 all the messages (both $\mu$ and $\nu$) are updated synchronously. 

One other technique, we call {\em message balancing}, was employed to get the
algorithm to give good performance.    Intuitively, the idea is to balance
the total amount of negativity about each community within the messages.
The following description of message balancing assumes the
messages are stored in their equivalent canonical form, described
near the beginning of Section \ref{sec:message_passing}.
At the beginning of each iteration, the $\tilde \mu$ messages
are scaled by a positive vector $f$:   $\tilde \mu_v \to f_v \tilde \mu_v.$
The scale vector $f$ is chosen for the iteration so that the sum of all the scaled $\tilde \mu$
messages is a null log likelihood vector (i.e. multiple of all ones vector) and the
sum of the messages is preserved.   The $\tilde \mu$ messages are similarly
scaled.   Empirically we found similar performance if only the $\tilde \mu$
messages were scaled, or if only the $\tilde \mu$ messages sent by seeds
were scaled.

We first present numerical results for an example with two communities
for  $T = 10,000 , m = 5,$ and $b = 4.$   We first describe the performance
of the joint estimation algorithm for estimating the labels of the first ten
vertices, taken to be seed vertices, and then describe the performance of the
message passing algorithm assuming the seed vertices are correctly classified.
The performance of the joint estimation algorithm is shown in
Figure \ref{fig:joint_est_2048samples_t20}.   Two different methods of
determining which ones of vertices 2 through 10 are in the same
community as vertex 1 were used.   The first method, called ``partial data" in the figure,
estimates the label of each vertex $\tau$ with $2\leq \tau \leq 10$
 by jointly estimating labels for the set of two vertices $V=\{1,\tau\},$
while the second method, called ``complete data" in the figure, is to jointly infer
the labels of vertices in $V=\{1, 2,  \ldots , 10\}.$   The value $\bar t=20$ was
used.   It was observed that the last term in the likelihood expression is sometimes
negative (a result of the approximation $\eta_t \equiv \eta$) for some values of
$t$ and $b.$    That was only observed in the simulations for some values of
$t$ with $t\leq 30.$   If for some $t$ a negative likelihood was observed for
some $b$, then the likelihood term for that $t$ was dropped for all vectors
$b.$   The performance  gives good evidence that for $n$ fixed, the labels of the
first $n$ vertices can be inferred with error probability converging to zero as $T\to\infty,$
for the symmetric two community model.
 \begin{figure}[htb]
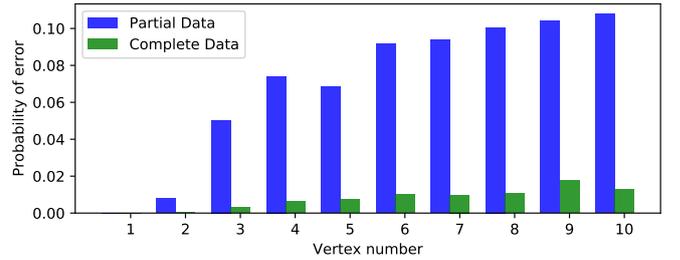

\post{joint_est_2048samples_t20}{9}
\caption{Error probabilities for determining whether vertices 1 and $\tau$ are in the
same community, for $2\leq \tau \leq 10,$   assuming symmetric two community
model with parameters $b=4$, $m=5$, $T=10,000.$
Error probabilities are shown for (a) estimation based on joint likelihoods given labels
for two vertices at a time (i.e. vertices  1 and $\tau$ with $2\leq \tau\leq 10),$ and
(b) for estimation based on approximate maximum likelihood
estimate of labels of vertices 1 through 10 simultaneously.   Error probabilities are estimated
by fraction of errors in 2048 simulations of graph, for estimation
based on children with time of arrival $t$ in the interval $[20, 10^4].$}
\label{fig:joint_est_2048samples_t20}
\end{figure}

Next Figure \ref{fig:sym_com_m5b4_10known} shows the performance of the
message passing algorithm run on 100 graphs
of size $T = 10,000$, with parameters $m = 5, b = 4$ with  two communities with
ten seed vertices.  The message passing algorithm is run until  the norm of the difference in
the vector of log-likelihoods is less than 1.
The probability of error curve plotted for each random graph is averaged over bins of width
increasing with  time. The ends of the bin intervals are chosen as a geometric progression
with factor 1.2.   Although there were only ten seed vertices, the algorithm nearly always correctly
classified the first 100 vertices, and also most of the first 1000 vertices.
\begin{figure}[htb]
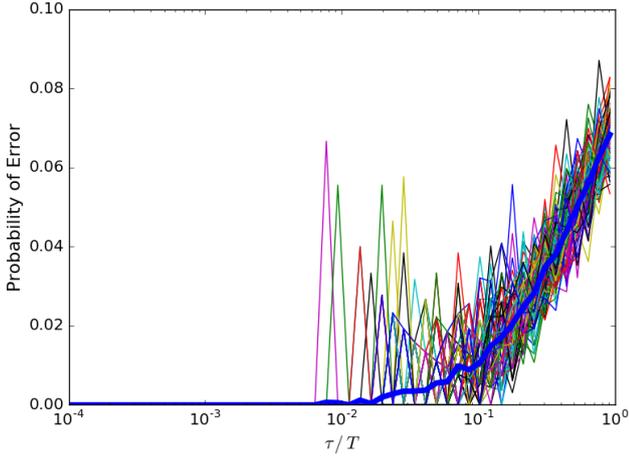

\post{Gsym_twocomm_b4_MP_10seeds}{8.5}
\caption{Semilog plot of error probability vs. vertex index for algorithm MP for
symmetric two ($r=2$) community graphs with   $m=5$ and  $b=4.$ 
The algorithm was given labels of the first ten vertices and message balancing was
used.   Smoothed results for 100 graphs are shown, with the average of them
represented by the thicker blue curve.
}
\label{fig:sym_com_m5b4_10known}
\end{figure}

Performance of the message passing algorithm for four communities with 20 seed vertices is shown in
Figure  \ref{fig:Gsym_fourcomm_b4_MP_20seeds}.   The result of running on 100 sample graphs is
shown.  The algorithm had poor performance for one sample labeled graph, for which one of the
communities was not represented  among the seeds.  In other simulations we have seen the algorithm
fail occasionally even if all communities are represented among the seeds.
\begin{figure}[htb]
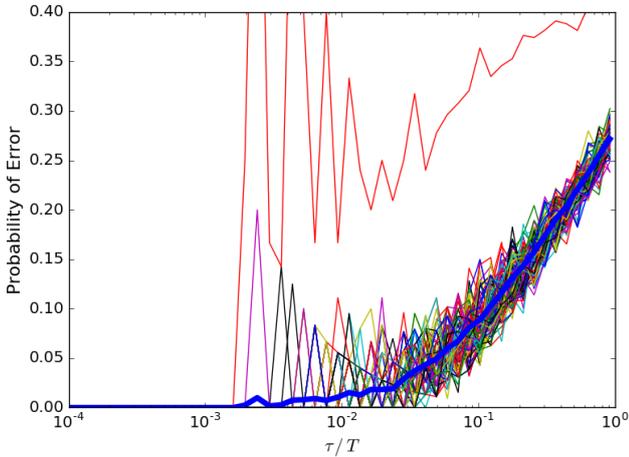

\post{Gsym_fourcomm_b4_MP_20seeds}{8.5}
\caption{Semilog plot of error probability vs. vertex index for algorithm MP for
symmetric four ($r=4$) community graphs with   $m=5$ and  $b=4.$  The performance
for MP run on 100 independently generated graphs is shown.
The algorithm was given labels of the first twenty vertices and message balancing was
used. Smoothed results for 100 graphs are shown, with the average of them
represented by the thicker blue curve.}
\label{fig:Gsym_fourcomm_b4_MP_20seeds}
\end{figure} 

\subsection{Three communities with symmetry between two of them}

Consider three communities 1,2,3 such that each vertex is equally likely to be
in any of the three communities.     Vertices in community 1  have a growth
rate distinct from the growth rates of the other two communities, and the other
two communities are statistically identical.   We again begin with the joint estimation
algorithm, because identifying seed vertices can help the message passing algorithm
distinguish vertices in the two statistically identical communities.   To display the
performance of the joint estimation algorithm we need to adjust for the fact
that the assignment of labels 2 vs. 3 to the two symmetric communities is
arbitrary.   Thus, before computing errors, we see whether swapping the 2's and 3's
of the output label vector reduces the number of errors.   If yes, the 2's and 3's
of the output vector are swapped.   If there is a tie, with probability 0.5,
the 2's and 3's are all swapped.   Then, for each seed vertex, we say a {\em big
error} is made if the true label is 1 and the estimate is not 1, or vice versa.
We say a {\em small error} is made if both the true label and estimated label
are in $\{2,3\}$ but they are unequal.    The event that the label of a seed
vertex is in error is the disjoint union of a big error event and small error event.
The message passing algorithm was run using synchronous message timing
with 15 seed vertices and message balancing.

Two different $\beta$ matrices were tried, which we list with their corresponding
vectors $(\theta^*_v)$
\begin{align*} 
&\beta^I = \left(
\begin{array}{ccc}  
2 & 1 & 1 \\ 1 & 4 & 1 \\ 1 & 1 & 4
\end{array}
 \right)  ~~~~~~~~~~~~~~
\beta^{II} = \left( \begin{array}{ccc}  
4 & 1 & 1 \\ 2 & 4 & 1 \\ 2 & 1 & 4
\end{array} \right) \\
&(\theta^*)^{I}= (0.420,   0.532,   0.532]) ~~
(\theta^*)^{II}= (0.590, 0.438, 0.438) 
\end{align*}
For version I of the model,
Figure \ref{fig:G1L2H_joint_5seeds1000iter} displays the performance
of the joint estimation algorithm and Figure \ref{fig:G1L2H_MP_15seeds}
displays the performance of the message passing algorithm for 15 seed
vertices.   Proposition \ref{prop:Y_large_time} implies that as $T\to\infty$ the probability
of big errors converges to zero.     The probability of small errors is
apparently small for this model and algorithm.

\begin{figure}[htb]
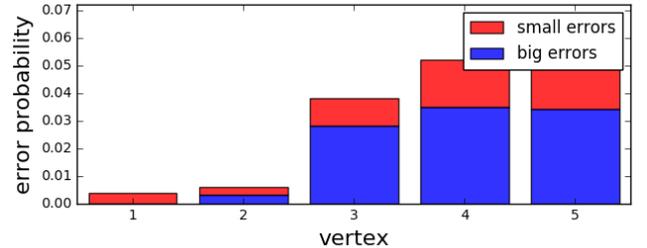

\post{G1L2H_joint_5seeds1000iter}{8.5}
\caption{Big errors and small errors for joint estimation of the labels
of first five vertices for version I of the three communities example, estimated
using 1000 sample graphs.  At least one label is incorrect in 0.139 fraction of
graphs.}
\label{fig:G1L2H_joint_5seeds1000iter}
\end{figure} 

\begin{figure}[htb]
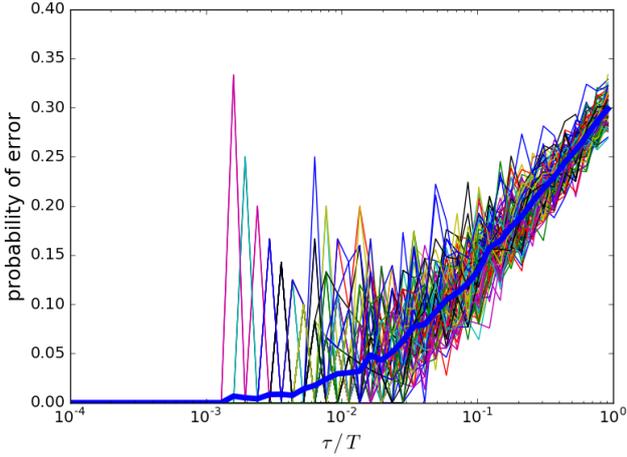

\post{G1L2H_MP_15seeds}{8.5}
\caption{Error probabilities by vertex for version I of the
three communities example, for message passing with
15 seed vertices.  Smoothed results for 100 graphs are
shown, with the average of them represented by the thicker
blue curve.}
\label{fig:G1L2H_MP_15seeds}
\end{figure} 

For version II of the model,
Figure \ref{fig:G1H2L_joint_5seeds1000iter} displays the performance
of the joint estimation algorithm and Figure \ref{fig:G1H2L_MP_15seeds}
displays the performance of the message passing algorithm for 15 seed
vertices.    There are many more small errors for version II of the model than
for version I, which is explained by the fact that for version II, the two
equal sized communities that can't be distinguished by growth rates
alone (because $\theta_2^*=\theta_3^*$) have much smaller degrees
than vertices in the two equal sized communities of version I.
In fact, we conjecture that the probability of small errors does
not converge to zero for the joint estimation algorithm for version II.
The reason is that the mean number of common children of two
vertices that have labels in $\{2,3\}$ is stochastically bounded above
as $T\to\infty,$  because $\theta^*_v + \theta^*_{v'} < 1$ for $v, v' \in \{2,3\}.$
See Remark   \ref{rmk:joint_estimation}.

\begin{figure}[htb]
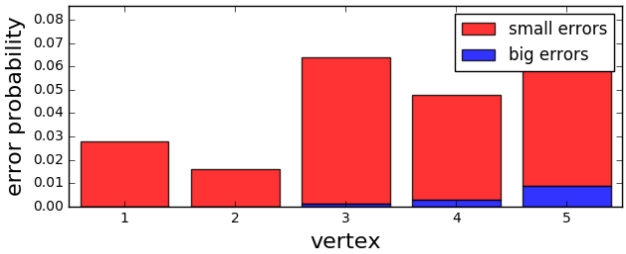

\post{G1H2L_joint_5seeds1000iter}{8.5}
\caption{Big errors and small errors for joint estimation of the labels
of first five vertices for version II of the three communities example ,
estimated using 1000 sample graphs.}
\label{fig:G1H2L_joint_5seeds1000iter}
\end{figure} 

\begin{figure}[htb]
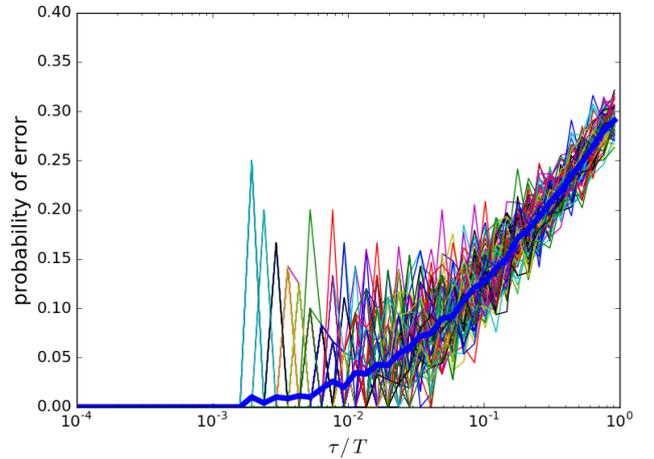

\post{G1H2L_MP_15seeds}{8.5}
\caption{Error probabilities by vertex for version II of the
three communities example, for message passing with
15 seed vertices.}
\label{fig:G1H2L_MP_15seeds}
\end{figure} 

\section{conclusion}

The message passing algorithm, together with seeding by the joint inference
algorithm and balancing method, appear to work well in Monte Carlo simulations.
The use of seeds takes advantage of the large degrees of a few vertices.
The performance of the joint inference algorithm is related to the large time degree evolution
of one or more fixed vertices $\tau$ such that $T/\tau \to \infty$ as $T\to \infty,$
 whereas the derivation of the message passing algorithm is based on the joint degree
 evolution for one or more vertices $\tau$ such that $\tau \to \infty$ and $T/\tau$ remains
 bounded.   As version II of the three community example points out, it may not always
 be possible to consistently recover a fixed set of vertex labels as $T\to\infty$, while it
 is possible if the parameters $\theta^*_v: v\in [r]$ are distinct.
\clearpage
\appendices

\section{Proof of Proposition  \ref{prop:gobal_convergence}}   \label{sec:global_convergence}

Simple algebra yields
\begin{align}   \label{eq:eta_algebra}
\eta_{t+1} -  \eta_t  =  \frac{C_{t+1}-C_t -2m \eta_t}{2m(t+1)} .
\end{align}

The conditional distribution of  $C_{t+1}-C_t$ given $C_t$ and given $\ell_{t+1}=u$ can be represented using
a random variable with a multinomial distribution as
$$
C_{t+1}-C_t    \overset{d.}{=}
  me_u  + \mbox{multinom}\left( m ,   \left(
    \frac{\beta_{uv}\eta_{tv}}{\sum_{v'} \beta_{uv'}  \eta_{tv'} }
     : v\in [r]  \right)    \right)  ,
$$
where $e_u$ is the unit length $r$ vector with $u^{th}$ coordinate equal to one.
Therefore,
\begin{align}
E[C_{t+1,v}-C_{t,v}| C_t ] & =   m \rho_v  +  \sum_u   m\rho_u \left(  \frac{\beta_{uv}\eta_{tv}}{\sum_{v'} \beta_{uv'}  \eta_{tv'} }  \right)
\label{eq.Cdrift_gen}
 \end{align}
Combining with \eqref{eq:eta_algebra} yields that
\begin{align}
E[\eta_{t+1}-\eta_{t}| C_t ]  = \frac{1}{2(t+1)} h(\eta_t).
 \end{align}
 This gives the representation
\begin{align}
\eta_{t+1}  = \eta_t + \frac{1}{2(t+1)}\left[  h(\eta_t) + M_t   \right]   \label{eq:martingale_rep}
\end{align}
where
\begin{align}
M_t &  = C_{t+1}-C_t - E[C_{t+1}-C_t | C_t ]  .  \label{eq:martingale_def}
\end{align}
Note that $M$ is a bounded martingale difference sequence; $\prob{\norm{M_t}_1 \leq 4m}=1$ for all $t.$ 
Also, the Jacobian matrix of $h$ is uniformly bounded over the domain of probability vectors
so $h$ is Lipschitz continuous. 
In view of \eqref{eq:martingale_rep} and these properties,
 the theory of stochastic approximation implies the possible limit points of
$\eta_t$ is the set of stable equilibrium points of the
ode  $\dot \eta = h(\eta)$  \cite[Chapter 2, Theorem 2]{borkar2008stochastic} .

Since $\sum_v h_v(\eta) \equiv 0,$  the ode $\dot \eta = h(\eta)$ can be restricted to the space of probability vectors.
A Lyapunov function is used in \cite{Jordan13} to show that the
ode $\dot \eta = h(\eta)$ restricted to the space of probability vectors
 has a unique globally stable equilibrium point, which we denote by $\eta^*.$ 
 
 \section{Proof of Proposition \ref{prop:Y_couple} }    \label{app:proof_of_Y_couple}

\begin{remark}   \label{rmk:on_variation_distance}
 (i) We shall use extensively the connection between total variation distance and coupling.
Given two discrete probability distributions $a$ and $b$ on the same discrete set, the total
variation distance between $a$ and $b$ is defined by
$d_{TV}(a,b)=\frac 1 2 \sum_i  |a_i - b_i| .$    If  $A$ and $B$
are random variables, not necessarily on the same probability
space, we write $d_{TV}(A,B)$ to represent $d_{TV}(\calL(A), \calL(B))$,
which is the total variation distance between the probability distributions
of $A$ and $B.$
Clearly $d_{TV}$ is a distance metric; in particular
it satisfies the triangle inequality.
An operational meaning is
$d_{TV}(a,b)  = \min \prob{A\neq B},$ where the minimum is taken
over all pairs of jointly distributed random variables $(A,B)$
such that $A$ has distribution $a$ and $B$ has distribution $b.$
In other words, $d_{TV}(a,b)$ is the minimum failure probability when
one attempts to couple a random variable with distribution $a$ to a random
variable with distribution $b.$  \\
(ii)  The distance $d_{TV}(a,b)$ can be expressed as
\begin{align}   \label{eq:TV_alternate}
d_{TV}(a,b)= \sum_i  ( b_i - a_i )_+
\end{align}
Expression \eqref{eq:TV_alternate} is especially useful if $b_i > a_i$ for only a
small set of indices $i.$   For example, if $a$ and $b$ are distributions on $\integers_+$
such that $b$ is a  Bernoulli probability distribution and $a_0 \geq b_0$,
then  $d_{TV}(a,b)= b_1 - a_1.$
\end{remark}

The proofs of  \eqref{eq:Y_tildeY_couple} and  \eqref{eq:tildeY_checkY_couple} are similar. Since the
proof of   \eqref{eq:Y_tildeY_couple} depends slightly on  \eqref{eq:tildeY_checkY_couple}, we prove
\eqref{eq:tildeY_checkY_couple} first.

 Fix $t \geq \tau$ and $y\geq m.$
The conditional distribution of the increment $\check Y_{t+1} - \check Y_t$
of the Markov process $\check Y$ given $\check Y_t=y$ can be identified as follows:
\begin{align*}
&\calL\left(\check Y_{t+1} - \check Y_t \bigg|   \check Y_t=y\right)    \\
&= \calL\left(Z_{\ln((t+1)/\tau)}  - Z_{\ln(t/\tau)} \bigg|   Z_{\ln t}=y\right)  \\
&= \calL\left(Z_{\ln\left(1+ \frac 1 t \right)}  \bigg|  Z_0 =y\right)  \\
& = \calL \left( {\sf negbinom}\left( y  , \left( 1 + \frac{1}{t} \right)^{-\vartheta} \right) -y \right)
\end{align*}
Hence, the following lemma is relevant, were $\epsilon$ represents $\frac 1 t.$
\begin{lemma}  \label{lmm:YZ_cont_coupling_h}
  Let $y$ be a positive integer and $\vartheta, \epsilon > 0.$  Then
\begin{align}   \label{eq:YZ_cont_coupling_h}
&d_{TV}\left(    {\sf negbinom}\left( y  , \left( 1 + \epsilon \right)^{-\vartheta}  \right) -y  , \Ber( \vartheta y \epsilon)                   \right) \nonumber \\
&\leq \frac{\epsilon^2} 2  \left( \vartheta y + \vartheta^2 (2y+1)y  \right).
\end{align}
\end{lemma}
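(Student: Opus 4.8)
The plan is to recognize the shifted negative binomial as a sum of $y$ i.i.d.\ nonnegative increments and then approximate this sum by a single Bernoulli trial, controlling the error through an explicit coupling together with the total-variation characterization recalled in Remark \ref{rmk:on_variation_distance}.

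First I would set $p = (1+\epsilon)^{-\vartheta}$ and use that ${\sf negbinom}(y,p)$ is the law of $\sum_{i=1}^y G_i$, where $G_1,\dots,G_y$ are i.i.d.\ geometric on $\{1,2,\dots\}$ with success probability $p$. Writing $X_i = G_i - 1$ and $X = \sum_{i=1}^y X_i$, the left-hand random variable in \eqref{eq:YZ_cont_coupling_h} is exactly $X$, with $\prob{X_i = k} = p(1-p)^k$ for $k \ge 0$; in particular $\prob{X=0} = p^y$ and $\prob{X=1} = y p^y (1-p)$. To compare $X$ with $\Ber(q)$, $q \triangleq \vartheta y \epsilon$, I introduce the intermediate variable $B' = \min(X,1)$, which is a deterministic function of $X$ and has the $\Ber(1 - p^y)$ distribution. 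Since $B'$ agrees with $X$ except on $\{X \ge 2\}$, the coupling meaning of total variation gives $d_{TV}(X, B') \le \prob{X \ge 2}$, while the distance between the two Bernoulli laws $B'$ and $\Ber(q)$ is simply $|(1 - p^y) - q|$. The triangle inequality then reduces the lemma to bounding these two terms.

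For the first term I would union-bound over the i.i.d.\ representation: the event $\{X \ge 2\}$ is contained in the union of the events that two distinct indices each have $X_i \ge 1$ and that some single index has $X_i \ge 2$, so $\prob{X \ge 2} \le \binom{y}{2}(1-p)^2 + y(1-p)^2 = \frac{y(y+1)}{2}(1-p)^2$, using $\prob{X_i \ge 1} = 1-p$ and $\prob{X_i \ge 2} = (1-p)^2$. Since $p = (1+\epsilon)^{-\vartheta} \ge e^{-\vartheta\epsilon} \ge 1 - \vartheta\epsilon$, we have $1 - p \le \vartheta\epsilon$, hence $\prob{X \ge 2} \le \frac{y(y+1)}{2}\vartheta^2\epsilon^2$. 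For the second term, writing $c = \vartheta y$ and $f(\epsilon) = 1 - (1+\epsilon)^{-c} - c\epsilon = (1-p^y) - q$, I would check $f(0) = f'(0) = 0$ and $f''(\epsilon) = -c(c+1)(1+\epsilon)^{-c-2}$, so $|f''(\epsilon)| \le c(c+1)$ for $\epsilon \ge 0$; a second-order Taylor expansion with remainder then gives $|f(\epsilon)| \le \frac{\epsilon^2}{2}\vartheta y(\vartheta y + 1)$.

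Adding the two bounds yields $\frac{\epsilon^2}{2}\big(\vartheta^2 y(y+1) + \vartheta y(\vartheta y + 1)\big) = \frac{\epsilon^2}{2}\big(\vartheta y + \vartheta^2(2y+1)y\big)$, which is exactly the claimed right-hand side. The individual steps are routine, so the only real subtlety — and the step I would watch most carefully — is keeping the two crude-looking estimates tight enough that the constants combine to the stated expression: the union bound for $\prob{X \ge 2}$ must retain both the ``two increments'' and the ``one large increment'' contributions, and the Taylor remainder must be controlled uniformly in $\epsilon \ge 0$ rather than only to leading order. As an alternative to the intermediate Bernoulli, one could start directly from the identity $d_{TV}(a,b) = \sum_i (b_i - a_i)_+$ in Remark \ref{rmk:on_variation_distance}(ii), which collapses to the indices $i \in \{0,1\}$ because $\Ber(q)$ places no mass above $1$, leading to the same two estimates.
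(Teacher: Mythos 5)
Your proposal is correct, and it arrives at exactly the stated constant, but it takes a genuinely different route from the paper. The paper works directly with the identity $d_{TV}(a,b)=\sum_i (b_i-a_i)_+$ from Remark \ref{rmk:on_variation_distance}(ii): it first checks $(1+\epsilon)^{-\vartheta y}\geq 1-\vartheta y\epsilon$, so the shifted negative binomial dominates the Bernoulli at $0$ and the total variation distance is \emph{exactly} the gap in the masses at $1$, namely $\delta(\epsilon)=\vartheta y\epsilon - y(1+\epsilon)^{-\vartheta y}\left(1-(1+\epsilon)^{-\vartheta}\right)$; it then observes $\delta(0)=\delta'(0)=0$, bounds $\delta''$ uniformly by $y\left(\vartheta+\vartheta^2(2y+1)\right)$, and integrates twice. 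You instead exploit the i.i.d.\ structure: writing the shifted negative binomial as $X=\sum_{i=1}^y X_i$ with shifted geometric increments, inserting the intermediate variable $\min(X,1)\sim\Ber(1-p^y)$, and splitting the error by the triangle inequality into a coupling term $\prob{X\geq 2}$ (handled by a union bound over pairs of positive increments and single large increments) and a Bernoulli-parameter mismatch $|(1-p^y)-\vartheta y\epsilon|$ (handled by a second-order Taylor bound). Both arguments are complete and tight enough to give the identical bound $\frac{\epsilon^2}{2}\left(\vartheta y+\vartheta^2(2y+1)y\right)$: the paper's is more compact, reducing everything to calculus on one explicit function, while yours is more modular and makes the two sources of error probabilistically transparent; your decomposition is also closer in spirit to the multivariate selector-distribution estimates of Lemma \ref{lmm:coupling_lemma_YJ}, so it would adapt more directly to vector-valued or non-identically-distributed settings. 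All your intermediate computations check out, including $\prob{X\geq 2}\leq\frac{y(y+1)}{2}(1-p)^2$, the inequality $1-p\leq\vartheta\epsilon$, and the algebra $\vartheta^2 y(y+1)+\vartheta y(\vartheta y+1)=\vartheta y+\vartheta^2(2y+1)y$.
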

\begin{proof}
The shifted negative binomial distribution assigns more probability mass to 0 than the Bernoulli distribution:
\begin{align*}
\prob{   {\sf negbinom}\left( y  , \left( 1 + \epsilon \right)^{-\vartheta}  \right) -y =  0     } \geq 1- \vartheta y \epsilon,
\end{align*}
or equivalently,
\begin{align*}
( 1 + \epsilon )^{-\vartheta y} \geq 1- \vartheta y \epsilon,
\end{align*}
as is readily proved by considering the derivative of each side with respect to $\epsilon$ for $\epsilon > 0.$
Therefore, by Remark \ref{rmk:on_variation_distance}(ii),  the total variation distance to be bounded is
given by the difference in probability mass at 1 for the two distributions.  In other words, if $\delta$
denotes the variational distance on the lefthand side of  \eqref{eq:YZ_cont_coupling_h}, then
\begin{align*}
\delta = \vartheta  y \epsilon   -  y (1+ \epsilon)^{- \vartheta y} \left( 1 - (1+\epsilon) ^{-\vartheta} \right).
\end{align*}
Note that $\delta = 0$ for $\epsilon = 0.$ Dividing through by $y$
and differentiating with respect to $\epsilon$ we find
\begin{align*}
\frac{ d  \delta}{y d\epsilon}  = \vartheta    +   \vartheta y(1+ \epsilon)^{- \vartheta y-1} 
-   \vartheta (y+1) (1+ \epsilon)^{-\vartheta (y+1) -1},
\end{align*}
and in particular the derivative at $\epsilon = 0$ is also zero.   Differentiating again yields:
\begin{align*}
&\frac{ d^2  \delta}{y (d\epsilon)^2}  = -  \vartheta  y(\vartheta y +1)(1+ \epsilon)^{- \vartheta y-2}   \\
&~~~~~~~~~~   +   \vartheta (y+1)  (\vartheta (y+1) +1 )(1+ \epsilon)^{-\vartheta (y+1)-2} \\
& \overset{(a)}{\leq} -\vartheta  y (\vartheta y +1 ) + \vartheta (y+1)(\vartheta (y+1) +1 )  \\
&= \vartheta + \vartheta^2 (2y+1)
\end{align*}
where to get inequality (a) we first multiply the lefthand side by $(1+\epsilon)^{\theta y +2}$
(thus increasing it) and then multiplying the second term on the lefthand side by $(1+\epsilon)^{2\theta},$
thus increasing the positive term further.  The lemma follows by twice integrating with respect to $\epsilon.$
\end{proof}

\begin{proof}[Proof of \eqref{eq:tildeY_checkY_couple}]
Let $n$ be a positive integer with $n\geq m.$  We appeal to Lemma \ref{lmm:YZ_cont_coupling_h}
to show that  $\tilde Y_{[\tau, T]}$ and $\check Y_{[\tau, T]}$ can be
coupled (i.e. constructed on the same probability space) such that the probability of
coupling failure before both processes reach state $n$ is bounded as follows:
\begin{align*}
\prob{ \tilde Y_{[\tau, T]} \wedge n \neq  \check Y_{[\tau, T]}\wedge n } 
& \leq  \sum_{t=\tau}^T  \frac{  \vartheta n  + \vartheta^2  (2n+1)n    }{2t^2}   \\
& \leq \frac{  \vartheta n  +  \vartheta^2  (2n+1)n    }{\tau-1} .
\end{align*}
The construction is done sequentially in time, starting with the process $\tilde Y$, letting $\check Y_{\tau} =m,$
and enlarging the probability space $\tilde Y$ is defined on in order to construct $\check Y_t$ for $\tau+1 \leq t \leq T$
on the same probability space.
For each time $t$ in the range $\tau \leq t \leq T-1,$  once the random variable $\check Y_t$ has been constructed, if the
coupling has been successful so far  (i.e. $\tilde Y_{[\tau, t]} = \check Y_{[\tau, t]}$) and if $\check Y_t \leq n-1$, we appeal to
Lemma  \ref{lmm:YZ_cont_coupling_h}  with $y=\check Y_t$ to show that the coupling can be continued to work
at time $t+1,$ with coupling error bounded above by Lemma \ref{lmm:YZ_cont_coupling_h}.

For this same pair of processes, it follows that
\begin{align*}
\prob{ \tilde Y_{[\tau, T]} \neq  \check Y_{[\tau, T]} }   
\leq \frac{  \vartheta n  + \vartheta^2 (2n+1)n    }{\tau-1}  + \prob{  \check Y_{T}  \geq n }
\end{align*}
Since $\check Y_{T}=Z_{\ln(T/\tau)}$,  the distribution of $\check Y_{T}$
is ${\sf negbinom}\left(m, \left( \frac  {\tau} T  \right)^{\vartheta}\right)$,
and the set of such distributions is tight under the limiting regime of the proposition.  In other words, 
$\lim_{n\to\infty}  \lim\sup_{\tau, T \to \infty}  \prob{  \check Y_{T}  \geq n } = 0$
under the assumption $T/\tau$ is bounded. The statement  \eqref{eq:tildeY_checkY_couple} follows.
\end{proof}

The proof of \eqref{eq:Y_tildeY_couple},  given next, is based on the following lemma.

\begin{lemma}   \label{lmm:coupling_lemma_Y}
Given a positive integer $y,$  $(\theta_{u,v}) \in \reals_{>0}^{r\times r}$, $\rho,$
and $(\theta^*_v: v\in [r]) \in \reals_{>0}^{r}$,
let $\theta_v = \sum_u \rho_u \theta_{u,v}.$ 
Suppose $t\geq 1$  and $v\in [r]$ such that  $\frac{ \theta_{u, v}y}  {mt} \leq 1$ for all $u$,
$\frac{ \theta_{v}y}  t \leq 1,$ and $\frac{ \theta^*_{v}y}  t \leq 1.$ 
 Then
\begin{align*}
&d_{TV}\left(  \sum_u \rho_u  {\sf binom}\left(m, \frac{ \theta_{u, v} y}  {mt}  \right)  ,   \Ber\left(\frac{ \theta^*_v y} t \right)  \right) \\
&~~~~~~~~~ \leq 
  \frac{\theta^2_{\max}y^2}{t^2}   +  \frac {|\theta_v -\theta^*_v|y} t ,
\end{align*}
where $\theta_{\max} =\max_{u,v} \theta_{u,v}.$
\end{lemma}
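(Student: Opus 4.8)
The plan is to introduce an intermediate Bernoulli distribution whose parameter is chosen to match the mean of the mixture, and then split the bound into two pieces via the triangle inequality for $d_{TV}$. Since $\theta_v = \sum_u \rho_u \theta_{u,v}$, the mixture $\sum_u \rho_u {\sf binom}(m, \frac{\theta_{u,v}y}{mt})$ has mean $\frac{\theta_v y}{t}$, which is exactly the mean of $\Ber(\frac{\theta_v y}{t})$ (the hypotheses $\frac{\theta_{u,v}y}{mt}\le 1$, $\frac{\theta_v y}{t}\le 1$, and $\frac{\theta^*_v y}{t}\le 1$ guarantee all three distributions are well defined). I would write
\[
d_{TV}\left( \textstyle\sum_u \rho_u {\sf binom}(m, \tfrac{\theta_{u,v}y}{mt}),\ \Ber(\tfrac{\theta^*_v y}{t})\right) \le d_{TV}\left(\textstyle\sum_u \rho_u {\sf binom}(m, \tfrac{\theta_{u,v}y}{mt}),\ \Ber(\tfrac{\theta_v y}{t})\right) + d_{TV}\left(\Ber(\tfrac{\theta_v y}{t}),\ \Ber(\tfrac{\theta^*_v y}{t})\right).
\]
The second term is handled immediately: for Bernoulli distributions $d_{TV}(\Ber(p),\Ber(q)) = |p-q|$, so it equals $\frac{|\theta_v - \theta^*_v|y}{t}$, which is precisely the second term in the claimed bound.

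For the first term I would invoke the alternate expression $d_{TV}(a,b) = \sum_i (b_i - a_i)_+$ from Remark \ref{rmk:on_variation_distance}(ii), taking $a$ to be the mixture and $b = \Ber(\frac{\theta_v y}{t})$. The key observation is that the mixture assigns at least as much mass to $0$ as the Bernoulli does: by Bernoulli's inequality $(1-x)^m \ge 1 - mx$, each binomial obeys $(1 - \frac{\theta_{u,v}y}{mt})^m \ge 1 - \frac{\theta_{u,v}y}{t}$, and averaging over $u$ gives $a_0 \ge 1 - \frac{\theta_v y}{t} = b_0$. Since $b_i = 0$ for $i\ge 2$, the only index contributing a positive term to $\sum_i (b_i - a_i)_+$ is $i=1$, so the first term reduces to $(b_1 - a_1)_+$.

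It then remains to compute $a_1 = \sum_u \rho_u \frac{\theta_{u,v}y}{t}(1 - \frac{\theta_{u,v}y}{mt})^{m-1}$ and $b_1 = \frac{\theta_v y}{t}$, whence
\[
b_1 - a_1 = \sum_u \rho_u \frac{\theta_{u,v}y}{t}\left(1 - \left(1 - \tfrac{\theta_{u,v}y}{mt}\right)^{m-1}\right) \ge 0,
\]
and I would apply Bernoulli's inequality once more in the form $1 - (1-x)^{m-1} \le (m-1)x$ with $x = \frac{\theta_{u,v}y}{mt}$. This yields $b_1 - a_1 \le \frac{m-1}{m}\sum_u \rho_u \frac{\theta_{u,v}^2 y^2}{t^2} \le \frac{\theta_{\max}^2 y^2}{t^2}$, which is the first term in the claimed bound, completing the proof. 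The step that requires the most care is the coordinatewise comparison at $0$, since it is exactly what collapses the total variation distance to the single coordinate $i=1$; once the mean-matching and this sign comparison are in place, everything else is a routine application of Bernoulli's inequality.
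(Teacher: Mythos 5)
Your proposal is correct and follows essentially the same route as the paper's proof: the same triangle-inequality split through the intermediate $\Ber\left(\frac{\theta_v y}{t}\right)$, the same comparison of masses at $0$ to invoke the identity $d_{TV}(a,b)=\sum_i (b_i-a_i)_+$ reducing everything to the coordinate $i=1$, and the same Bernoulli-inequality bound on $b_1-a_1$. The only cosmetic difference is that you make explicit the factor $\frac{m-1}{m}$ and the vanishing of $(b_i-a_i)_+$ for $i\geq 2$, both of which the paper leaves implicit.
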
 

\begin{proof}
By the triangle inequality,
\begin{align}
d_{TV}\left(  \sum_u \rho_u  {\sf binom}\left(m, \frac{  \theta_{u, v} y}  {mt}  \right)  , 
                \Ber\left(\frac{ \theta^*_v y} t \right)  \right)  ~~~~\nonumber \\
\leq d_{TV}\left(  \sum_u \rho_u  {\sf binom}\left(m, \frac{  \theta_{u, v}y}  {mt} \right)  ,  
                   \Ber\left(\frac{ \theta_v y} t \right)  \right)  \label{eq:term1}  \\
~~~+ d_{TV}\left(  \Ber\left(\frac{ \theta_v y} t \right)   ,  \Ber\left(\frac{ \theta^*_v y} t \right)  \right)  \label{eq:term2}
\end{align}
To bound the term on line \eqref{eq:term1}, we appeal to Remark \ref{rmk:on_variation_distance}(ii).  
Note that the probability masses at 0 for the two distributions inside $d_{TV}$ are ordered as:
\begin{align*}
&\sum_u \rho_u  {\sf binom}\left(m, \frac{  \theta_{u, v} y}  {mt}  \right)  \bigg|_0 = \sum_u \rho_u \left(   1- \frac{\theta_{u,v} y} {mt} \right)^m \\
&~~~~~~\geq  \sum_u \rho_u \left(   1- \frac{\theta_{u,v}y} t \right) = 1- \frac{\theta_v y} t  = \Ber\left(\frac{ \theta_v y} t \right)\bigg|_0
\end{align*}
So the term in \eqref{eq:term1} is the difference of the probability masses at 1:
\begin{align*}
& \frac{\theta_v y} t  -  \sum_u \rho_u  {\sf binom}\left(m, \frac{  \theta_{u, v} y}  {mt} \right)  \bigg|_1    \\
&= \frac{\theta_v y} t   -    \sum_u \rho_u \frac{ \theta_{u,v} y} t   \left(   1- \frac{\theta_{u,v} y}{mt} \right)^{m-1} \\
&=    \sum_u \rho_u \frac{\theta_{u,v} y} t \left( 1 -   \left(   1- \frac{\theta_{u,v} y} {mt} \right)^{m-1} \right)  \\
& \leq \sum_u \rho_u  \left( \frac{ \theta_{u,v} y} t   \right)^2  \leq  \frac{  \theta_{\max}^2 y^2} {t^2}
\end{align*}
The term on line \eqref{eq:term2} is equal to  $\frac {|\theta_v -\theta^*_v|y} t .$
\end{proof}

\begin{proof}[Proof of \eqref{eq:Y_tildeY_couple}]
Let $n$ be a positive integer with $n\geq m.$  
Since the entries of  $\beta$ are assumed to be strictly positive, there is a finite
value $\theta_{\max}$ such that  $\theta_{u,v,t} \leq \theta_{\max}$ for all $t.$ 
Given $\epsilon > 0$ let $F$ be the event  defined by
$F = \{ |\theta_{v,t} -\theta^*_v| > \epsilon~ \mbox{\rm for some} ~ t\geq \tau \}.$
We appeal to Lemma \ref{lmm:coupling_lemma_Y}
to show that  $Y_{[\tau, T]}$ and $\tilde Y_{[\tau, T]}$ can be
coupled (i.e. constructed on the same probability space) such that the probability of
coupling failure before both processes reach state $n$ is bounded as follows:
\begin{align*}
&\prob{ Y_{[\tau, T]} \wedge n \neq  \tilde Y_{[\tau, T]}\wedge n } \\
&  \leq \prob{F} +
 \sum_{t=\tau}^{T-1}  \left(  \frac{ \theta^2_{\max}n^2}{t^2}   +   \frac {|\theta_{v,t} -\theta^*_v| n} t  \right) \\
&\leq  \mathbb{P}(F)  +  \frac{ \theta^2_{\max}n^2}{\tau-1}  +   \epsilon n \ln \frac{T}{\tau-1}
\end{align*}
For this same pair of processes, it follows that
\begin{align*}
&\prob{ Y_{[\tau, T]} \neq  \tilde Y_{[\tau, T]} }    & \\
&\leq  \prob{F} +  \frac{  \vartheta n  + \vartheta^2 (2n+1)n    }{\tau-1}  + \prob{  \tilde  Y_{T}  \geq n }
\end{align*}
By Proposition 2  (almost sure convergence of $\eta_t \to \eta^*$) $\mathbb{P}(F) \to 0$ as $\tau\to\infty.$
By  \eqref{eq:tildeY_checkY_couple},  already proved,
$$
\lim_{n\to\infty}  \lim\sup_{\tau, T \to \infty}  
\bigg| \prob{  \tilde Y_{T}  \geq n }  - \prob{  \check Y_{T}  \geq n } \bigg| =0,
$$
so, just as for $\check Y_T$,  the set of distributions of $\tilde Y_T$ is tight under the limiting regime
of the proposition.  In other words, 
$\lim_{n\to\infty}  \lim\sup_{\tau, T \to \infty}  \prob{  \tilde Y_{T}  \geq n } = 0$
under the assumption $T/\tau$ is bounded. The statement  \eqref{eq:Y_tildeY_couple} follows.
\end{proof}

\section{Proof of Proposition \ref{prop:YJ_couple}}   \label{app:proof_YJ_couple}
The proof is similar to the proof of  Proposition \ref{prop:Y_couple}.
Before proving the proposition we introduce some notation and present
a lemma that is used to bound the coupling failure probability at a given
step in the construction.
 A {\em subprobability vector} for a set $[d]= \{1, \ldots , d\}$ is a $d$-tuple of the form
$\underline a = (a_i : i\in [d])$ such that $a_i\geq 0$ for $i\in [d]$ and $\sum_{i\in [d]} a_i \leq 1.$
Let $r$ and $J$ be positive integers.    Suppose $\rho$ is a probability distribution on $[J].$  
Suppose $\underline p,  \underline p',$ and $\underline q_{u,\cdot}$ for all $u\in [r]$  are
subprobability vectors for $[J].$
\begin{itemize}
\item   Let $\sel (\underline p )$ represent the {\em selector} distribution on $\integers^J_+$
with probability mass $p_j$ on the vector $e_j$, and probability
mass $1-\sum_j p_j$ on the zero vector.

\item Let $\sel^{*m}(\underline p)$ denote the distribution of the sum of $m$ independent
random vectors, each with the distribution $\sel (\underline p).$   In other words,
$\sel^{*m}(\underline p)$ is the $m$-fold convolution of $\sel(\underline p).$   

\item  Let $\sum_u \rho_u \sel^{*m}(\underline q_{u,\cdot})$ denote the distribution
that is a mixture of the distributions $\sel^{*m}(\underline q_{u,\cdot})$ as $u$ varies
with selection probability distribution $\rho.$

\item Let $\otimes_{j=1}^J \Ber(p_j)$  denote the distribution of a random $J$ vector
with independent coordinates, with coordinate $j$ having distribution $\Ber(p_j).$
\end{itemize}

\begin{lemma}  \label{lmm:coupling_lemma_YJ}
 Suppose $\rho$ is a probability distribution on $[J].$  
Suppose $\underline p,  \underline p',$ and $\underline q_{u,\cdot}$ for all $u\in [r]$  are
subprobability  vectors for $[J].$
\begin{align}
&d_{TV}\left(   \otimes_{j=1}^J \Ber(p_j)  , \sel (\underline p) \right)  \leq  \left( \sum_{j \in [J]} p_j \right)^2   \label{eq:Ber_select} \\
&d_{TV}( \sel(\underline p), \sel(\underline p')) \leq \sum_{j\in [J]} | p_j -p'_j |   \label{eq:two_select}    \\
&d_{TV}\left( \sel\left(\sum_u \rho_u \underline q_{u,\cdot} \right),
 \sum_{u\in [r]}  \rho_u \sel^{*m}\left(\frac 1 m \underline q_{u,\cdot}\right)  \right)  \nonumber  \\
&~~~\leq  \sum_{u\in [r]} \rho_u \left( \sum_{j\in [J]} q_{u,j} \right)^2   \label{eq:mixed_select}
\end{align}
\end{lemma}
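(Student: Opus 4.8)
The plan is to derive all three bounds from the alternative representation of total variation distance recorded in Remark \ref{rmk:on_variation_distance}(ii), namely $d_{TV}(a,b)=\sum_i (b_i-a_i)_+$, in each case choosing which distribution plays the role of $a$ and which plays $b$ so that the positive part $(b-a)_+$ is supported on a small set of outcomes. The unifying observation is that every distribution in the lemma concentrates on the zero vector and the unit vectors $e_1,\ldots,e_J$, with any leftover mass on vectors having two or more ones; once we check that the more ``spread out'' distribution (a product or a convolution) places \emph{at least} as much mass on the zero vector and \emph{no more} mass on each $e_j$ than the plain selector, the discrepancy collapses onto the unit vectors and is easy to sum.

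For \eqref{eq:Ber_select} I would set $a=\otimes_{j=1}^J \Ber(p_j)$ and $b=\sel(\underline p)$. At $e_j$ we have $a(e_j)=p_j\prod_{j'\neq j}(1-p_{j'})\le p_j=b(e_j)$; at the zero vector the Weierstrass inequality gives $a(0)=\prod_j(1-p_j)\ge 1-\sum_j p_j=b(0)$; and $b$ vanishes on vectors with at least two ones. Hence $(b-a)_+$ lives on $\{e_1,\ldots,e_J\}$ and $d_{TV}(a,b)=\sum_j p_j\bigl(1-\prod_{j'\neq j}(1-p_{j'})\bigr)\le \sum_j p_j\sum_{j'}p_{j'}=\bigl(\sum_j p_j\bigr)^2$, the inequality being the union bound $1-\prod_i(1-x_i)\le\sum_i x_i$. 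The bound \eqref{eq:two_select} is then immediate, since $\sel(\underline p)$ and $\sel(\underline p')$ differ only at the $e_j$ (by $p_j-p'_j$) and at the zero vector (by $\sum_j(p'_j-p_j)$), so $d_{TV}=\tfrac12\bigl(\sum_j|p_j-p'_j|+|\sum_j(p_j-p'_j)|\bigr)\le\sum_j|p_j-p'_j|$ by the triangle inequality.

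For \eqref{eq:mixed_select} I would first use that the selector is linear in its subprobability argument: comparing masses at each $e_j$ and at the zero vector gives $\sel\bigl(\sum_u\rho_u\underline q_{u,\cdot}\bigr)=\sum_u\rho_u\sel(\underline q_{u,\cdot})$. Both sides of \eqref{eq:mixed_select} are therefore mixtures with the common weights $\rho$, so by convexity of $d_{TV}$ under such mixtures it suffices to bound $d_{TV}\bigl(\sel(\underline q_{u,\cdot}),\sel^{*m}(\tfrac1m\underline q_{u,\cdot})\bigr)$ for each $u$ and take the $\rho$-average. Writing $s=\sum_j q_{u,j}\le 1$, the $m$-fold convolution places mass $(1-\tfrac sm)^m\ge 1-s$ on the zero vector and mass $q_{u,j}(1-\tfrac sm)^{m-1}\le q_{u,j}$ on $e_j$, so again $(b-a)_+$ is carried by the unit vectors and the distance equals $\sum_j q_{u,j}\bigl(1-(1-\tfrac sm)^{m-1}\bigr)\le\sum_j q_{u,j}\,(m-1)\tfrac sm\le s^2$, where I use Bernoulli's inequality $1-(1-x)^{m-1}\le(m-1)x$. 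Averaging over $u$ yields $\sum_u\rho_u\bigl(\sum_j q_{u,j}\bigr)^2$.

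The only genuinely delicate step is the sign bookkeeping that confines $(b-a)_+$ to the unit vectors in \eqref{eq:Ber_select} and \eqref{eq:mixed_select}: it rests on the Weierstrass and Bernoulli inequalities showing that the product (respectively convolution) distribution puts at least as much mass on the zero vector as the plain selector while putting no more on any $e_j$. Everything else—linearity of $\sel(\cdot)$ in its argument, convexity of $d_{TV}$ under a fixed mixing distribution, and the elementary union bound—is routine, so I expect the verification of these monotonicity facts to be where care is needed.
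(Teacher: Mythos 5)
Your proof is correct and follows essentially the same route as the paper's: all three bounds rest on the representation $d_{TV}(a,b)=\sum_i (b_i-a_i)_+$ from Remark \ref{rmk:on_variation_distance}(ii), with the Weierstrass/Bernoulli mass orderings at the zero vector confining the positive part to the unit vectors $e_1,\ldots,e_J$. The only cosmetic difference is in \eqref{eq:mixed_select}, where you invoke linearity of $\sel(\cdot)$ in its argument and convexity of $d_{TV}$ under mixtures to reduce to a per-$u$ comparison, while the paper carries the $\rho$-average through the same computation directly.
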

\begin{proof}
Inequality \eqref{eq:two_select} follows easily from the definitions.   The proofs of the other two inequalities rely on
Remark \ref{rmk:on_variation_distance}(ii).    Note that the distribution $\sel (\underline p)$ is supported on $J+1$
points in $\integers^J,$  namely, $\underline 0, e_1, \ldots , e_J.$  Also,
$$
 \otimes_{j=1}^J \Ber(p_j) \bigg|_{\underline 0} = \prod_{j\in [J]} (1-p_j)  \geq 1 -\sum_{j\in [J]} p_j = \sel(p)\bigg|_{\underline 0}.
$$
Thus, by Remark \ref{rmk:on_variation_distance}(ii),
\begin{align*}
&d_{TV}\left(   \otimes_{j=1}^J \Ber(p_j)  , \sel (\underline p) \right)   =
\sum_{j \in [J]}  p_j \left[  1 - \prod_{j'\in [J], j'\neq j} (1-p_{j'}) \right]  \\
& \leq  \sum_{j \in [J]}  p_j \sum_{j'\in [J], j'\neq j} p_{j'}  \leq  \left( \sum_{j\in [J]} p_j \right)^2,
\end{align*}
which establishes \eqref{eq:Ber_select}.   The proof of \eqref{eq:mixed_select}, given next, is similar.
The probability masses
the two distributions on the lefthand side of \eqref{eq:mixed_select} place at zero is ordered as follows:
\begin{align*}
\sum_{u\in [r]} \rho_u  \left( 1 -\frac 1 m  \sum_{j\in [J]} q_{u,j}\right)^m  
  \geq 1 - m \sum_{u\in [r]} \rho_u \sum_{j\in [J]} q_{u,j}.
\end{align*}
Therefore, by Remark \ref{rmk:on_variation_distance}(ii),
\begin{align*}
&d_{TV}\left(\sum_{u\in [r]}  \rho_u \sel^{*m}\left(\frac 1 m \underline q_{u,\cdot}\right) , \sel\left(\sum_u \rho_u \underline q_{u,\cdot} \right)\right)  \\
&~~~=  \sum_{j \in [J]} \sum_{u\in[r]} \rho_uq_{u,j}  \left[  1 - \left( 1 - \frac 1 m \sum_{j'\in [J]: j\neq j} q_{u,j'}\right)^{m-1}\right]  \\
&~~~ \leq  \sum_{j\in [J]} \sum_{u\in[r]} \rho_uq_{u,j}  \left[  \sum_{j' \in [J]} q_{u,j'}  \right]   \\
& ~~~\leq  \sum_{u\in [r]} \rho_u \left( \sum_{j\in [J]} q_{u,j} \right)^2,
\end{align*}
which establishes \eqref{eq:mixed_select}.
\end{proof}

\begin{lemma}  \label{lmm:combined}
Suppose the conditions of Lemma \ref{lmm:coupling_lemma_YJ} hold, and, in addition, $\sum_u \rho_u q_{u,j} = p_j'$
for all $j\in [J].$   Then
\begin{align}   \label{eq:combined}
&d_{TV}\left(   \otimes_{j=1}^J \Ber(p_j)  ,  \sum_{u\in [r]}  \rho_u \sel^{*m}\left(\frac 1 m \underline q_{u,\cdot}\right)  \right)   \\
& \leq   \left( \sum_{j \in [J]} p_j \right)^2    +   \sum_{j\in [J]} | p_j -p'_j |   +
 \sum_{u\in [r]} \rho_u \left( \sum_{j\in [J]} q_{u,j} \right)^2  .  \nonumber
\end{align}
\end{lemma}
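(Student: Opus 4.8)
\textbf{The plan} is to bound the left-hand side of \eqref{eq:combined} by inserting two intermediate distributions and applying the triangle inequality for $d_{TV}$ (which holds since $d_{TV}$ is a metric, as noted in Remark \ref{rmk:on_variation_distance}), arranging matters so that each of the three resulting terms is controlled by exactly one of the inequalities \eqref{eq:Ber_select}, \eqref{eq:two_select}, \eqref{eq:mixed_select} already proved in Lemma \ref{lmm:coupling_lemma_YJ}. Concretely, I would interpolate through $\sel(\underline p)$ and $\sel(\underline p')$, writing
\begin{align*}
&d_{TV}\left( \otimes_{j=1}^J \Ber(p_j),\ \sum_{u\in[r]} \rho_u \sel^{*m}\left(\tfrac 1 m \underline q_{u,\cdot}\right)\right) \\
&\leq d_{TV}\left(\otimes_{j=1}^J \Ber(p_j),\ \sel(\underline p)\right) + d_{TV}\left(\sel(\underline p),\ \sel(\underline p')\right) \\
&\quad + d_{TV}\left(\sel(\underline p'),\ \sum_{u\in[r]} \rho_u \sel^{*m}\left(\tfrac 1 m \underline q_{u,\cdot}\right)\right).
\end{align*}
The first term on the right is at most $\left(\sum_{j} p_j\right)^2$ by \eqref{eq:Ber_select}, and the second is at most $\sum_{j} |p_j - p'_j|$ by \eqref{eq:two_select}.

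The only step that invokes the extra hypothesis of the lemma is the third term. By assumption $p'_j = \sum_u \rho_u q_{u,j}$ for every $j\in[J]$, so as subprobability vectors for $[J]$ we have $\underline p' = \sum_u \rho_u \underline q_{u,\cdot}$, and hence $\sel(\underline p') = \sel\!\left(\sum_u \rho_u \underline q_{u,\cdot}\right)$. With this identification, \eqref{eq:mixed_select} applies verbatim and bounds the third term by $\sum_{u\in[r]} \rho_u \left(\sum_{j} q_{u,j}\right)^2$. Summing the three bounds produces the right-hand side of \eqref{eq:combined}.

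There is no genuine obstacle here: the lemma is a bookkeeping step, and all of the analytic work — the comparisons of probability masses at $\underline 0$ and the appeals to Remark \ref{rmk:on_variation_distance}(ii) — has already been carried out in Lemma \ref{lmm:coupling_lemma_YJ}. The one thing to get right is the placement of the middle distribution: choosing $\sel(\underline p')$ (rather than some other interpolant) is precisely what allows the hypothesis $\sum_u \rho_u q_{u,j} = p'_j$ to convert the final leg into a direct application of \eqref{eq:mixed_select}, so that no new mixed Bernoulli-versus-selector estimate has to be derived from scratch.
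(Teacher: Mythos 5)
Your proposal is correct and is essentially the paper's own proof: the paper likewise bounds the left-hand side of \eqref{eq:combined} by the triangle inequality through the intermediate distributions $\sel(\underline p)$ and $\sel(\underline p') = \sel\left(\sum_u \rho_u \underline q_{u,\cdot}\right)$, applying \eqref{eq:Ber_select}, \eqref{eq:two_select}, and \eqref{eq:mixed_select} to the three legs. Your write-up simply makes explicit the identification $\underline p' = \sum_u \rho_u \underline q_{u,\cdot}$ that the paper leaves implicit.
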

\begin{proof}
The lefthand side of \eqref{eq:combined} is less than or equal to the sum of the
lefthand sides of  \eqref{eq:Ber_select}-\eqref{eq:mixed_select} by the triangle inequality
for $d_{TV}.$   The righthand side of \eqref{eq:combined} is the sum of the righthand
sides of  \eqref{eq:Ber_select}-\eqref{eq:mixed_select}.   So the lemma follows from
Lemma \ref{lmm:coupling_lemma_YJ}.
\end{proof}

\begin{proof}[Proof of Proposition \ref{prop:YJ_couple}]
By the tightness of   $\calL(Y^j_T | \ell_\tau=v_j)$ for each $j$ in the limit regime of 
the proposition, implied by Proposition \ref{prop:Y_couple} and the known distribution of $\check Y_T,$
 it suffices to prove the proposition with $Y^{[J]}_{[1,T]}$ and
$\tilde Y^{[J]}_{[1,T]}$ each replaced by versions of the same processes that are
stopped when the sum of the vertex degrees (i,e. the coordinates) of the process
first becomes greater than or equal to a fixed, positive integer $n.$   So let $n$ be a fixed, positive integer.
Let the process $Y^{[J]}_{[1,T]}$ be given, defined on some probability space.   By enlarging
the probability  space, we can construct $\tilde Y^{[J]}_{[1,T]}$ on the same space, and the total
variation distance is upper bounded by the probability the processes are different from each other
at some time before the sum of coordinates is greater than $n$ or before time $T+1.$
The construction is done sequentially in time.   For each time $t$ in the range $1 \leq t \leq T-1,$
once the random variable $\tilde Y_t$ is constructed, we appeal to Lemma \ref{lmm:coupling_lemma_YJ}
with $q_{u,j}$ in the lemma given by $\frac{ \theta_{u,v_j,t} \tilde Y^j_t} t .$    Since the entries of  $\beta$
are assumed to be strictly positive, there is a finite value $\theta_{\max}$ such that
$\theta_{u,v,t} \leq \theta_{\max}$ for all $t.$   Given $\epsilon > 0$ let $F$ be the event
defined by   $F = \{ |\theta_{v,t} -\theta^*_v| > \epsilon~ \mbox{\rm for some} ~ t\geq \tau \}.$

Fix $t$ with $\tau_1  \leq t \leq T.$   Let $A_t = \{j : \tau_j \leq t\},$  so that $A_t$ is the set of
vertices in $[J]$ that are active at time $t.$    For $j \not\in A_t$  the values of $Y^j_{t+1}$ and
$\tilde Y^j_{t+1}$ are deterministic and they are equal.

If $t+1 = \tau_j$ for some $j$ we call $t$ an {\em exceptional} time. Exceptional times must
be handled differently than other times because for such a time, conditioning on
$(\ell_{\tau_j} = v_j),$ or, equivalently, on $(\ell_{t+1} = v_j),$
effects the distribution of  $(Y^{j'}_{t+1} -Y ^{j'}_t : j'\in A_t),$
and Lemma \ref{lmm:combined}  doesn't apply.
The effect of such exceptional times on coupling error can be bounded as follows. First,
there are less than or equal to $J$ exceptional times.   Secondly, for such an exceptional
time $t$, 
$$
\prob{
Y^{j'}_{t+1} -Y ^{j'}_t  \neq 0 ~ \mbox{for some}~  j' \in A_t |  \ell_{\tau^j} =v_j   }   \leq  \frac{n\theta_{\max} }{t}
$$
and also
$$
\prob{
\tilde Y^{j'}_{t+1} -\tilde  Y ^{j'}_t  \neq 0 ~ \mbox{for some}~  j' \in A_t   }   \leq  \frac{n\theta_{\max} }{t}
$$
so that if $Y^{[J]}$ and $\tilde Y^{[J]}$ are coupled up to time $t$,  the coupling can be extended to to time $t+1$
with additional probability of coupling error at most $\frac{n\theta_{\max}}{t}.$
The overall increase in the probability of coupling failure due to the
exceptional times is less than or equal to  $\frac{ \theta_{\max} n J}{\tau_0} \to 0.$

Next, suppose $t$ is not an exceptional time.
Let $y \in \integers_+^J$ such that $\sum_j  y^j \leq n$ and $y^j=0$ for $j\not\in A_t.$
Lemma \ref{lmm:combined} with $p_j=\frac{\vartheta_j y^j} t,$
$p'_j = \frac{\theta_{v_j,t} y^j} t,$ and $q_{u,j} = \frac{\theta_{u,v_j,t} y^j} t$
for $j\in A_t$  implies that the error for attempting to couple $\tilde Y^{[J]}_{t+1}$ to $Y^{[J]}_{t+1}$
given  $\tilde Y^{[J]}_t = Y^{[J]}_{t}= y^{[J]}$ is less than or equal to
\begin{align*}
&\frac{\left(  \sum_{j\in[J]} \vartheta_j y^j  \right)^2}{t^2}
+ \frac{ \sum_{j\in [J]}  |\vartheta_j - \theta_{v_j,t} | y^j} t   \\
&~~~~~~~~~~+ \sum_{u\in [r]}  \rho_u  \frac{\left(  \sum_{j\in [J]} \theta_{u,v_j,t} y^j\right)^2}{t^2}  
\end{align*}
Hence, the probability of coupling failure, before the sum of degrees is $n$ and before time $T+1$, is
less than or equal to
\begin{align*}
\mathbb{P}(F) +  \frac{ \theta_{\max} n J}{\tau_0} +  \sum_{t=\tau_0}^T 
\left(   \frac{J^2 \theta_{\max}^2 n^2}{t^2}  + \frac{n\epsilon} t + \frac{\theta_{\max}^2n^2}{t^2}  \right),
\end{align*}
which can be made arbitrarily small as in the proof of Proposition \ref{prop:Y_couple}.
\end{proof}

\section{Appendix: Alternative proof of  Proposition \ref{prop:empirical_m1}}  \label{app:empirical_m1}

This section gives an alternative proof of Proposition \ref{prop:empirical_m1},
but only for convergence in probability, based on Corollary \ref{cor:Y_vs_Z_mult}.
The same method can be used to prove Proposition \ref{prop:error_scaling}(b),
concerning the convergence in probability of the fraction of label errors
made by two recovery algorithms.  We use the notation
given just before the statement  of  Proposition \ref{prop:empirical_m1}.

Since the labels of the vertices are independent with distribution $\rho,$  by the
law of large numbers,
\begin{align*}
\lim_{T\to\infty}  \frac{H^v(T)}{T} = \rho_v   ~~~~\mbox{ (a.s. and in probability) .}
\end{align*}
Thus, it suffices to show that for fixed $n\geq m,$
\begin{align*}
\lim_{T\to\infty}  \frac{N_n^v(T)}{T} = \rho_v    p_n( \theta^*_v,m)~~~~\mbox{ (in probability). }
\end{align*}
By the Chebychev inequality,
for that it suffices to show the following two conditions:
\begin{align}
\lim_{T\to\infty}  \frac{\expect{N^v_n(T)}} T  &  = \rho_v p_n(\theta^*_u,m)   \label{eq:mean_conv}  \\
\lim_{T\to\infty}  \var \left( \frac{ {N^v_n(T)} }T \right) & = 0.  \label{eq:var_conv}
\end{align}
Write $N_n^v(T) = \sum_{\tau=1}^T  \chi_\tau,$ where $\chi_{\tau}=1$ if  $\ell_{\tau}=v$ and
the degree of vertex $\tau$ at time $T$ is $n,$  and $\chi_{\tau}=0$ otherwise. 
Then $\vert \expect{N^v_n(T)} - \sum_{\tau=t_o+1}^T   \expect{ \chi_{\tau}} \vert \leq t_o.$
By Corollary \ref{cor:Y_vs_Z_mult} with $J=1,$  $t=T$,  and $v\in [r],$
\begin{align}
&\lim_{\tau_0 \to\infty}  \sup_{\tau, T :  \tau > \tau_0 ~\mbox{\small and} ~ T > \tau_0}  \nonumber \\
&\bigg|   \expect{ \chi_{\tau}} - \rho_v  \pi_n\left(\ln(T/\tau), \theta^*_v, m\right)   \bigg|  =  0.  \label{eq:first_moment}
\end{align}
Therefore, by the bounded convergence theorem,   \eqref{eq:mean_conv} holds with
\begin{align}
p_n( \theta, m) & =  \frac 1 T \int_0^T   \pi_n(\ln (T/t), \theta, m)  dt  \nonumber  \\
&  \overset{(a)}{=} \binom{n-1}{m-1} \int_0^1 u^{m\theta}(1-u^{\theta})^{n-m}  du   \nonumber  \\
&  \overset{(b)}{=} \frac{1}{\theta}  \binom{n-1}{m-1}  \int_0^1 v^{m -1 + \frac 1 {\theta}} (1-v)^{n-m}  dv \nonumber  \\
& \overset{(c)} {=}   \frac 1 {\theta}   \binom{n-1}{m-1}  B\left( m + \frac 1 {\theta} , n-m+1 \right)   \nonumber  \\
&  \overset{(d)}{=}  \frac{  \Gamma\left(\frac 1 \theta  + m  \right)  \Gamma(n)  }  
  {\theta \Gamma(m) \Gamma\left( n+ \frac 1 \theta + 1   \right)  },  \label{eq:marginal_dist}
\end{align}
where (a) follows by the definition of the negative binomial distribution and change of
variable $u=t/T,$  (b) follows by the change of variable $v=u^{\theta}$, and (c) and (d) follow from
standard formulas for the beta function, $B.$

It remains to verify  \eqref{eq:var_conv}.   First note that
\begin{align}   \label{eq:Cov_expand}
\var (N_n^v(T) ) = \sum_{\tau_1=1}^T  \sum_{\tau_2=1}^T  \Cov (\chi_{\tau_1} , \chi_{\tau_2} ).
\end{align}
Note that 
$$
\expect{\chi_{\tau_1}\chi_{\tau_2} } = \rho_v^2  
 \prob{ Y^1_T=n,   Y^2_T=n \bigg|  \ell_{\tau_1} =  \ell_{\tau_2} = v  },
$$
and by Corollary \ref{cor:Y_vs_Z_mult}  with $J=2,$  $t=T$,  and $v_1=v_2=v,$
\begin{align*}
& \lim_{\tau_0 \to\infty}  \sup_{\tau_1, \tau_2, T: \tau_0 \leq \tau_1 < \tau_2 ~\mbox{\small and} ~ T\geq \tau_0}   \\
& \bigg|   \expect{\chi_{\tau_1}\chi_{\tau_2} }  - \rho_v^2   \pi_{n}\left(\ln \frac T {\tau_1}  ,\theta^*_v, m \right)
\pi_{n}\left(\ln \frac T {\tau_2}  ,\theta^*_v, m \right) \bigg| \to 0 .
\end{align*}
So, in view of \eqref{eq:first_moment} and the fact
$\Cov(\chi_{\tau_1},\chi_{\tau_2}) = \expect{\chi_{\tau_1}\chi_{\tau_2} }  -   \expect{ \chi_{\tau_1} }  \expect{\chi_{\tau_2} } ,$
 \begin{align*}
\lim_{\tau_0 \to\infty}  \sup_{\tau_1, \tau_2, T: \tau_0 \leq \tau_1 < \tau_2 ~\mbox{\small and} ~ T\geq \tau_0} 
 |\Cov(\chi_{\tau_1},
\chi_{\tau_2}) | = 0.
\end{align*}
Using this to bound the terms on the righthand side of \eqref{eq:Cov_expand} with $\tau_1, \tau_2 \in [\tau_0, T]$
and $\tau_1 \neq \tau_2,$   and bounding the other terms by one, yields:
\begin{align*}   
&\var (N_n^v(T) )  \leq   2T\tau_0 + T + \\
& T^2 \left( \sup_{\tau_1, \tau_2, T: \tau_0 \leq \tau_1 < \tau_2 \leq T} |\Cov(\chi_{\tau_1},  \chi_{\tau_2}) | \right) =o(T^2).
\end{align*}
if $T, \tau_0 \to\infty$ with $\tau_0/T \to  0.$    This implies \eqref{eq:var_conv}, completing the
alternative proof of the Proposition \ref{prop:empirical_m1} (for convergence in probability).

\begin{remark}
In essence, the calculation in \eqref{eq:marginal_dist} demonstrates that the limiting empirical distribution of
degree for vertices of a given label $v$ at a large time $T$, is the marginal distribution for the following joint distribution: 
the vertex time of arrival is uniform over $[0,T]$ and, given the arrival is at time $\tau$, the conditional distribution
of degree is  ${\sf negbinom}\left(m,  \left( \frac {\tau}   T \right)^{\theta^*_v} \right).$
\end{remark}


\section{Consistent estimation of the growth rate parameter for a given vertex}
\label{app:consistency}

Proposition \ref{prop:Y_large_time} is proved in this section and evidence
for Conjecture \ref{conj:Y_large_time_sharp} is given.
First a different method for estimating the rate parameter of $Y$ is
established.
Consider the \BA model with communities.
Fix $\tau_o\geq 1$ and $\tau$ with $\tau \geq \max\{\tau_o, t_o\}$
(recall that $t_o$ is the number of vertices in the initial graph).
Let $Y_t$ denote the degree of $\tau_o$ in $G_t$  for all $t\geq \tau.$
To avoid triviality associated with an isolated vertex in $G_{t_o}$,
suppose $Y_\tau \geq 1.$   We also suppose $Y_\tau \leq m\tau$, so by
induction on $t$,  $\frac{Y_t} t \leq m$ for all $t\geq \tau.$
Let $\vartheta = \theta^*_v$ where $v$ is the label of $\tau_o.$
\begin{proposition}  (Consistent estimation of rate parameter)
  \label{prop:consistent_rate_est} 
The estimator $\hat \vartheta_T$ defined by
\begin{align}  \label{eq:theta_estimator}
\hat \vartheta_T = \frac{Y_T - Y_\tau}{\sum_{t=\tau}^{T-1} \frac{Y_t} t }
\end{align}
is consistent.  In other words,  $\lim_{T\to\infty}  \hat \vartheta_T = \vartheta$ a.s.
\end{proposition}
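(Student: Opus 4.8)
The plan is to split the increments of $Y$ into a predictable drift and a martingale, to identify the limit of the estimator with the drift by a Toeplitz (Ces\`aro) argument, and to control the martingale by a strong law using the random normalizer $S_T\triangleq\sum_{t=\tau}^{T-1} Y_t/t$ that appears in the denominator of \eqref{eq:theta_estimator}. Let $\calF_t$ be the $\sigma$-field generated by the labelled graph through time $t$ together with $\ell_{\tau_o}$, and work on the event $\{\ell_{\tau_o}=v\}$. By the increment law of Section \ref{sec:tildeY_def}, the conditional law of $Y_{t+1}-Y_t$ given $\calF_t$ is the binomial mixture $\sum_u \rho_u\,{\sf binom}(m,\theta_{u,v,t}Y_t/(mt))$, so $E[Y_{t+1}-Y_t\mid\calF_t]=\theta_{v,t}Y_t/t$ with $\theta_{v,t}=\sum_u\rho_u\theta_{u,v,t}$; moreover, by Proposition \ref{prop:gobal_convergence} and continuity, $\theta_{v,t}\to\vartheta=\theta^*_v$ a.s. Setting $\xi_t\triangleq(Y_{t+1}-Y_t)-\theta_{v,t}Y_t/t$ and $M_T\triangleq\sum_{t=\tau}^{T-1}\xi_t$, the $\xi_t$ are martingale differences and
\begin{align*}
\hat\vartheta_T=\frac{Y_T-Y_\tau}{S_T}=\frac{\sum_{t=\tau}^{T-1}\theta_{v,t}(Y_t/t)}{S_T}+\frac{M_T}{S_T}.
\end{align*}

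First I would dispose of the drift term. Since $Y$ is nondecreasing with $Y_\tau\ge 1$, one has $Y_t/t\ge 1/t$, whence $S_T\ge\sum_{t=\tau}^{T-1}1/t\to\infty$ a.s. As $\theta_{v,t}\to\vartheta$ and the nonnegative weights $Y_t/t$ have divergent total mass $S_T$, the first term above is a weighted average of the $\theta_{v,t}$ and converges to $\vartheta$ a.s.\ by the Toeplitz lemma.

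The crux is showing $M_T/S_T\to0$ a.s. A direct computation of the second moment of the binomial mixture gives $E[\xi_t^2\mid\calF_t]=\var(Y_{t+1}-Y_t\mid\calF_t)\le C\,Y_t/t$ for a constant $C$, the leading contribution being $\theta_{v,t}Y_t/t$ and the remaining $O((Y_t/t)^2)$ terms being bounded using $Y_t/t\le m$. Take the predictable, increasing normalizer $a_t\triangleq\sum_{s=\tau}^{t}Y_s/s=S_{t+1}$, which is $\calF_t$-measurable with $a_t\uparrow\infty$. Then $\sum_t \xi_t/a_t$ is a martingale whose predictable quadratic variation obeys
\begin{align*}
\sum_{t>\tau}\frac{E[\xi_t^2\mid\calF_t]}{a_t^2}\le C\sum_{t>\tau}\frac{a_t-a_{t-1}}{a_t^2}\le C\sum_{t>\tau}\left(\frac{1}{a_{t-1}}-\frac{1}{a_t}\right)\le\frac{C}{a_\tau}<\infty
\end{align*}
by telescoping, the term $t=\tau$ being handled separately and equal to $C\tau/Y_\tau<\infty$ since $a_\tau=Y_\tau/\tau>0$. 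Hence $\sum_t\xi_t/a_t$ converges a.s.\ by the martingale convergence theorem, and Kronecker's lemma yields $M_T/a_{T-1}=M_T/S_T\to0$ a.s.

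Combining the two terms gives $\hat\vartheta_T\to\vartheta$ a.s.\ on $\{\ell_{\tau_o}=v\}$; since this holds for each $v\in[r]$, the unconditional almost sure convergence follows. The main obstacle is the martingale step: the random, data-dependent normalization $S_T$ blocks a naive appeal to a deterministic strong law, and the telescoping bound on the conditional quadratic variation is exactly what makes the random normalizer usable.
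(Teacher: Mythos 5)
Your proposal is correct, and it takes a genuinely different route from the paper's. The paper argues via stopping times: it first reduces to the idealized dynamics $\eta_t \equiv \eta^*$ (justified by the a.s.\ convergence in Proposition~\ref{prop:gobal_convergence}), then stops at the first time $T_M$ at which the denominator reaches a deterministic level $M$; since the increments $Y_t/t$ are at most $m$, the stopped denominator lies in $[M-m,M]$, and two applications of the optional sampling theorem (to the first and second moments of the martingale $Y_T - Y_\tau - \vartheta\sum_{t=\tau}^{T-1}Y_t/t$) plus Chebyshev give $\prob{|\hat{\hat\vartheta}_M-\vartheta|\ge\epsilon} = O(1/M)$ (Lemma~\ref{lemma:stopped_estimator}); consistency then follows from Borel--Cantelli along the geometric subsequence $M_k=(1+\epsilon)^k$. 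You instead retain the exact time-varying drift $\theta_{v,t}Y_t/t$, dispose of it with the Toeplitz lemma (using $\theta_{v,t}\to\vartheta$ and $S_T\to\infty$ a.s.), and control the noise by the strong-law mechanism for adaptively normalized martingales: normalize by the $\calF_t$-measurable $a_t=S_{t+1}$, bound the predictable quadratic variation by the telescoping sum $\sum_t (a_t-a_{t-1})/a_t^2<\infty$, and finish with martingale convergence and Kronecker's lemma. Both proofs rest on the same two structural facts---the compensator of $Y_T-Y_\tau$ is asymptotically $\vartheta S_T$, and the conditional variance of each increment is $O(Y_t/t)$, comparable to the corresponding increment of the denominator---but they exploit them differently: the paper by freezing the denominator at an essentially deterministic value, you by telescoping against the random normalizer itself. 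Your route is tighter where the paper is informal: the reduction ``it suffices to prove the proposition under the added assumption $\eta_t\equiv\eta^*$'' is an approximation argument left implicit there, whereas your Toeplitz step accommodates the true random rates exactly, and you need neither stopping times nor Borel--Cantelli. What the paper's route buys in exchange is a quantitative, non-asymptotic error bound for the stopped estimator, which your purely qualitative argument does not produce. (One cosmetic point: define $\xi_t=(Y_{t+1}-Y_t)-\theta_{\ell_{\tau_o},t}Y_t/t$ so that the martingale property holds under the unconditional measure, and note that $\{\ell_{\tau_o}=v\}\in\calF_t$, so restricting to that event as you do is legitimate.)
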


To prove the proposition we first examine a sequential version of
 $\hat \vartheta_T.$  Given a positive constant $M$ with $M > m,$  let
$T_M$ denote the stopping time defined by
$$
T_M = \min \left\{ T\geq \tau  :   \sum_{t=\tau}^{T_M} \frac{Y_t}{t} \geq M  \right\}
$$
Let  $\hat{\hat \vartheta}_M$   be $\hat \vartheta_T$ for $T=T_M,$  or, in other words,
$$
\hat{\hat \vartheta}_M = \frac{Y_{T_M} - Y_\tau}{\sum_{t=\tau}^{T_M-1} \frac{Y_t} t }.
$$

\begin{lemma}    \label{lemma:stopped_estimator}
Under the idealized assumption $\eta_t \equiv \eta^*,$
for any $\epsilon > 0$,
\begin{align*}
\prob{  \bigg|  \hat{\hat \vartheta}_M   -  \vartheta \bigg| \geq \epsilon }
\leq \frac{m \vartheta M}{\epsilon^2(M-m)^2.}
\end{align*}
\end{lemma}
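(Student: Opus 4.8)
The plan is to recognize the numerator of $\hat{\hat\vartheta}_M-\vartheta$ as a martingale evaluated at the stopping time $T_M$, bound the denominator from below and the martingale's second moment from above, and conclude by Chebyshev's inequality. Write $S_T=\sum_{t=\tau}^{T-1}\frac{Y_t}{t}$ for the quantity appearing in the denominator of the estimator, and set $U_T=Y_T-Y_\tau-\vartheta S_T$. Under the idealized dynamics $\eta_t\equiv\eta^*$ the increment law is the mixture of binomials with $\theta_{u,v,t}$ replaced by $\theta^*_{u,v}$, so $\expect{Y_{t+1}-Y_t\mid\calF_t}=\frac{\vartheta Y_t}{t}$ (using $\theta^*_v=\sum_u\rho_u\theta^*_{u,v}$ and $\vartheta=\theta^*_v$), and hence $U_T$ is a martingale for the natural filtration $(\calF_t)$. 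Substituting the definitions gives $\hat{\hat\vartheta}_M-\vartheta=U_{T_M}/S_{T_M}$, so everything reduces to controlling this ratio.

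First I would verify $T_M<\infty$ almost surely: since $Y$ is nondecreasing with $Y_\tau\geq 1$, we have $Y_t\geq 1$ and hence $S_T\geq\sum_{t=\tau}^{T-1}\frac 1 t\to\infty$, so the threshold $M$ is crossed at a finite time. I would then bound the denominator. Because at most $m$ edges are added per step, $\frac{Y_t}{t}\leq m$ for every $t\geq\tau$ (as recorded just before the lemma), so the partial sum $S_T$ increases by at most $m$ at each step; together with the definition of $T_M$ this traps the denominator in the window $M-m\leq S_{T_M}<M$.

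The crux is to bound $\expect{U_{T_M}^2}$ by the predictable quadratic variation of $U$. Since each increment $Y_{t+1}-Y_t$ takes values in $\{0,1,\ldots,m\}$, we have the pointwise inequality $(Y_{t+1}-Y_t)^2\leq m(Y_{t+1}-Y_t)$, and therefore $\var(Y_{t+1}-Y_t\mid\calF_t)\leq\expect{(Y_{t+1}-Y_t)^2\mid\calF_t}\leq\frac{m\vartheta Y_t}{t}$; this is exactly where the extra factor $m$ in the stated bound originates. Summing, the compensator of $U_T^2$ satisfies $\langle U\rangle_{T_M}=\sum_{t=\tau}^{T_M-1}\var(Y_{t+1}-Y_t\mid\calF_t)\leq m\vartheta S_{T_M}<m\vartheta M$. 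The main obstacle is justifying optional stopping at the unbounded time $T_M$; the clean route is to observe that $\langle U\rangle_{T_M\wedge T}\leq m\vartheta M$ holds deterministically for every $T$, so the stopped martingale $(U_{T_M\wedge T})_T$ is bounded in $L^2$, hence converges almost surely and in $L^2$ to $U_{T_M}$, and letting $T\to\infty$ in the identity $\expect{U_{T_M\wedge T}^2}=\expect{\langle U\rangle_{T_M\wedge T}}$ yields $\expect{U_{T_M}^2}=\expect{\langle U\rangle_{T_M}}\leq m\vartheta M$.

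Finally I would combine the estimates. Since $S_{T_M}\geq M-m>0$, Chebyshev's inequality gives
\begin{align*}
\prob{\big|\hat{\hat\vartheta}_M-\vartheta\big|\geq\epsilon}
&=\prob{|U_{T_M}|\geq\epsilon S_{T_M}} \\
&\leq\prob{|U_{T_M}|\geq\epsilon(M-m)} \\
&\leq\frac{\expect{U_{T_M}^2}}{\epsilon^2(M-m)^2} \\
&\leq\frac{m\vartheta M}{\epsilon^2(M-m)^2},
\end{align*}
which is the asserted bound.
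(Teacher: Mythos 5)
Your proof is correct and follows essentially the same route as the paper's: the martingale $Y_T - Y_\tau - \vartheta\sum_{t=\tau}^{T-1}\frac{Y_t}{t}$, the deterministic trapping of the denominator in $[M-m,M]$, the conditional variance bound $\frac{m\vartheta Y_t}{t}$ from increments in $[0,m]$, optional sampling, and Chebyshev. The only (immaterial) difference is in justifying optional stopping: the paper observes that $T_M$ is in fact a \emph{bounded} stopping time (since $Y_t \geq 1$ forces $\sum_{t=\tau}^T \frac{Y_t}{t} \geq \sum_{t=\tau}^T \frac{1}{t}$ to cross $M$ by a deterministic time), whereas you treat $T_M$ as possibly unbounded and pass to the limit via $L^2$-boundedness of the stopped martingale, which is equally valid.
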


\begin{proof}
Notice that
the denominator of $\hat{\hat \vartheta}_M $ is in the interval $[M-m,M]$
with probability one.  Also,
\begin{align*}
Y_{T} - Y_\tau - \vartheta\left( \sum_{t=\tau}^{T-1} \frac{Y_t} t  \right)
& =  \sum_{t=\tau}^{T-1}   \left( Y_{t +1}-Y_t - \frac{\vartheta Y_t} t  \right),
\end{align*}
so that  $\left(Y_{T} - Y_\tau - \vartheta \sum_{t=\tau}^{T-1} \frac{Y_t} t : T\geq \tau\right) $
is a martingale.   Since $T_M$ is a bounded optional sampling time, the
martingale optional sampling theorem can be applied to yield
\begin{align*}
\expect{  Y_{T_M} - Y_\tau } = \expect{ \vartheta \sum_{t=\tau}^{T_M-1} \frac{Y_t} t  }
\in [\vartheta (M-m), \vartheta M] .
\end{align*}
Next we bound the second moments.    It is easy to show that
a random variable $U$ with values in $[0,m]$ and mean
$\mu$ satisfies
$\var(U) \leq m^2 \frac{\mu} m  \left(1 - \frac{\mu} m\right) \leq m\mu.$
For any $t\geq \tau,$   $Y_{t +1}-Y_t $ takes values in $[0,m]$ and,  given
the past $\calF_t$,  it has conditional mean $\frac{\vartheta Y_t} t. $
It follows that
$\expect{\left( Y_{t +1}-Y_t - \frac{\vartheta Y_t} t \right)^2  \bigg| \calF_t }
\leq \frac{m\vartheta Y_t}{t}.$
Therefore, again using the optional sampling theorem,
\begin{align*}
&\expect{  \left(Y_{T_M} - Y_\tau - \vartheta \left( \sum_{t=\tau}^{T_M-1} \frac{Y_t} t \right) \right)^2 } \\
&= \expect{  \sum_{t=\tau}^{T_M-1} 
 \expect{\left( Y_{t +1}-Y_t - \frac{\vartheta Y_t} t \right)^2  \bigg| \calF_t } }  \\
 & \leq m \vartheta \expect{\sum_{t=\tau}^{T_M-1}   \frac{Y_t} t   } \leq m\vartheta M
\end{align*}
Thus, for any $\epsilon > 0,$ the Chebychev
inequality yields
\begin{align*}
&\prob{ \bigg|
Y_{T_M} - Y_\tau - \vartheta \left( \sum_{t=\tau}^{T_M-1} \frac{Y_t} t \right)
\bigg| \geq  \epsilon (M-m)}   \\
&~~~~~~~~~~~~~~~~~~\leq   \frac{m \vartheta M}{\epsilon^2(M-m)^2},
\end{align*}
which implies the conclusion of the proposition.
\end{proof}

\begin{proof}[Proof of Proposition \ref{prop:consistent_rate_est}]
Since $Y_t \geq 1$ for all $t\geq \tau,$
 $\sum_{t=\tau}^{T-1} \frac{Y_t} t  \to \infty$ a.s. as $T\to\infty.$
 Therefore, for $\tau_o$ fixed  (the vertex for which we want to estimate
 the rate parameter),  whether $\hat \vartheta$ is consistent does not depend
 on the choice of $\tau.$    For any given $\epsilon > 0,$
by taking $\tau$ very large, we can thus ensure  $|\eta_t - \eta^*| \leq \epsilon$
for all $t\geq \tau$ with probability at least $1-\epsilon.$
Therefore, it suffices to prove the proposition under the added assumption
$\eta_t \equiv \eta^*$ for all $t \geq \tau.$  It follows that it suffices
to prove that $\hat{\hat \vartheta}_M $
is a consistent family of estimators of $\vartheta.$

So it remains to prove consistency of the family of estimators
$\hat{\hat \vartheta}_M $  as $M\to \infty.$ 
For that purpose, it suffices to show that for arbitrarily
small $\epsilon > 0$,   along the sequence of $M$ values
$M_k = (1+\epsilon)^k$,  the estimation error is greater
than or equal to $\epsilon$ for only finitely many values
of $k,$  with probability one.     That follows from Lemma
\ref{lemma:stopped_estimator},  because the error probability 
in Lemma \ref{lemma:stopped_estimator} is $O(1/M)$ and
$\sum_{k=1}^{\infty}  1/M_k  < \infty,$ so the Borel Cantelli
lemma implies the desired conclusion.
\end{proof}

Proposition \ref{prop:Y_large_time} will follows from 
Proposition \ref{prop:consistent_rate_est} and the following lemmas,
which are essentially Gr\"{o}nwall type inequalities.
\begin{lemma}  \label{eq:f_lemma}
Suppose $(f(s) : s \in \reals_+)$ is a positive nondecreasing function
such that for some $\vartheta > 0,$
\begin{align*}
\lim_{S \to \infty} \frac{f(S)-f(0)}{\int_0^S  f(u) du} = \vartheta .
\end{align*}
Then
\begin{align*}
\lim_{S \to \infty} \frac{\ln f(S)}{S} = \vartheta .
\end{align*}
\end{lemma}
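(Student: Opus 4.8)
The plan is to reduce the statement about $f$ to the corresponding statement about its integral $F(S) \triangleq \int_0^S f(u)\,du$, for which the hypothesis becomes a logarithmic-derivative condition that integrates cleanly. First I would record two elementary consequences of $f$ being positive and nondecreasing. Since $f(0) > 0$ and $f$ is nondecreasing, $F(S) \geq f(0) S$, so $F(S) \to \infty$. Combined with the hypothesis $\frac{f(S) - f(0)}{F(S)} \to \vartheta$, this gives $\frac{f(S)}{F(S)} = \frac{f(S)-f(0)}{F(S)} + \frac{f(0)}{F(S)} \to \vartheta$, since the last term tends to zero.

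The central observation is that $\frac{f(S)}{F(S)}$ is the logarithmic derivative $\frac{d}{dS}\ln F(S)$. I would fix any $S_0 > 0$ (so that $F(S_0) > 0$) and write, by the fundamental theorem of calculus, $\ln F(S) - \ln F(S_0) = \int_{S_0}^S \frac{f(u)}{F(u)}\,du$. Since the integrand converges to $\vartheta$, the integral Ces\`aro (averaging) lemma --- if $g(u) \to L$ as $u \to \infty$ then $\frac{1}{S}\int_{S_0}^S g(u)\,du \to L$ --- yields $\frac{\ln F(S) - \ln F(S_0)}{S - S_0} \to \vartheta$, hence $\frac{\ln F(S)}{S} \to \vartheta$. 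Finally, to pass from $F$ back to $f$, I would use $\frac{f(S)}{F(S)} \to \vartheta > 0$, which gives $\ln f(S) - \ln F(S) = \ln\frac{f(S)}{F(S)} \to \ln \vartheta$; dividing by $S$ and adding $\frac{\ln F(S)}{S} \to \vartheta$ completes the proof.

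The main technical point to verify carefully is the fundamental-theorem-of-calculus step, since $f$ is only assumed nondecreasing, not continuous. Because a monotone function is bounded on each compact interval, $F$ is Lipschitz (hence absolutely continuous) on $[S_0, S]$ with $F' = f$ almost everywhere, and $\ln$ is Lipschitz on the compact interval $[F(S_0), F(S)] \subset (0, \infty)$; therefore $\ln F$ is absolutely continuous on $[S_0, S]$ and the displayed integral identity holds. I expect this regularity bookkeeping to be the only delicate part; the remainder is routine once $\frac{f}{F} \to \vartheta$ is in hand.
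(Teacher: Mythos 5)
Your proof is correct, and it takes a genuinely different route from the paper's. The paper works directly with $f$: for any $\epsilon>0$ the hypothesis gives the integral inequality $f(S) \ge f(0) + (\vartheta-\epsilon)\int_0^S f(u)\,du$ for $S\ge S_\epsilon$, which is then iterated in a Gr\"onwall-type induction to build up the exponential series term by term, yielding $f(s+S_\epsilon) \ge C\eexp^{(\vartheta-\epsilon)s}$ and hence the liminf bound; the matching limsup bound is obtained by a symmetric argument with $\vartheta+\epsilon$. You instead pass to $F(S)=\int_0^S f(u)\,du$, upgrade the hypothesis to $f(S)/F(S)\to\vartheta$ using $F(S)\ge f(0)S\to\infty$, integrate the logarithmic derivative $(\ln F)'=f/F$ (valid a.e., with the absolute-continuity bookkeeping you correctly supplied --- monotone $f$ makes $F$ locally Lipschitz with $F'=f$ a.e.\ by Lebesgue differentiation, and $\ln$ is Lipschitz on $[F(S_0),F(S)]\subset(0,\infty)$), and conclude by Ces\`aro averaging, transferring back to $f$ at the end via $\ln\bigl(f(S)/F(S)\bigr)\to\ln\vartheta$, which is finite precisely because $\vartheta>0$. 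What each approach buys: yours handles the lower and upper bounds symmetrically in a single computation and makes the mechanism transparent ($\ln F$ has derivative tending to $\vartheta$), at the cost of invoking Lebesgue FTC/absolute-continuity machinery; the paper's argument is more elementary --- nothing beyond integral inequalities and induction on partial sums of the exponential series --- but requires two separate one-sided arguments and hides the reason the limit exists inside the iteration.
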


\begin{proof} 
Given any $\epsilon > 0$, there exits $S_\epsilon $ such that
\begin{align*}
f(S) \geq  f(0) + (\theta-\epsilon) \int_0^S  f(u) du~~~\mbox{for } S\geq S_{\epsilon}.
\end{align*}
Since $f(u)\geq f(0)$ for all $u,$
\begin{align*}
f(S) \geq  C +  (\theta-\epsilon)  \int_{S_{\epsilon}}^S  f(u) du ~~~\mbox{for } S\geq S_{\epsilon}
\end{align*}
where $C= f(0)(1 + (\theta-\epsilon) S_\epsilon).$   Thus, for any $s \geq 0$,
setting $S=s+ S_{\epsilon},$  yields
\begin{align*}
f(s+S_{\epsilon}) \geq  C + (\theta-\epsilon)   \int_0^s  f(s+S_{\epsilon}) du ~~~\mbox{for } s\geq 0.
\end{align*}
By induction on $k$ it follows that $f(s+S_{\epsilon}) 
 \geq C \sum_{j=0}^k \frac{((\vartheta-\epsilon)s)^j}{j!},$ so that
 $f(s+S_{\epsilon})   \geq C\eexp^{s(\vartheta-\epsilon)}$  for all $s\geq 0.$
 Therefore,  $\lim\inf_{S \to \infty} \frac{\ln f(S)}{S} \geq \vartheta.$
It can be proved similarly that $\lim\sup_{S \to \infty} \frac{\ln f(S)}{S} \leq \vartheta,$  establishing the lemma.
 \end{proof}
 
 \begin{lemma}  \label{eq:lemma_Ylim}
 Let $(y_t : t\in\{\tau, \tau+1, \ldots  \})$ be a sequence of positive numbers such that
 $y_{t+1}-y_t \in [0,m]$ for all $t\geq \tau,$   and such that
 \begin{align*}
\lim_{T \to \infty} \frac{y_T- y_{\tau}}{\sum_{t=\tau}^{T-1}  \frac{y_t} t }= \vartheta
\end{align*}
Then
\begin{align*}
\lim_{T \to \infty} \frac{\ln y_T}{\ln (T/\tau)} = \vartheta
\end{align*}
\end{lemma}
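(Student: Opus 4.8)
The plan is to treat this as the discrete analogue of Lemma~\ref{eq:f_lemma}. One could try to reduce to that lemma through the time change $s=\ln(t/\tau)$, defining a nondecreasing interpolant $f(s)=y_{\lfloor\tau\eexp^s\rfloor}$ so that $f(0)=y_\tau$ and $f(\ln(T/\tau))=y_T$; under this substitution the weighted sum $\sum_{t=\tau}^{T-1}y_t/t$ becomes a Riemann-type approximation of $\int_0^{\ln(T/\tau)}f(u)\,du$. The snag is that verifying the hypothesis of Lemma~\ref{eq:f_lemma} then requires the approximation error $\sum_{t}y_t\bigl(\tfrac1t-\ln(1+\tfrac1t)\bigr)$ to be negligible relative to the sum, which itself already needs polynomial growth of $y_t$. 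I would therefore argue directly, mimicking the Gr\"onwall iteration in the proof of Lemma~\ref{eq:f_lemma} at the level of the sequence itself.

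Write $D_T=\sum_{t=\tau}^{T-1}\frac{y_t}{t}$ for the denominator, so that $D_{T+1}-D_T=\frac{y_T}{T}$. Since $y_t$ is nondecreasing and positive, $y_t\ge y_\tau>0$, hence $D_T\ge y_\tau\sum_{t=\tau}^{T-1}\frac1t\to\infty$. The hypothesis states $\frac{y_T-y_\tau}{D_T}\to\vartheta$ (with $\vartheta>0$ as in Lemma~\ref{eq:f_lemma}), and the first step is to convert this ratio statement into a two-sided multiplicative recursion for $D_T$.

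For the lower bound, fix $\epsilon\in(0,\vartheta)$. For all $T$ past some $T_\epsilon$ we have $y_T\ge y_\tau+(\vartheta-\epsilon)D_T\ge(\vartheta-\epsilon)D_T$, so $D_{T+1}\ge D_T\bigl(1+\frac{\vartheta-\epsilon}{T}\bigr)$. Iterating and taking logarithms, $\ln D_T\ge\ln D_{T_\epsilon}+\sum_{t=T_\epsilon}^{T-1}\ln\!\bigl(1+\frac{\vartheta-\epsilon}{t}\bigr)=(\vartheta-\epsilon)\ln T+O(1)$, using $\ln(1+\alpha/t)=\alpha/t+O(1/t^2)$ together with $\sum 1/t^2<\infty$. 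Combined with $y_T\ge(\vartheta-\epsilon)D_T$ this gives $\liminf_{T\to\infty}\frac{\ln y_T}{\ln T}\ge\vartheta-\epsilon$, and letting $\epsilon\downarrow0$ yields $\liminf\ge\vartheta$.

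The upper bound is symmetric: for $T$ large, $y_T\le y_\tau+(\vartheta+\epsilon)D_T$, and because $D_T\to\infty$ we may absorb the constant $y_\tau$ to obtain $y_T\le(\vartheta+2\epsilon)D_T$, whence $D_{T+1}\le D_T\bigl(1+\frac{\vartheta+2\epsilon}{T}\bigr)$ and $\ln D_T\le(\vartheta+2\epsilon)\ln T+O(1)$; this forces $\limsup\frac{\ln y_T}{\ln T}\le\vartheta+2\epsilon$. Sending $\epsilon\downarrow0$ and noting $\ln(T/\tau)=\ln T-\ln\tau\sim\ln T$ for fixed $\tau$ gives $\frac{\ln y_T}{\ln(T/\tau)}\to\vartheta$. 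The main obstacle is the bookkeeping that turns the hypothesized ratio limit into the two-sided recursion and the control of the discrete product $\prod_t(1+\alpha/t)\asymp T^{\alpha}$; the reason for preferring this direct route is precisely that it sidesteps the Riemann-sum error estimate that obstructs a clean reduction to Lemma~\ref{eq:f_lemma}.
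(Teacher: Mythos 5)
Your proof is correct, but it follows a genuinely different route from the paper's. The paper proves the lemma by reduction to the continuous-time Lemma~\ref{eq:f_lemma}: it first replaces the sum $\sum_{t=\tau}^{T-1} y_t/t$ by the integral $\int_\tau^T y_{\lfloor t \rfloor}\, t^{-1}\, dt$, bounding the discrepancy by $\tfrac12 \sum_t y_t/t^2$, which is $o\left(\sum_t y_t/t\right)$ --- the ``snag'' you flag is real but disposable: since $1/t \to 0$ and the total weight $\sum_t y_t/t$ diverges (as $y_t \ge y_\tau > 0$), a weighted-average argument kills the error term with no polynomial-growth assumption needed --- and it then sets $f(s) = y_{\lfloor \tau \eexp^s \rfloor}$ and invokes Lemma~\ref{eq:f_lemma}, whose own proof is a Gr\"{o}nwall-type induction generating the partial sums of the exponential series. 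Your argument instead stays entirely at the level of the sequence: writing $D_T = \sum_{t=\tau}^{T-1} y_t/t$, you convert the hypothesis into the two-sided multiplicative recursions $D_{T+1} \ge D_T\left(1 + \tfrac{\vartheta-\epsilon}{T}\right)$ and $D_{T+1} \le D_T\left(1 + \tfrac{\vartheta+2\epsilon}{T}\right)$, and then use the standard estimate $\sum_t \ln\left(1 + \alpha/t\right) = \alpha \ln T + O(1)$ to squeeze $\ln D_T$, hence $\ln y_T$, between $(\vartheta \pm O(\epsilon))\ln T + O(1)$. What your route buys: it is self-contained (no appeal to Lemma~\ref{eq:f_lemma}, no Riemann-sum bookkeeping), and the discrete product estimate is arguably cleaner than the series induction in the paper's proof of Lemma~\ref{eq:f_lemma}. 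What the paper's route buys: Lemma~\ref{eq:f_lemma} is stated once in continuous time, and the time change $s = \ln(t/\tau)$ makes explicit the connection to the branching process $Z$ that organizes the whole paper. One small point common to both arguments: you implicitly use $\vartheta > 0$ (both for the lower bound and for $\ln(\vartheta - \epsilon)$ to make sense), exactly as Lemma~\ref{eq:f_lemma} assumes; this is consistent with the intended application $\vartheta = \theta^*_{\ell_\tau} > 0$, though it is worth stating explicitly since the lemma as quoted does not.
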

\begin{proof}
We shall apply the previous lemma by switching to a continuous parameter and then
applying a change of time.  
Note that $0 \leq \frac 1 t - \int_t^{t+1} \frac 1 s ds = \frac 1 t - \ln(1 + \frac 1 t) \leq \frac 1 {2t^2}.$
Hence
\begin{align*}
0 \leq  \sum_{t=\tau}^{T-1}  \frac{y_t} t    -   \int_\tau^T  \frac{y_{\lfloor t \rfloor}} t dt
 \leq \frac 1 2 \sum_{t=\tau}^{T-1}  \frac{y_t} {t^2}  = o\left(  \sum_{t=\tau}^{T-1}  \frac{y_t} t     \right)  .
\end{align*}
The hypotheses thus imply
 \begin{align*}
\lim_{T \to \infty} \frac{y_T- y_{\tau}}{\int_\tau^T  \frac{y_{\lfloor t \rfloor}} t dt}= \vartheta.
\end{align*}
Letting $f(s)=y_{\lfloor \tau \eexp^s \rfloor},$ the change of variable $u=\ln(t/\tau)$ yields
\begin{align*}
\frac{y_T- y_{\tau}}{\int_\tau^T  \frac{y_{\lfloor t \rfloor}} t dt}  
=\frac{f(\ln(T/\tau) )- f(0)}{\int_\tau^T  \frac{f(\ln(t/\tau))} t dt} 
=  \frac{f({\ln(T/\tau)})-f(0)}{\int_0^{\ln(T/\tau)}  f(u) du},
\end{align*}
so the hypotheses of Lemma \ref{eq:f_lemma} hold.  Lemma \ref{eq:f_lemma} yields
\begin{align*}
\lim_{S \to \infty} \frac{\ln y_{\lfloor \tau \eexp^S \rfloor}}{S} = \vartheta,
\end{align*}
which by the change of variable $S=\ln(T/\tau),$  is equivalent to the conclusion of the lemma.
\end{proof}

\begin{proof}[Proof of Proposition \ref{prop:consistent_rate_est}]
Proposition \ref{prop:consistent_rate_est} follows directly from
Proposition  \ref{prop:consistent_rate_est}  and Lemma \ref{eq:lemma_Ylim}.
\end{proof}

{\em Evidence for Conjecture \ref{conj:Y_large_time_sharp}}
The Kesten-Stigum theorem \cite{KestenStigum66}
in the case of single-type branching processes implies
that  $\lim_{s\to\infty} Z_s \eexp^{-\vartheta s} = W$  a.s. for some random variable
$W$ such that $\prob{W> 0}=1$ and $\expect{W}=Z_0=m.$    (This follows from the fact that
$Z$ restricted to multiples of any small positive constant $h > 0$ is
a discrete-time single-type Galton Watson branching process with
number of offspring per individual per time period, represented by
a random variable $L_h$, such that $L_h$ has the ${\sf negbinom}(m,\eexp^{\vartheta h})$
distribution.  Note that $\prob{L_h \geq 1}=1$ and
$\expect{L_h \ln L_h} < \infty.$)   Since $Z_t \eexp^{-\vartheta s}$ also converges
in distribution to the Gamma distribution with parameters $m$ and $\vartheta,$
it follows that $W$ has such distribution.  It follows that
\eqref{eq:Y_conjecture} holds if the process $Y$ is replaced by the process $\check Y.$

\section{Proof of Proposition \ref{prop:joint_ZA_transform}}   \label{app:proof_joint_ZA}
The process $Z$ with parameters $\lambda, m$  represents the total population of a branching process starting with $m$ root individuals
at time 0, such that each individual in the population spawns new individuals at rate $\lambda.$
 And $A_{s}$ represents the sum of the lifetimes, truncated at time $s$, of all the individuals in the population.
 The joint distribution of $(Z,A)$ with parameters $\lambda, m$ is the same as the distribution of the sum of $m$ independent
 versions of $(Z,A)$ with parameters $\lambda, 1,$   Hence, it suffices to prove the lemma for $m=1.$   
 
 So for the  remainder of this proof suppose $m=1$; there is a single root individual.
Suppose there are $n(s)$ children of the root individual, produced at times $R_1, \ldots, R_{n(s)}$.
Then
\begin{align}
Z_{s}  & = 1 + \sum_{l  = 1}^{n(s)} Z^\ell_{s - R_l}   \\
A_{s}  & = s + \sum_{l = 1}^{n(s)} A^\ell_{s - R_l}
\end{align}
where $Z^\ell_{s-R_l}$ denotes the total subpopulation of the $l^{th}$ child of the root, $s-R_l$ time units after the birth
of the $l^{th}$ child, and $A^\ell_{s-R_l}$ is the associated sum of lifetimes of that subpopulation, truncated $s-R_l$ time
units after the birth of the $l^{th}$ child (i.e. truncated at time $s$).   The processes $(Z^l, A^l)$ are independent
and have the same distribution as $(Z,A).$   The variables $R_1, \ldots  , R_{n(s)}$ are the points of a Poisson process of rate $\lambda.$
Therefore,
\begin{align*}
    e^{uZ_{s} + vA_{s}} &= \eexp^{u + vs}\prod_{l = 1}^{n(s)} \exp( uZ_{s-R_l} + vA_{s - R_l}),
\end{align*}
which after taking expectations yields
\begin{align*}
    \psi_{\lambda,1}(u, v, s) & =  \eexp^{u + vs} \expectLone{ \prod_{l = 1}^{n(s)} \exp(uZ^l_{s-R_l} + vA^l_{s - R_l}) } .
\end{align*}
Since $n(s)$ is a Poisson$(\lambda)$ random variable, and, given $n(s)$,
$R_1, \ldots, R_{n(s)}$ are distributed uniformly on $[0, s]$, the above expectation
can be simplified by first conditioning on $n(s)$, and then summing over all possible values of $n(s)$ (tower property).
\begin{align}
  &  \psi_{\lambda,1}(u, v, s) \nonumber \\
  &= e^{u + vs} \sum_{k = 0}^{\infty} \frac{e^{-\lambda s} (\lambda s)^k} {k!} 
    \expectLone{\prod_{l = 1}^{k} e^{uZ^l(s-R_l) + vA^l(s - R_l)}   }  \nonumber  \\  
 \label{bhatt_1}
    &= e^{u + vs} \sum_{k = 0}^{\infty} \frac{e^{-\lambda s} (\lambda s)^k} {k!} \left( \frac{1}{s} \int_{0}^{s} \psi_\lambda(u, v, \tau) d\tau \right)^k
\end{align}
In the above step, the expectation of the product is the same as the product of the expectations, because the variables
$(Z^l(s-R_l), A^l(s-R_l)), l = 1, \ldots, k$ are independent of each other. Moreover, the expectation of each of the $k$ terms is identical. Denoting
$F(s) \triangleq \int_{0}^{s} \psi_{\lambda,1}(u, v, \tau) d\tau$, we can write \eqref{bhatt_1} as 
\begin{align}
    \dot{F}(s) &= e^{u + vs}e^{-\lambda s} e^{\lambda F(s)} \nonumber \\
    \frac{d}{ds}\left(e^{-\lambda F(s)}\right) &= -\lambda e^{(v - \lambda)s + u} ; \quad F(0) = 0 \nonumber \\
    e^{-\lambda F(s)} &= 1 - \lambda e^{u}\int_{0}^{s} e^{(v - \lambda)s'} ds' \nonumber \\
    &= 1 + \frac{\lambda e^{u}}{v - \lambda}\left( 1 - e^{(v - \lambda)s}\right) \nonumber \\
    F(s) &= -\frac{1}{\lambda} \text{log}\left( 1 + \frac{\lambda e^{u}}{v - \lambda}\left( 1 - e^{(v - \lambda)s} \right) \right) \nonumber \\
\end{align}
Finally, using  $\psi_{\lambda,1}(u, v, s)  = \dot{F}(s)$ yields  \eqref{eq:joint_ZA} for $m=1$, and the proof is complete.

\section{Proof of Proposition \ref{prop:small_tau} }  \label{app:small_tau}
\begin{proof}   The basic difficulty to be overcome is that
the limit result $\eta_t \to \eta^*$ in Proposition \ref{prop:gobal_convergence}
doesn't approximately determine the distribution of the  degree evolution for vertex $\tau_o$ if
$\tau_o \not\to \infty.$  To produce an estimator for $\ell_{\tau^o}$ given
 $Y^o_{[\tau^o, T]}$,   we produce a virtual degree growth process,
denoted by  $\breve  Y^{o}_{[\tau, T]},$  which becomes arbitrarily close to
$\tilde Y_{[\tau, T]}$ in total variation distance as $T\to \infty$
under any of the $r$ hypotheses about $\ell_{\tau^o},$  where $\tau \to \infty$
with $\tau /T \to a$ for some fixed  $ \delta > 0.$

Given an arbitrary $\epsilon > 0,$ select $\delta  \in (0,1)$ so small that
$f_Z^{C}(\rho, \theta^*, m,  \ln(1/\delta))  < \epsilon.$
Suppose $\tau$ depends on $T$ such that $\tau / T \to \delta$ as $T\to\infty.$
By Proposition \ref{prop:error_scaling},  $\ell_\tau$ can be recovered
with error probability less than $\epsilon$ from $\tilde Y_{[\tau, T]}$
by using Algorithm C.

The virtual process $\breve Y^o_{[\tau, T]}$ has initial value $\tilde Y^o_\tau =m.$
Thus, although $\tau_o$ arrives before $\tau,$   the virtual process does
not begin evolution until after time $\tau.$   The construction of $\breve Y^o$
proceeds by induction and uses a random thinning of the process
$Y^o$, the actual degree growth process for $\tau^o.$     The thinning probability
is the ratio of degrees.   Specifically,  for $t$ with  $\tau \leq t \leq T-1,$ let
\begin{align*}
\calL( \breve Y^o_{t+1}  - \breve Y^o_{t}|  \breve Y^o_{[\tau,t]},  Y^o_{[\tau_o,T]})  = {\sf binom}\left(Y^o_{t+1}- Y^o_t, \frac{\breve Y^o_t}{Y^o_t} \right).
\end{align*}
The virtual process $\breve Y^o_{[\tau, T]}$  satisfies the same properties as $Y_{[\tau, T]}$
(based on the degree evolution of vertex $\tau$)  used in the proof of Proposition \ref{prop:YJ_couple}, 
so for $v\in [r]$,
$$
d_{TV}\left(  ( \breve Y^o_{[\tau, T]} | \ell_{\tau^o}=v)  ,  (\tilde  Y_{[\tau, T]} | \ell_\tau=v)\right) \to 0.
$$
Hence, applying Algorithm C, designed for recovery of $\ell_\tau$,  to the virtual process
$\breve Y_{[\tau,T]}$ recovers  $\ell_{\tau^o}$ with average error probability less than
$\epsilon$ for $T$ sufficiently large.
\end{proof}

\section{Derivation of the message passing equations}  \label{sec:derivation_of_MP}

The initial conditions given  by  \eqref{eq:initialize_tilde}
are chosen to make the initial likelihood vector the same as produced by
Algorithm C (observation of children).   Equations \eqref{eq:child_to_parent_a} - \eqref{eq:combining}
are derived in what follows in the special case $m=1,$  with the initial graph $G_{t_o}$ consisting
of a single vertex (i.e. $t_o=1$) with a self-loop.   In that case,  the graph $(V,E)$ is a tree
(ignoring the self-loop incident to the first vertex)
so the message passing algorithm is conceptually simpler.   The equations
 \eqref{eq:child_to_parent_a} -  \eqref{eq:combining}
for any finite $m\geq 1$  are simply taken to have the same form as for
$m=1$ on the grounds that loopy message passing is obtained by using the same equations as
for message passing without loops.

Our first assumption in deriving the message passing algorithm
 is that the approximation $\lambda^C_{\tau}$ for the log likelihood
 vector based on observation of children (derived in Section  \ref{sec:recovery_from_children}) is
 exact, or in other words:
\begin{align}\label{eq:LLR}
    \ln \prob{\partial \tau = \{t_1, \ldots, t_n\} \vert \ell_\tau = v}   = \lambda^C_{\tau}(v),
\end{align}
where $\Lambda^c_{\tau}(v)$ is given by \eqref{eq:lambda_eq}.
The second assumption is regarding how the distribution of
$\partial \tau$ changes, given the label of another vertex.   Namely,
  \begin{align}
 & \prob{\partial \tau = \{t_1, \ldots, t_n\} \vert \ell_\tau = v, \ell_{\tau'} = u}  \label{eq:swap}   \\
&  =\left\{ \begin{array}{ll}
   \prob{\partial \tau = \{t_1, \ldots, t_n\} \vert \ell_\tau = v} \theta_{u, v}^* / \theta_v^*  &  \mbox{if } \tau' \in \partial \tau  \\
   \prob{\partial \tau = \{t_1, \ldots, t_n\} \vert \ell_\tau = v}                                                &  \mbox{if } \tau' \not\in \partial \tau 
\end{array} \right. ,  \nonumber
\end{align}
where the expression for the first case follows from \eqref{eq:tildeY_given_labels}.

The third assumption is regarding the joint distribution of degree-growth processes. Observing the degree-growth process of one vertex $\tau$ changes the distribution of the degree growth process
of another vertex $\tau'$ in one of two possible ways. Firstly, the children of the first vertex cannot be the children of the other (if $m = 1$). However,
Proposition \ref{prop:YJ_couple} shows this effect is insignificant. Secondly, observing the degree-growth
process gives us some information about the label of each vertex.
If one vertex appears as a child of the other (say $\tau' \in \partial \tau$), the probability of the
given observation is affected; else it is not. In the asymptotic limit, the degree-growth processes
of a finite number of vertices are indeed independent, by Proposition \ref{prop:YJ_couple}.

The following additional notation is used.   Let $D^k_\tau$ denote the event of observing the subtree of $(V, E)$ rooted at $\tau$, and of depth $k$. For example, $D^1_\tau \equiv \{ \partial \tau = \{t_1, \ldots, t_n\}\} $, $D^2_\tau \equiv \{\partial \tau = \{t_1, \ldots, t_n\}, \partial t_1 = \{t^1_1, \ldots, t^1_{n_1}\}, \ldots, \partial t_n = \{t^n_1, \ldots, t^n_{n_n}\} \}$. Further, let $D_\tau$ denote the event of observing the subtree of $(V, E)$ rooted at $\tau$. We call this subtree as the \emph{descendants} of $\tau$. The event of observing the entire graph is $D_1,$  because the initial graph has a single vertex. Therefore: 
\begin{equation}\label{eq:LLR2}
    \Lambda_{\tau}(v)  = \ln \prob{E_T = E \vert \ell_\tau = v}= \ln \prob{D_1 \vert \ell_\tau = v}
\end{equation}
For a vertex $\tau$ with $\tau \geq 2$, the event $D_1 \backslash D_\tau$
includes the information of which vertex is the parent of vertex $\tau.$
Also, for vertices $\tau$ and $\tau_0$ with $\tau_0 < \tau$,  let $\tau \to \tau_0$ denote
the event there is an edge from $\tau$ to $\tau_0.$

At this point, we make the assumption:
\begin{equation}\label{eq:ind1}
    \prob{D_1 \vert \ell_\tau = v} = \prob{D_{\tau} \vert \ell_\tau = v} \prob{D_1 \backslash D_{\tau} \vert \ell_\tau = v} \ \forall \tau
\end{equation}
In other words,  $D_\tau$ and $D_1\backslash D_\tau$ are assumed to be conditionally independent
given $\ell_\tau = v.$   The rationale for that also comes from ignoring the implications of the
fact that the descendants of $\tau$ must be disjoint from the descendants of
vertices close to $\tau$ in $G_T$ in the direction through the parent of $\tau.$

Let $\tau$ and $\tau_0$ be vertices such that $\tau$ is a child of $\tau_0$.
We define the messages as follows, and then derive the message passing
equations as fixed points.
\begin{align} 
& \nu_{\tau \rightarrow \tau_0} (u)  \triangleq \ln \prob{D_{\tau} \vert \ell_\tau = u}  \label{eq:def_nu} \\
& \mu_{\tau_0 \rightarrow \tau}(v) \triangleq \ln \left( \frac{\prob{D_1 \backslash D_\tau \vert \ell_\tau = 0, \ell_{\tau_0} = v}
\theta^*_v}{\theta^*_{0,v}} \right)
     \label{eq:def_mu}  \\
&  \tilde  \nu_{\tau \rightarrow \tau_0} (v) \triangleq \ln \prob{D_{\tau} \vert \ell_{\tau_0} = v, \tau\to\tau_0}  
 \label{eq:def__tilde_nu} \\
&  \tilde \mu_{\tau_0 \rightarrow \tau}(u) \triangleq \ln \prob{D_1 \backslash D_\tau \vert \ell_\tau = u}
     \label{eq:def_tilde_mu}
\end{align}
\begin{remark}
In the definition \eqref{eq:def_mu} of    $\mu_{\tau_0 \rightarrow \tau}$ it is assumed
that 0 represents some choice of label, but the definition 
for all choices of 0 are equivalent.  In  other words, because of \eqref{eq:swap},
\begin{align} 
     \mu_{\tau_0 \rightarrow \tau}(v) = \ln \left(
     \frac{\prob{D_1 \backslash D_\tau \vert \ell_\tau = u, \ell_{\tau_0} = v}\theta^*_v}{\theta^*_{u,v}} \right)
\end{align}
for any $u \in [r].$
\end{remark}

We show that the message passing equations
 \eqref{eq:child_to_parent_a} - \eqref{eq:combining} follow from our independence
 assumptions and  the definitions of the messages given
 in \eqref{eq:def_nu} - \eqref{eq:def_tilde_mu}.
 
\paragraph*{Derivation of \eqref{eq:child_to_parent_a}}   Start with the fact
 $D_\tau = D^1_\tau \cap \left( \cap_{t\in \partial \tau}  D_t \right),$
 and, given $\ell_\tau=v$ and $D^1_\tau$,   The events
 $D_t , t \in \partial \tau$ are conditionally independent.   Hence,
 \begin{align*}
& \prob{D_\tau|  \ell_\tau=v }  \\
&=  \prob{D^1_\tau|  \ell_\tau=v } 
\prod_{t \in \partial \tau}    \prob{D_t|  \ell_\tau=v, D_\tau^1 }    \\
& =  \prob{D^1_\tau|  \ell_\tau=v } 
\prod_{t \in \partial \tau}    \prob{D_t|  \ell_\tau=v,  t\to\tau }.
\end{align*}
So by \eqref{eq:LLR} and the definition of $\tilde \nu_{t\to \tau}$,
\begin{align}
\ln  \prob{D_\tau|  \ell_\tau=v } = \lambda^C_{\tau}(v)  
+ \sum_{t\in \partial \tau}  \tilde \nu_{t \to \tau} (v).   \label{eq:D_ell}
\end{align}
Since $\nu_{\tau\to\tau_0}(v) = \ln  \prob{D_\tau|  \ell_\tau=v }$
this establishes \eqref{eq:child_to_parent_a} for $m=1.$

\paragraph*{Derivation of \eqref{eq:parent_to_child_a}}  Assume
$\tau_0  \geq t_o +1$;   the proof in case $\tau_0 \leq t_o$ is similar.
Then, also accounting for the assumption $m=1,$ \eqref{eq:parent_to_child_a}
becomes
\begin{align}
&  \mu_{\tau_0 \rightarrow \tau}
=  \lambda^C_{\tau_0} + \sum_{t\in \partial \tau_0\backslash \{\tau\} }   \tilde \nu_{t\to\tau_0}
+ \tilde  \mu_{\tau_1 \to \tau_0},
   \label{eq:parent_to_child_a_special}
 \end{align}
 where $\tau_1$ is the parent of $\tau_0.$
 Observe that
\begin{align*}
 D_1 \backslash D_\tau &=  (D_1 \backslash D_{\tau_0}) \cap ( D_{\tau_0} \backslash D_\tau )  \\
&= (D_1 \backslash D_{\tau_0}) \cap D^1_{\tau_0} \cap \left( \cap_{t\in\partial \tau_0 \backslash \{\tau\} } D_t \right)
\end{align*}
 Therefore,
 \begin{align*}
&  \prob{D_1 \backslash D_\tau \vert \ell_\tau = 0, \ell_{\tau_0} = v} \\
& =  \prob{D_1 \backslash D_{\tau_0} \vert \ell_\tau = 0, \ell_{\tau_0} = v} \prob{D_{\tau_0} \backslash D_\tau \vert \ell_\tau = 0, \ell_{\tau_0} = v}  \\
& =  \prob{D_1 \backslash D_{\tau_0} \vert \ell_{\tau_0} = v} \prob{D_{\tau_0} \backslash D_\tau \vert \ell_\tau = 0, \ell_{\tau_0} = v}  \\
& =  \prob{D_1 \backslash D_{\tau_0} \vert \ell_{\tau_0} = v} \prob{D^1_{\tau_0} \vert \ell_\tau = 0, \ell_{\tau_0} = v} \\
&~~~~~~~\prod_{t\in\partial \tau_0 \backslash \{\tau\} }  \prob{D_t \vert  \ell_\tau = 0, \ell_{\tau_0} = v, t \to \tau}  \\
&  =  \prob{D_1 \backslash D_{\tau_0} \vert \ell_{\tau_0} = v} \prob{D^1_{\tau_0} \vert \ell_\tau = 0, \ell_{\tau_0} = v} \\
&~~~~~~~\prod_{t\in\partial \tau_0 \backslash \{\tau\} }  \prob{D_t \vert \ell_{\tau_0} = v, t\to \tau}
\end{align*}
Multiplying both sides of the above by $\frac{\theta_v^*}{\theta^*_{0,v}}$, using \eqref{eq:swap},
 and taking logarithms yields
\begin{align*}
 &\mu_{\tau_0 \rightarrow \tau}(v)   =  \ln \frac{\theta_v^*}{\theta^*_{0,v}} +
  \tilde  \mu_{\tau_1 \to \tau_0}(v) + \left( \Lambda_{\tau_0}^C(v) + \ln \frac{\theta^*_{0,v}}{\theta^*_v}  \right)   \\
  &~~~+ \sum _{t\in\partial \tau_0 \backslash \{\tau\} }  \tilde \nu_{t\to \tau}(v) ,
 \end{align*}
which is equivalent to  \eqref{eq:parent_to_child_a_special}, so that \eqref{eq:parent_to_child_a} is proved for $m=1.$

\paragraph*{Derivation of \eqref{eq:child_to_parent_b}} Note that
\begin{align*}
&\prob{D_\tau | \ell_{\tau_0}=v, \tau\to\tau_0} \\
&= \sum_{u\in [r]} \prob{D_\tau,  \ell_{\tau}=u  | \ell_{\tau_0}=v, \tau\to\tau_0}  \\
&=  \sum_{u\in [r]}
\prob{\ell_{\tau}=u  | \ell_{\tau_0}=v, \tau\to\tau_0} 
\prob{D_\tau | \ell_{\tau}=u}  \\
& = \sum_{u\in [r]}  \frac{\rho_u \theta^*_{u,v}}{\theta^*_v}  \eexp^{\nu_{\tau\to\tau_0}(u)}, 
\end{align*}
where for the second inequality we used \\
 $ \prob{D_\tau,  \ell_{\tau}=u  | \ell_{\tau_0}=v, \tau\to\tau_0} 
= \prob{D_\tau | \ell_{\tau}=u} .$
Taking the logarithm of each side yields  \eqref{eq:child_to_parent_b}.

\paragraph*{Derivation of \eqref{eq:parent_to_child_b}}   The derivation is given by:
\begin{align*}
&\tilde \mu_{\tau_0 \to \tau} (u)  = \ln \prob{D_1\backslash D_\tau | \ell_\tau  =u }  \\
&=\ln  \sum_{v \in[r]}  \prob{D_1\backslash D_\tau, \ell_{\tau_0}=v  | \ell_\tau =u } \\
&=\ln  \sum_{v\in[r]} \prob{D_1\backslash D_\tau  |  \ell_\tau =u,  \ell_{\tau_0}=v }
  \prob{ \ell_{\tau_0}=v  | \ell_\tau =u }  \\
 &=\ln  \sum_{v \in[r]}\theta_{u,v}^*  
 \frac{ \prob{D_1\backslash D_\tau  |  \ell_\tau =u, \ell_{\tau_0}=v } }{\theta_{u,v}^*}  \rho_{v} \\
  &=\ln  \sum_{v \in[r]}\theta_{u,v}^*  
 \frac{ \prob{D_1\backslash D_\tau  |  \ell_\tau =0, \ell_{\tau_0}=v } }{\theta_{0,v}^*}  \rho_{v} \\
 & = g^{pc}(\mu_{\tau_0\to\tau})(u).
\end{align*}

\paragraph*{Derivation of \eqref{eq:combining}}
Equation \eqref{eq:combining} (for $m=1$) follows from \eqref{eq:LLR2},
\eqref{eq:ind1},  \eqref{eq:D_ell},  and \eqref{eq:def_tilde_mu}.

\bibliographystyle{IEEEtran}
\bibliography{../graph_inference}

\end{document}